\documentclass[conference]{IEEEtran}

\usepackage{times}

\usepackage[numbers]{natbib}
\usepackage{multicol}
\usepackage[bookmarks=true]{hyperref}
\usepackage{wrapfig}
\usepackage{graphics} %
\usepackage{epsfig} %
\usepackage{times} %
\usepackage{amsmath} %
\usepackage{amssymb}  %
\usepackage{xcolor} %
\usepackage[ruled,vlined]{algorithm2e}
\usepackage{mdframed}
\usepackage{subcaption}
\usepackage{hyperref}
\usepackage{graphicx}
\usepackage{placeins}
\usepackage{booktabs}
\usepackage{float}
\usepackage{tikz}
\usetikzlibrary{arrows.meta, positioning}
\usepackage{amsthm}
\usepackage[breakable]{tcolorbox}
\newtheorem{theorem}{Theorem}
\newtheorem{corollary}{Corollary}
\newtheorem{remark}{Remark}

\newif\ifshowcomments
\showcommentsfalse

\newcounter{mypar}[subsection]

\newcommand{\myparagraph}[1]{%
  \refstepcounter{mypar}%
  \vspace{0.05in}\noindent\textbf{#1}%
}

\newcounter{result}[subsection]
\renewcommand{\theresult}{\thesubsection-\Roman{result}}  %

\newenvironment{myresult}[2][]{%
  \refstepcounter{result}%
  \ifx&#1&\else\label{#1}\fi%
  \begin{mdframed}[linewidth=0.5pt, linecolor=black, backgroundcolor=gray!5]
  \textbf{Result~\theresult}~(#2):%
}{%
  \end{mdframed}%
}

\newcommand{\resultref}[1]{\hyperref[#1]{Result~\ref*{#1}}}

\usepackage{etoc}

\begin{document}

\title{Tune to Learn: How Controller Gains \\ Shape Robot Policy Learning}

% \author{
% \authorblockN{Antonia Bronars$^*$}\\
% \authorblockA{
% MIT\\
% $^*$Equal Contribution\\}
% \and
% \authorblockN{Younghyo Park$^*$}\\

% \authorblockA{
% MIT\\
% $^*$Equal Contribution\\}
% \and
% \authorblockN{Pulkit Agrawal}\\
% \authorblockA{
% MIT\\
% Improbable AI Lab
% }
% }

\author{
\begin{tabular}[t]{c}
{\large Antonia Bronars$^*$}\\
MIT\\
$^*$Equal Contribution
\end{tabular}
\and
\begin{tabular}[t]{c}
{\large Younghyo Park$^*$}\\
MIT\\
$^*$Equal Contribution
\end{tabular}
\and
\begin{tabular}[t]{c}
{\large Pulkit Agrawal}\\
MIT\\
Improbable AI Lab
\end{tabular}
}

\maketitle

\begin{abstract}

Position controllers have become the dominant interface for executing learned manipulation policies. Yet a critical design decision remains understudied: how should we choose controller gains for policy learning? The conventional wisdom is to select gains based on desired task compliance or stiffness. However, this logic breaks down when controllers are paired with state-conditioned policies: effective stiffness emerges from the interplay between learned reactions and control dynamics, not from gains alone. We argue that gain selection should instead be guided by learnability: how amenable different gain settings are to the learning algorithm in use. In this work, we systematically investigate how position controller gains affect three core components of modern robot learning pipelines: behavior cloning, reinforcement learning from scratch, and sim-to-real transfer. Through extensive experiments across multiple tasks and robot embodiments, we find that: (1) behavior cloning benefits from compliant and overdamped gain regimes, (2) reinforcement learning can succeed across all gain regimes given compatible hyperparameter tuning, and (3) sim-to-real transfer is harmed by stiff and overdamped gain regimes. These findings reveal that optimal gain selection depends not on the desired task behavior, but on the learning paradigm employed. Project website: \href{https://younghyopark.me/tune-to-learn}{https://younghyopark.me/tune-to-learn}

\end{abstract}

\section{INTRODUCTION}

Position controllers are rapidly becoming the de facto choice for low-level control in robot learning. Their wide hardware support and intuitive nature have made them the dominant interface for executing learned manipulation policies. Yet while classical control theory provides clear guidance on selecting gains to achieve desired tracking bandwidth, disturbance rejection, or impedance characteristics, no analogous principles exist for the learning setting. An important design decision remains overlooked: how should we choose controller gains when \textit{learning} data-driven manipulation policies?

The standard approach treats gain selection as a problem of achieving desired task behavior—contact-rich manipulation calls for compliant gains to better comply with unexpected contacts, while precision tasks call for stiff gains to accurately track position commands. But this framing conflates two distinct roles that position controllers play. When tracking open-loop trajectories, the controller \textit{is} the \textit{behavior}—gains directly determine how the robot responds. When paired with a learned policy, however, the controller becomes an \textit{interface} between the policy and the physical world. The policy learns through this interface during training and acts through this interface at deployment. Viewed this way, gains function less as behavioral parameters and more as an \textit{inductive bias}—an implicit prior over the space of closed-loop behaviors that shapes what the policy can easily express and learn.

This distinction matters because learned policies are reactive: they observe the current state and output corrective commands. A policy can achieve stiff or compliant task-level behavior regardless of the underlying joint-level gains, simply by modulating the magnitude and timing of its outputs. The gains, therefore, do not determine the set of achievable closed-loop behaviors. We hypothesize that the gains instead shape the learning problem: how easy it is to fit action labels and how errors compound during closed-loop execution, which training configurations yield successful RL policies, and whether modeling discrepancies amplify into instability during sim-to-real transfer.

\begin{figure}[t]
    \centering
    \includegraphics[width=\columnwidth]{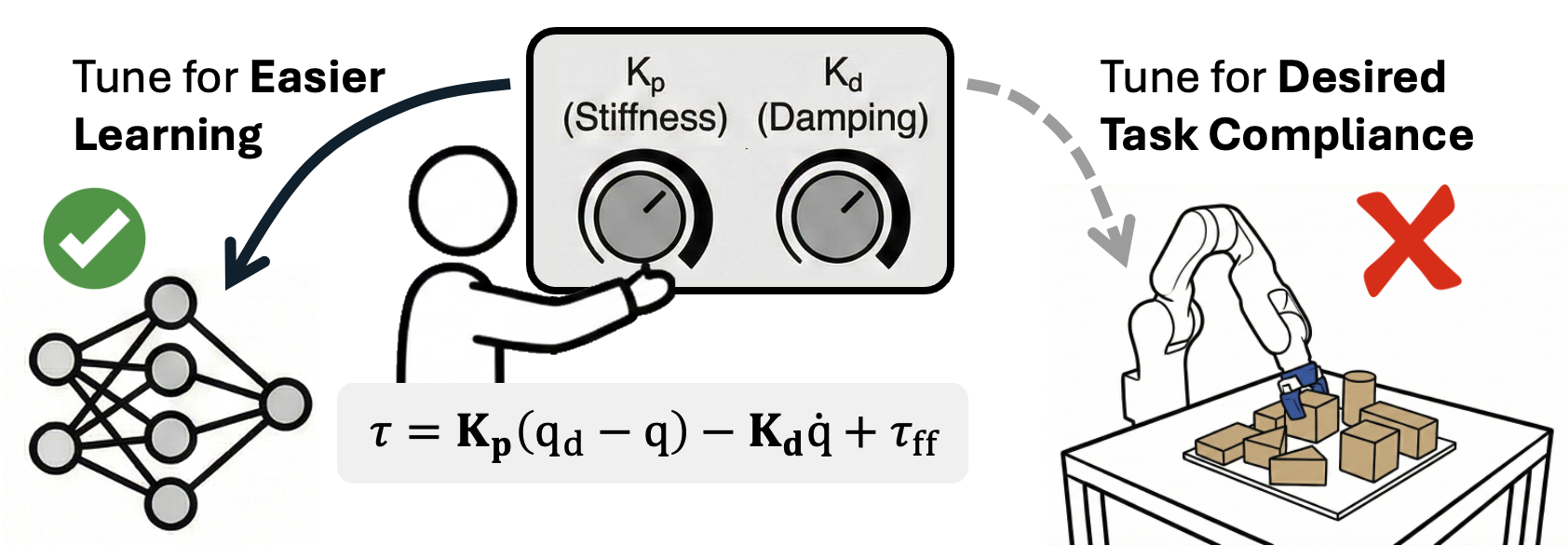}
    \vfill

    \begin{subfigure}[t]{0.32\columnwidth}
        \centering
        \includegraphics[width=\linewidth]{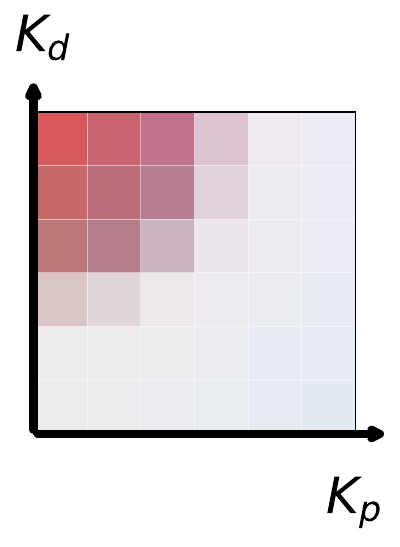}
        \caption{BC}
        \label{fig:sub1}
    \end{subfigure}
    \hfill
    \begin{subfigure}[t]{0.32\columnwidth}
        \centering
        \includegraphics[width=\linewidth]{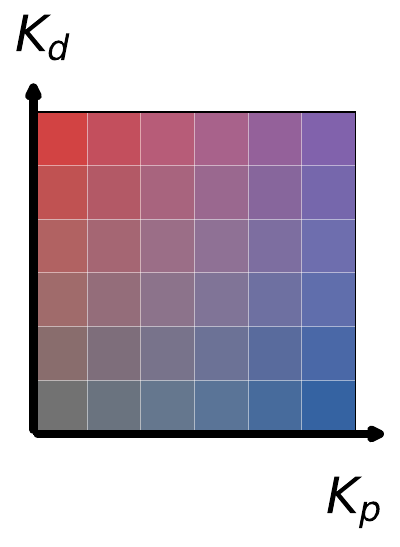}
        \caption{RL}
        \label{fig:sub2}
    \end{subfigure}
    \hfill
    \begin{subfigure}[t]{0.32\columnwidth}
        \centering
        \includegraphics[width=\linewidth]{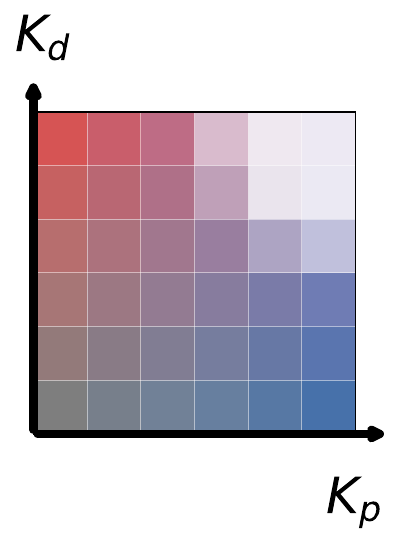}
        \caption{Sim2Real}
        \label{fig:sub3}
    \end{subfigure}

\caption{\textbf{Different robot learning paradigms prefer different controller gain interfaces.} Colored regions indicate gain regimes where each paradigm succeeds. Contrary to conventional wisdom of tuning gains for desired task compliance, optimal gains depend on the learning paradigm. Based on our experimental findings, heatmaps illustrate representative gain preferences for (a) behavior cloning, which favors compliant, overdamped gains, (b) reinforcement learning, which adapts to nearly any setting, and (c) sim-to-real transfer, which is degraded by stiff and overdamped gains.}
\label{fig:three-horizontal}
\end{figure}

\begin{figure*}[t]
\centering

\begin{tabular}{@{}c@{\hspace{1.5em}}c@{}}

&

\begin{minipage}{0.95\textwidth}
  \centering
  \subcaptionbox{\underline{C}ompliant and \underline{O}verdamped (CO)\label{fig:resp-unload}}{%
    \includegraphics[width=0.48\linewidth]{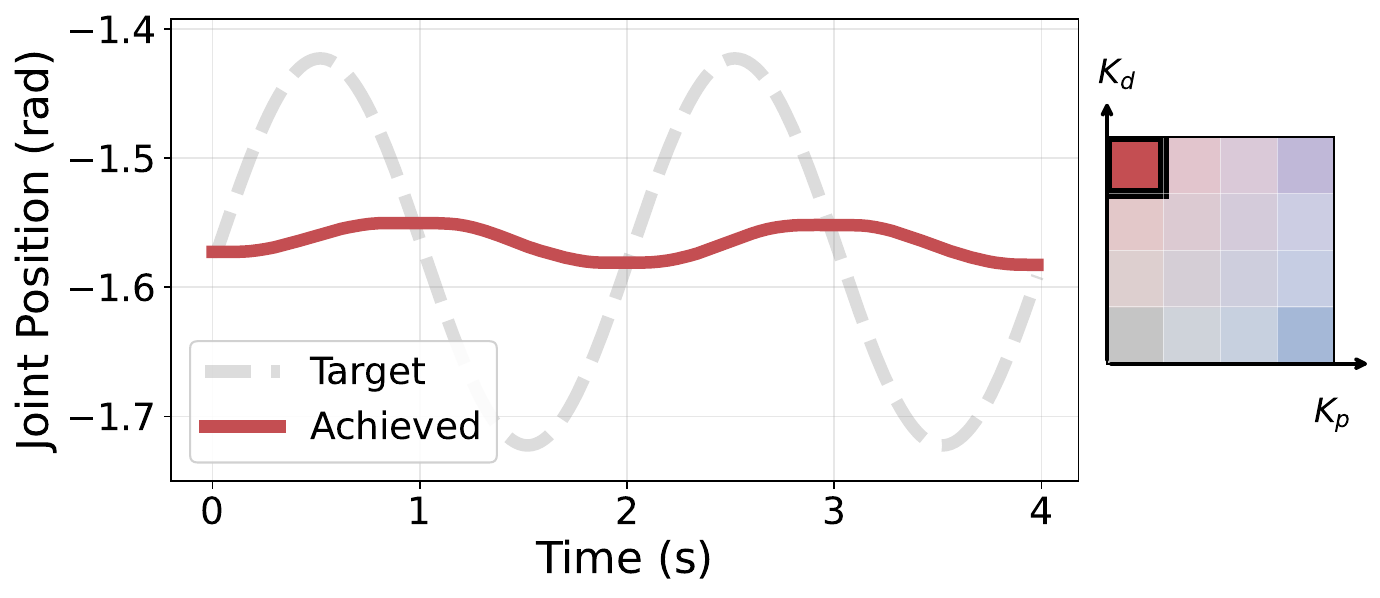}
  }\hfill
  \subcaptionbox{\underline{S}tiff and \underline{O}verdamped (SO) \label{fig:resp-dw}}{%
    \includegraphics[width=0.48\linewidth]{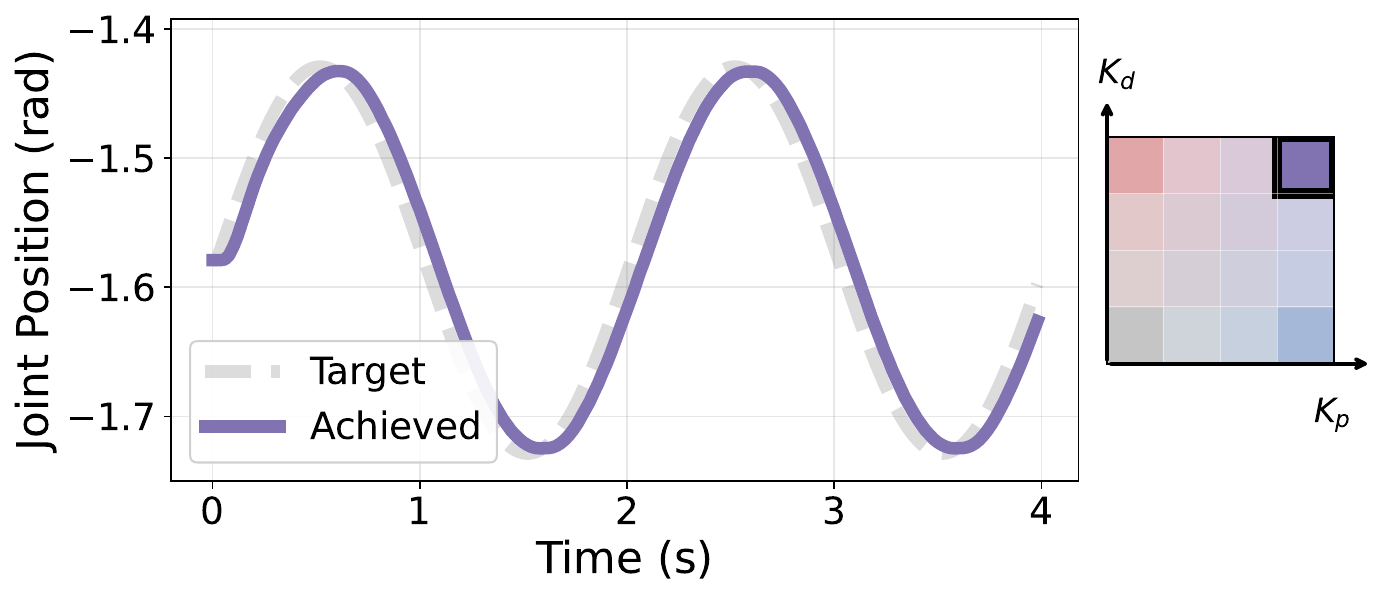}
  }

  \vspace{2mm}

  \subcaptionbox{\underline{C}ompliant and \underline{U}nderdamped (CU)\label{fig:resp-mug}}{%
    \includegraphics[width=0.48\linewidth]{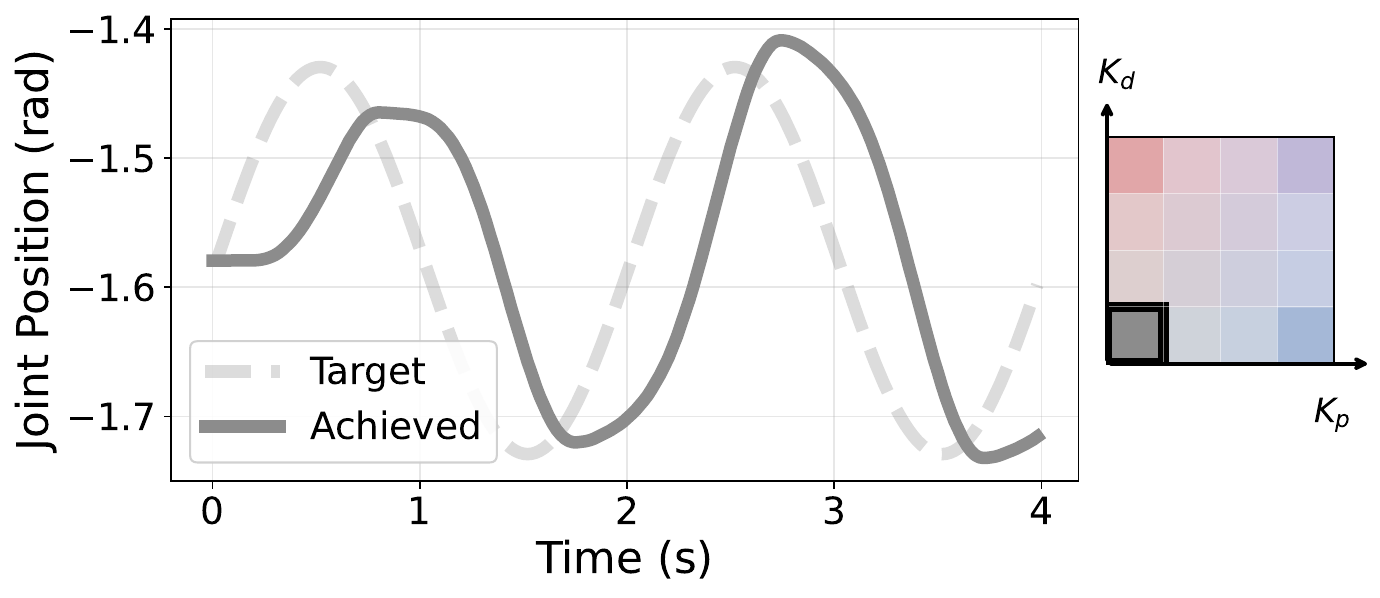}
  }\hfill
  \subcaptionbox{\underline{S}tiff and \underline{U}nderdamped (SU)\label{fig:resp-ur}}{%
    \includegraphics[width=0.48\linewidth]{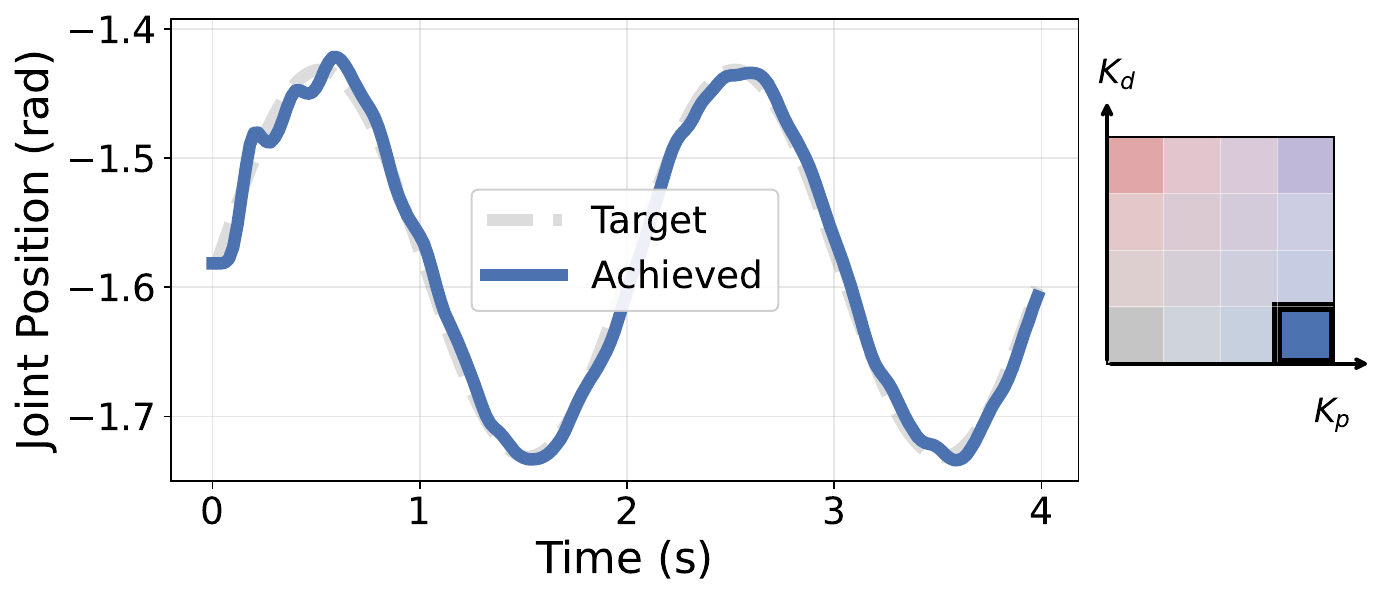}
  }
\end{minipage}

\end{tabular}

\caption{\textbf{Controller gains induce diverse action–response dynamics.} We evaluate a broad range of representative gain configurations and their resulting dynamic responses to assess their impact on learnability.}
\label{fig:gain-configs}
\end{figure*}

Once we recognize controller gains as learning interface parameters rather than behavioral parameters, the design question becomes: which interface properties facilitate learning? And critically, do different learning paradigms prefer different interfaces, serving as a conducive \textit{inductive bias}? We investigate these questions systematically across three paradigms of modern robot learning and present the following findings:
\begin{enumerate}
\item Behavior cloning performs best with \textit{compliant} and \textit{overdamped} gains. Across multiple manipulation tasks with controlled datasets that isolate the effect of gains, we show this regime yields higher closed-loop policy success rates without penalizing teleoperation efficiency. (Sec.~\ref{sec:exp-bc} and \ref{sec:result-bc})

\item Reinforcement learning (RL) from scratch is agnostic to gain setting, as long as the remaining hyperparameters are tuned to be compatible with the given gain setting. We verify this by obtaining equivalently successful RL policies for all gain regimes across multiple manipulation and locomotion tasks. (Sec. \ref{sec:exp-rl} and \ref{sec:results-rl})

\item When transferring policies from simulation to the real-world, \textit{stiff} and \textit{overdamped} controllers exacerbate the motor-level sim-to-real gap. (Sec. \ref{sec:exp-sim2real} and \ref{sec:results-sim2real}) 
\end{enumerate}
Our findings converge on a unified picture of how controller gains shape learning, providing both conceptual clarity and practical guidance for this widely used yet underexplored design decision.

\section{Related Works}\label{sec:related-works}

\subsection{Position and Impedance Control}

Position and impedance control have long been foundational for robot manipulation. Consider a robot manipulator with joint positions $\mathbf{q} \in \mathbb{R}^n$ governed by the dynamics:
\begin{equation}
\mathbf{M}(\mathbf{q})\ddot{\mathbf{q}} + \mathbf{C}(\mathbf{q}, \dot{\mathbf{q}})\dot{\mathbf{q}} + \mathbf{g}(\mathbf{q}) = \boldsymbol{\tau} + \boldsymbol{\tau}_{\text{ext}}
\end{equation}
where $\mathbf{M}(\mathbf{q})$ is the inertia matrix, $\mathbf{C}(\mathbf{q}, \dot{\mathbf{q}})$ captures Coriolis and centrifugal effects, $\mathbf{g}(\mathbf{q})$ is the gravity vector, $\boldsymbol{\tau}$ is the control torque, and $\boldsymbol{\tau}_{\text{ext}}$ represents environmental torques.

Takegaki and Arimoto \cite{takegaki1981new} established the global asymptotic stability of PD control with gravity compensation:
\begin{equation}
\boldsymbol{\tau} = \mathbf{K}_p(\mathbf{q}_d - \mathbf{q}) + \mathbf{K}_d(\dot{\mathbf{q}}_d - \dot{\mathbf{q}}) + \mathbf{g}(\mathbf{q})
\label{eq:pd_control}
\end{equation}
where $\mathbf{K}_p, \mathbf{K}_d \in \mathbb{R}^{n \times n}$ are gain matrices representing joint stiffness and damping, respectively, and $\mathbf{q}_d, \dot{\mathbf{q}}_d$ are the desired joint positions and velocities. This control law can be interpreted as \textit{joint-space impedance control}: in the absence of external torques, the closed-loop system behaves as a virtual spring-damper attached to the desired configuration, with the impedance relationship:
\begin{equation}
\boldsymbol{\tau}_{\text{ext}} = \mathbf{K}_p(\mathbf{q} - \mathbf{q}_d) + \mathbf{K}_d(\dot{\mathbf{q}} - \dot{\mathbf{q}}_d).
\end{equation}
This formulation is prevalent in modern robot learning, where policies typically output joint position targets $\mathbf{q}_d$ that are tracked by a low-level PD controller. Kelly \cite{kelly1997pd} provided a comprehensive review analyzing equilibrium uniqueness and stability robustness against parametric uncertainties. Despite the theoretical foundations, gain selection remains largely heuristic in practice.

\subsection{Low-Level Control in Robot Learning}

\myparagraph{Action Spaces.} Position-controlled action spaces (whether commanding joint positions or end-effector poses) convert policy outputs to motor torques via feedback control laws, making controller gains an implicit component of action space design. Aljalbout \emph{et al.}~\cite{Aljalbout_2024} find that action spaces based on control abstractions (e.g., PD-controlled positions) generally outperform torque control, though they do not vary gains within each paradigm. Kim \emph{et al.}~\cite{Kim_2023} argue that torque control's inherent compliance mitigates the sim-to-real gap. E{\ss}er \emph{et al.}~\cite{esser2024action} frame action space selection as an \emph{inductive bias} for locomotion---a perspective we extend to gain selection within position-controlled manipulation. Our study complements these works by isolating the effect of PD gains while holding the action representation fixed.

\myparagraph{Learning Task-level Impedance Policies.} Several works train policies that exhibit compliant manipulation behavior, either by learning variable stiffness profiles from demonstrations~\cite{wu2021framework, kronander2014learning} or by conditioning on user-specified task-space stiffness~\cite{margolis2025softmimic}. Notably, Margolis \emph{et al.}~\cite{margolis2025softmimic} observe that a policy's compliance is dictated by its training incentives, not the underlying PD gains, i.e., a policy can learn soft or stiff interactions regardless of the low-level controller. This distinction motivates our study: rather than learning compliance as a task-level behavior, we investigate how \emph{fixed} gain settings shape the learning process itself. Arachchige \emph{et al.}~\cite{arachchige2025sail} also vary gains of their underlying position controllers, but focus on speeding up execution of pretrained policies rather than understanding how gains affect learning.

\myparagraph{Sim-to-Real Transfer and Controller Fidelity.} Domain randomization~\cite{tobin2017domainrandomizationtransferringdeep, Peng_2018} has become standard for bridging the sim-to-real gap, with dynamics randomization typically including controller-related parameters such as PD gains, motor strengths, and joint damping~\cite{openai2019solvingrubikscuberobot}. Muratore \emph{et al.}~\cite{muratore2022robotlearningrandomizedsimulations} provide a comprehensive review of sim-to-real transfer via randomized simulations, noting that contact and friction models (which interact strongly with controller gains) remain among the most challenging aspects to transfer. Control frequency has also 
been shown to affect transfer fidelity: Gangapurwala\emph{et al.}~\cite{gangapurwala2023learning} show that low-frequency policies are less sensitive to actuation dynamics, enabling successful sim-to-real transfer without dynamics randomization. Despite this, systematic study of how the nominal gain settings (around which randomization occurs) affect sim-to-real transfer is lacking. Understanding which gain regimes transfer more robustly can inform both the choice of nominal gains and the design of more targeted randomization ranges, rather than treating all gain configurations as equally viable starting points.

\subsection{Gain Settings in Large-Scale Robot Datasets}\label{sec:gain-settings-datasets}

\begin{figure}[t]
    \centering

    \begin{subfigure}[t]{0.49\columnwidth}
        \centering
        \includegraphics[width=\linewidth]{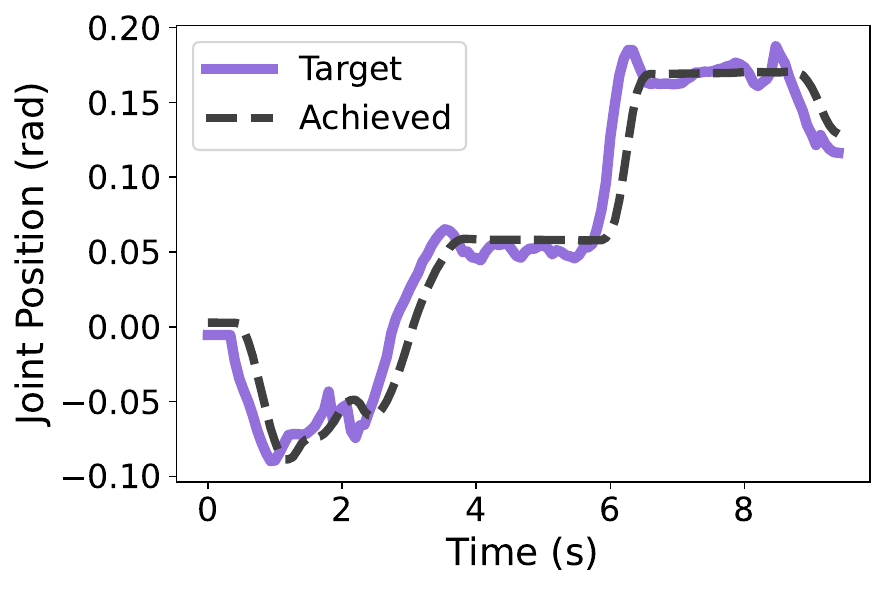}
        \caption{DROID}
        \label{fig:droid-response-curve}
    \end{subfigure}
    \begin{subfigure}[t]{0.49\columnwidth}
        \centering
        \includegraphics[width=\linewidth]{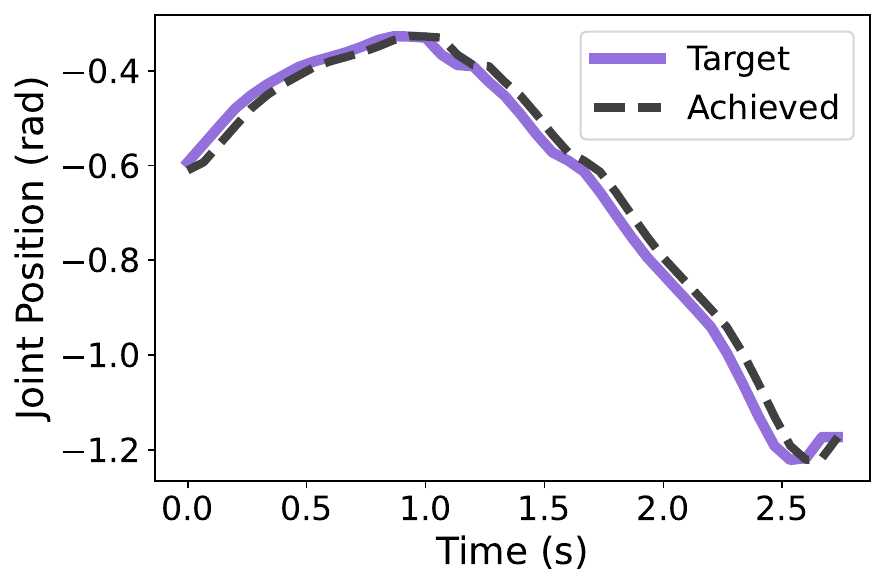}
        \caption{RT-X NYU Franka Play}
        \label{fig:rtx-response-curve}
    \end{subfigure}

\caption{Tracking response curves from existing robot datasets reveal tight command-following behavior, suggesting stiff controller gains are prevalent in existing data collection pipelines.}
    \label{fig:dataset-tracking}
\end{figure}

While controller gains fundamentally shape the learning interface, their configuration in existing large-scale datasets remains largely undocumented. To understand current practices, we analyzed DROID~\cite{khazatsky2024droid} and several datasets within the Open X-Embodiment collection~\cite{vuong2023open} by examining the relationship between commanded and achieved joint positions. Although exact gain values are rarely reported, tracking behavior reveals controller characteristics. As shown in Fig.~\ref{fig:dataset-tracking}, achieved positions closely track commands with minimal lag and overshoot, characteristic of stiff controllers. This pattern was prevalent across datasets, suggesting stiff gains have become an implicit default in data collection.

\section{Decoupling Gains from Task Compliance}

\begin{figure}[t]
    \centering

    \begin{subfigure}[t]{0.48\columnwidth}
        \centering
        \includegraphics[width=\linewidth]{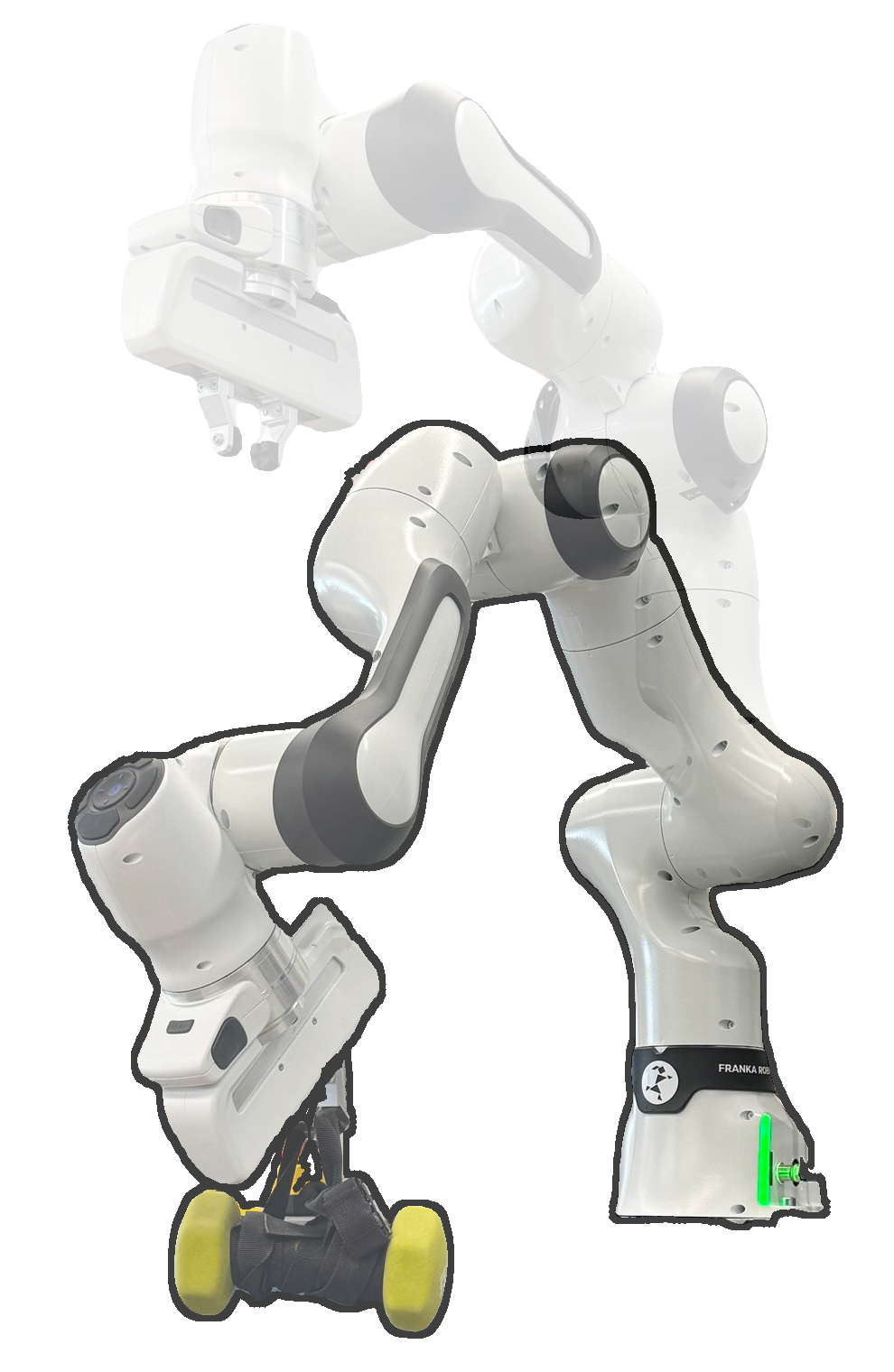}
        \caption{Compliance w/ stiff gain}
        \label{fig:compliant-yet-stiff}
    \end{subfigure}
    \begin{subfigure}[t]{0.48\columnwidth}
        \centering
        \includegraphics[width=\linewidth]{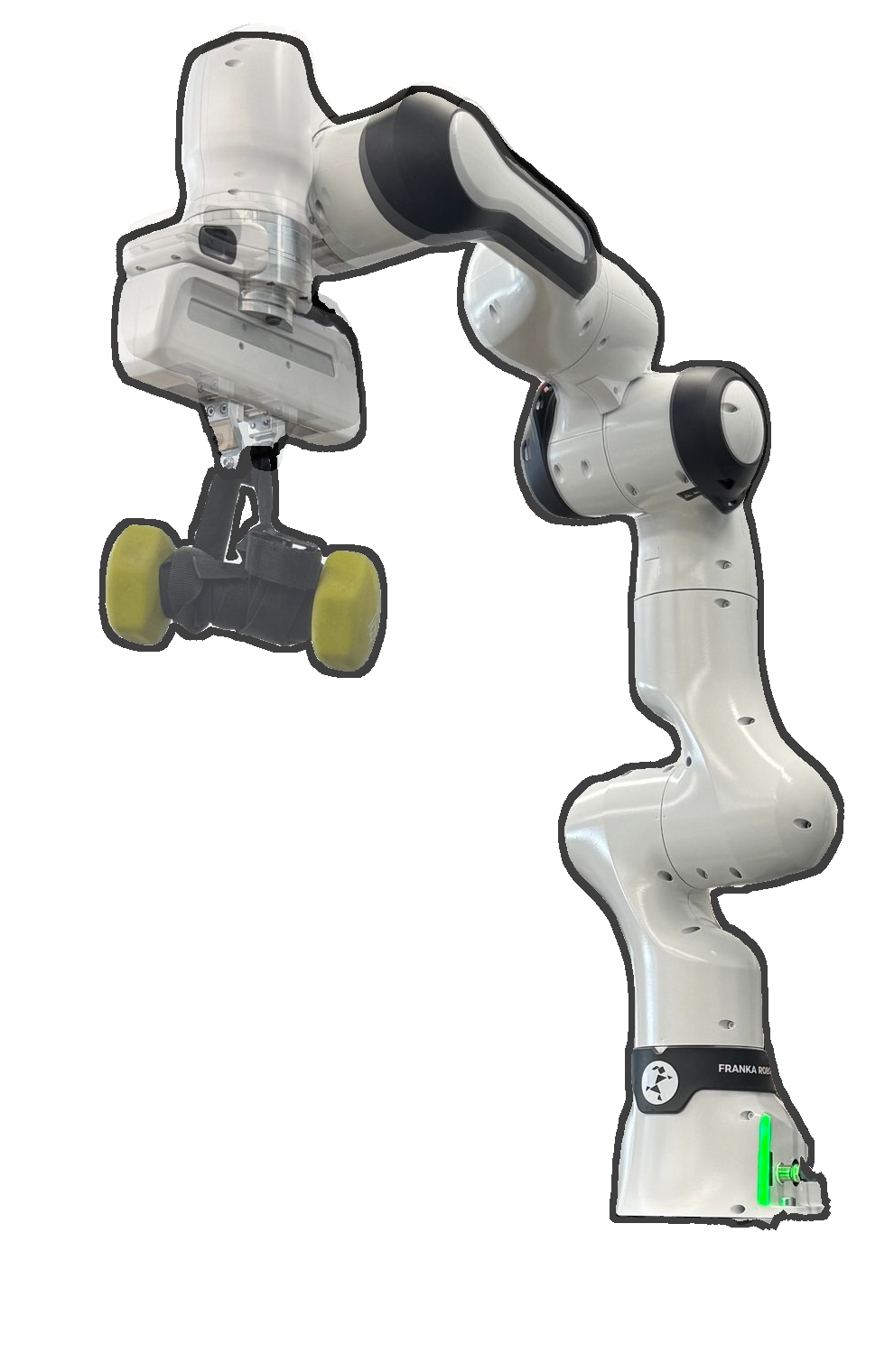}
        \caption{Stiffness w/ compliant gain}
        \label{fig:stiff-yet-compliant}
    \end{subfigure}

\caption{\textbf{Task-level impedance can be decoupled from low-level controller gains with learned policies.} A learned policy can achieve (a) \textit{compliant} behavior despite stiff low-level gains, and (b) \textit{stiff} behavior despite compliant gains.}
    \label{fig:existence-proof}
\end{figure}
In this section, we validate a central claim:
\emph{A policy's task-level compliance is predominantly determined by its learned reactions, rather than its underlying gains.}
In this section, we validate this claim empirically through two intentionally counterintuitive pairings: 

\myparagraph{\underline{Stiff} behavior with \underline{compliant} gains.}
We train a reinforcement learning policy to maintain a fixed pose under external disturbances. Although the low-level controller operates with compliant (low-gain) impedance, we induce stiff task-level behavior by randomly applying force disturbances during training and rewarding the policy for remaining close to a target pose. Specifically, we use a sharp distance-based reward, i.e., 
\begin{equation}\label{eq:dist}
    r(\mathbf{q})= 1-\tanh(\|\mathbf{q-g}\|^2 /\lambda)
\end{equation} where a small $\lambda$ strongly penalizes deviations from the goal. This encourages the policy to actively counteract disturbances and maintain its pose, resulting in stiff task-level responses despite compliant low-level control, as shown in Fig.~\ref{fig:stiff-yet-compliant}. 

\myparagraph{\underline{Compliant} behavior with \underline{stiff} gains.}
To elicit compliant task-level responses despite a stiff (high-gain) low-level controller, we soften the goal-tracking objective and discourage aggressive controller corrections: 
\begin{equation}
r(\mathbf{q})= 1-\tanh(\|\mathbf{q-g}\|^2 /\lambda_{\text{large}}) - \alpha \|\Delta a_t\|^2.
\end{equation}
A large $\lambda$ reduces the incentive to tightly regulate the goal pose, while the $\Delta a$ penalty suppresses high-frequency corrective behavior, encouraging the policy to yield smoothly under disturbances even when executed with stiff gains (Fig.~\ref{fig:compliant-yet-stiff}).
We provide quantitative support for this in Appendix \ref{sec:appendix-existence-proof}.

\section{Experiments}\label{sec:experiments}
In this section, we detail the experimental procedures we use to study the effect of low-level position controllers on learning dynamics for behavior cloning (Sec. \ref{sec:exp-bc}), reinforcement learning (Sec. \ref{sec:exp-rl}), and zero-shot sim-to-real transferability (Sec. \ref{sec:exp-sim2real}). Candidate gain setpoints used for all analysis are represented as a grid of $\mathbf{K}_p$ and $\mathbf{K}_d$ as shown in Fig. \ref{fig:gain-configs}.

\subsection{Behavior Cloning}\label{sec:exp-bc}

We investigate how controller gains affect behavior cloning performance and data collection experience through controlled dataset generation and a user study.

\myparagraph{Gain-Dependent Demonstration Dataset.} Behavior cloning distills state-action pairs $\mathcal{D}(s,a)$ into a policy $\pi(a|s)$. With position targets as actions, the controller gains $\mathbf{K} = (\mathbf{K}_p, \mathbf{K}_d)$ implicitly shape learning by altering the state transition dynamics $p(s'|s,a)$. Isolating this effect requires datasets $\mathcal{D}(s,a;\mathbf{K})$ that share a common state distribution $p(s)$ while exhibiting gain-induced action distributions $p(a;\mathbf{K})$.

Naively collecting separate demonstrations per gain setting conflates state and action variation: differing closed-loop dynamics and collection stochasticity (initial conditions, environmental randomness, demonstrator variability) cause both distributions to shift, obscuring the effect of gains on learning. 

\myparagraph{Torque-to-Position Retargeting.} We instead achieve nearly identical state trajectories across all gain settings while varying only the position target actions through Torque-to-Position Retargeting (TPR), a two-stage dataset generation procedure. First, we generate demonstration trajectories for each task at high frequency (500Hz) using \textit{torque} commands as the gain-agnostic action representation. We then apply a position-command-retargeting method to convert these torque trajectories into position targets for arbitrary $(\mathbf{K}_p, \mathbf{K}_d)$ settings:
\begin{equation}
\mathbf{q}_{\text{des}}(t) = \mathbf{q}(t) + \mathbf{K}_p^{-1}\left(\boldsymbol{\tau}(t) + \mathbf{K}_d\dot{\mathbf{q}}(t)\right),
\label{eq:tpr}
\end{equation}
where $\tau(t), q(t)$ and $\dot{q}(t)$ are the torque command, joint position, and joint velocity from the original 500Hz torque control demonstration, respectively. Finally, for each gain configuration, we replay the retargeted position commands at the desired policy frequency (50Hz) using zeroth-order holding and save only the successful rollouts, ensuring that our datasets capture the same task outcomes across different controller settings while maintaining the distinct action distributions induced by each gain profile. We conduct this entire process in simulation to ensure controlled experimental conditions. 

We quantitatively validate TPR fidelity: retargeted trajectories maintain ${\geq}90\%$ success rate and joint-position MSE ${<}10^{-3}$ across gain configurations up to $25\times$ decimation (20\,Hz); at higher decimation, success degrades slightly for contact-rich tasks where TPR's trajectory-matching assumption is less robust (Appendix~\ref{sec:appendix-tpr-validation}). TPR also extends naturally to task-space position control using operational space control (OSC)~\cite{khatib2003unified} with SE(3) end-effector pose targets; we detail this extension in Appendix~\ref{sec:appendix-tpr-taskspace}.

\captionsetup{font=small}
\captionsetup[subfigure]{font=small, skip=2pt}
\captionsetup[subfigure]{labelformat=parens}

\begin{figure*}[t]
\centering

\begin{minipage}[t]{0.42\textwidth}
  \vspace{0pt}  %

  \centering
  \subcaptionbox{Bimanual Handover\label{fig:bc-mugh}}{%
    \includegraphics[width=\linewidth]{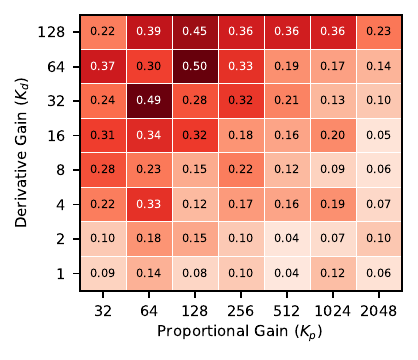}
  }
\end{minipage}
\hfill
\begin{minipage}[t]{0.56\textwidth}
  \vspace{0pt}
  \centering
  \subcaptionbox{Dishrack Unloading\label{fig:bc-dishrack-unload}}{%
    \includegraphics[width=0.31\linewidth]{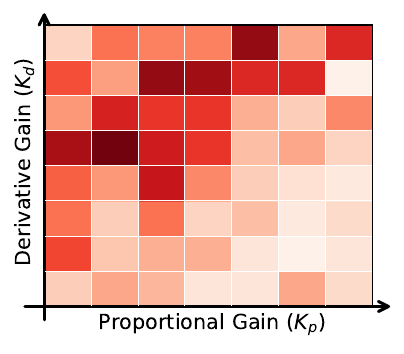}
  }\hfill
  \subcaptionbox{Dishwasher Opening\label{fig:bc-dishwasher}}{%
    \includegraphics[width=0.31\linewidth]{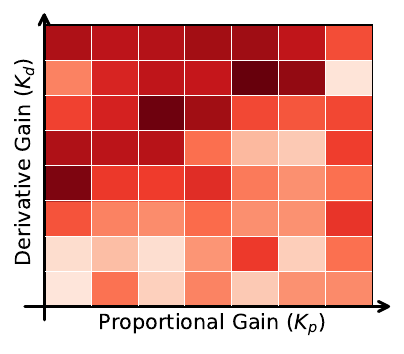}
  }\hfill
  \subcaptionbox{Dishrack Loading\label{fig:bc-dishrack-load}}{%
    \includegraphics[width=0.31\linewidth]{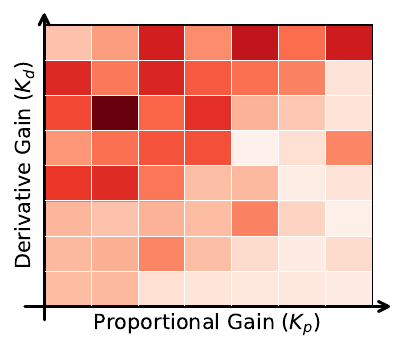}
  }\hfill
  
  \vspace{2mm}
  
  \subcaptionbox{\centering Block Stacking \\w/ Task Space Control\label{fig:bc-blockstack-osc}}{%
    \includegraphics[width=0.31\linewidth]{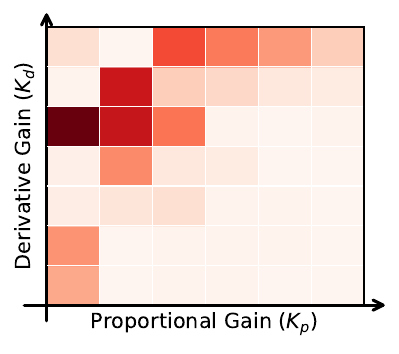}
  }
  \hfill
  \subcaptionbox{\centering Block Stacking\\/w GravComp \label{fig:bc-blockstack-gravcomp}}{%
    \includegraphics[width=0.31\linewidth]{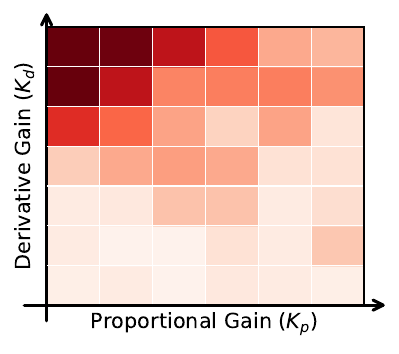}
  }\hfill
  \subcaptionbox{\centering Block Stacking \\/wo GravComp\label{fig:bc-blockstack-no-gravcomp}}{%
    \includegraphics[width=0.31\linewidth]{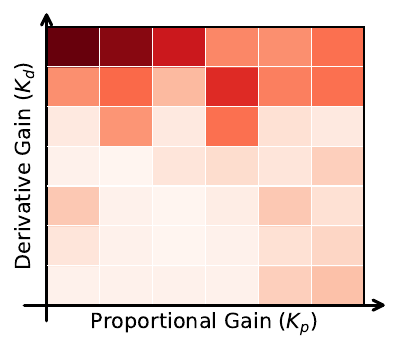}
  }
\end{minipage}

\caption{\textbf{Behavior cloning prefers \textit{compliant} and \textit{overdamped} controller gains.} Closed-loop rollout success rates across a grid of proportional ($\mathbf{K}_p$) and derivative ($\mathbf{K}_d$) gains for diverse manipulation tasks and robot embodiments. Each heatmap reports success averaged over evaluation rollouts. Across tasks, higher success rate (darker red) consistently concentrates in the compliant, overdamped regime (upper-left), while stiff or weakly damped controllers yield degraded performance.}
\label{fig:bc-results}
\end{figure*}

\myparagraph{Training Configurations.} 
We then train BC policies for each gain configuration $\mathbf{K} \in \{\mathbf{K}_1, \cdots, \mathbf{K}_n\}$, using gain-dependent demonstration datasets $\mathcal{D}(s,a(\mathbf{K}))$. Our nominal configuration uses a VAE generative model with an MLP network, observation history length 10, and action chunk size 10. We use privileged simulation states (i.e. object poses) as inputs, and absolute joint-space actions as outputs. To verify that our findings are not artifacts of this particular setup, we ablate across 
network architectures (MLP vs.\ Transformer), policy model classes (regression, VAE, and diffusion~\cite{chi2025diffusion}), temporal structure (observation history length and action chunk size), input modalities (privileged simulation states vs.\ robot state with dual-camera RGB from global and wrist-mounted views), and output representations (joint-space vs.\ task-space actions). %We verify that gain preferences are consistent across all variations; full ablation results are reported in Appendix \ref{sec:appendix-bc}. 

\begin{figure}[b]
\centering
\includegraphics[width=0.95\columnwidth]{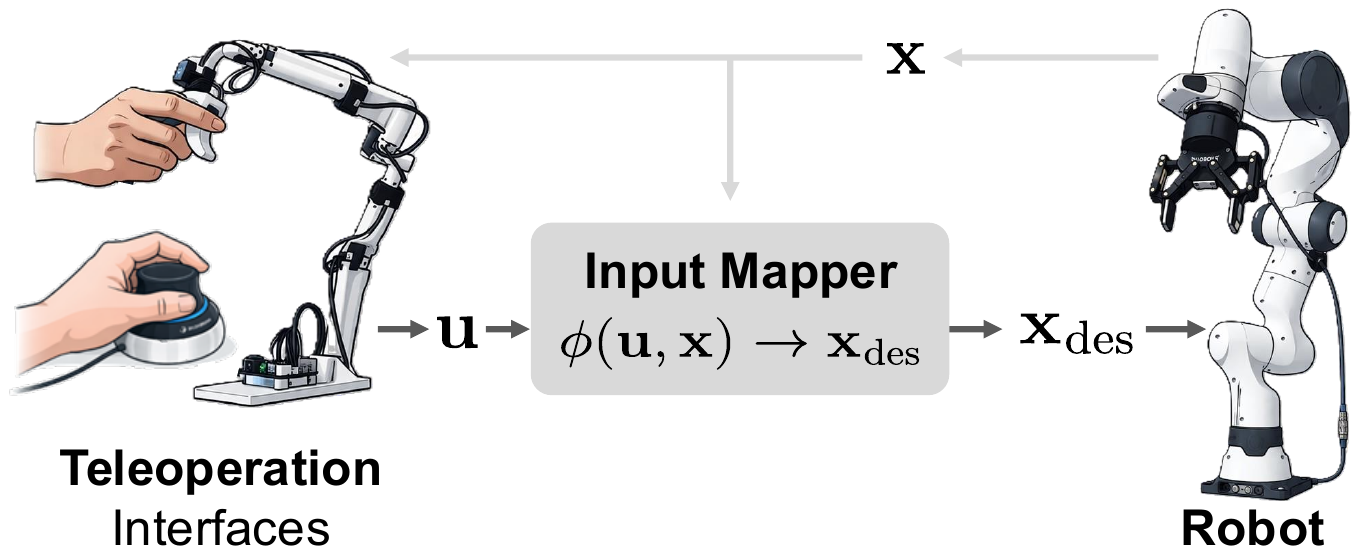}
\caption{Any teleoperation system requires a mapping $\phi$ from user inputs $\mathbf{u}$ to desired position targets $\mathbf{x}_{\text{des}}$ for the controller, which substantially shapes how the robot is perceived under different controller gains.}
\vspace{-10pt}
\label{fig:input-mapper}
\end{figure}

\myparagraph{Gain-Dependent Teleoperation User Study.} To complement our analysis on offline policy training, we conducted a user study examining how controller gains affect human teleoperation performance. We designed a contact-rich, non-prehensile box manipulation task: operators teleoperate a Franka Research 3 robot 
with a 6-DoF SpaceMouse to push a box from a randomized initial pose to a 
\begin{wrapfigure}{r}{0.48\columnwidth}
  \vspace{-10pt}
  \centering
  \includegraphics[width=0.45\columnwidth]{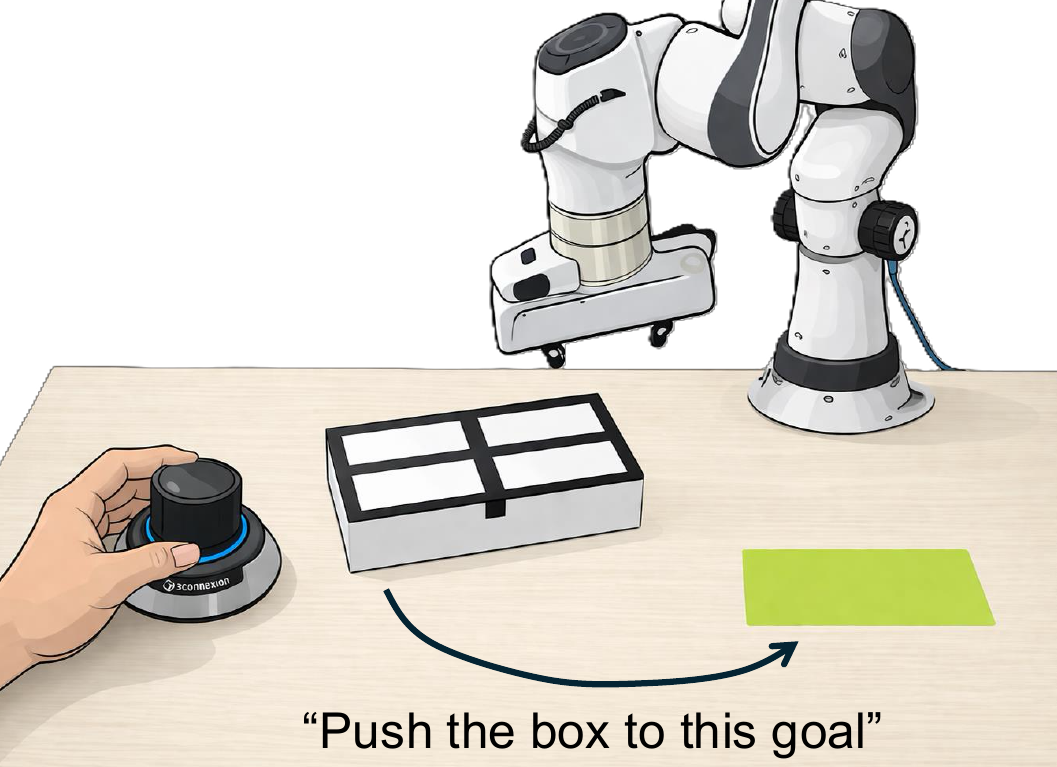}
    \setlength{\abovecaptionskip}{3pt}
  \caption{Box Pushing Task.}
  \label{fig:policy-freq-jitter}
  \vspace{-15pt}
\end{wrapfigure}
fixed goal pose. 
We chose this task because it requires both precision and sustained contact, yet remains achievable even under unintuitive gain configurations.
% \vspace{10pt}

A critical consideration is that the mapping from user input to commanded position target, $\phi(\mathbf{u}, \mathbf{x}) \rightarrow \mathbf{x}_{\text{des}}$ (Fig. \ref{fig:input-mapper}), can substantially modulate how the robot feels to the teleoperator. In our teleoperation setup, the mapping takes the form $\mathbf{x}_{\text{des}}(t) = \alpha \mathbf{u}(t) + \beta \mathbf{x}(t) + (1-\beta) \mathbf{x}_{\text{des}}(t-1)$, where $\alpha \in \mathbb{R}_0^+$ is a scaling factor and $\beta \in \{0, 1\}$ selects between the current robot pose or the previous target as the integration base. To ensure a fair comparison, each user study participant adjusts $\alpha$ and $\beta$ during a practice period before evaluation for each gain setting, yielding the gain-specific optimum
\begin{equation}\label{eq:input-mapping-optimization}
\phi^\star(\mathbf{K}) = \arg\max_\phi \mathcal{Q}(\phi; \mathbf{K}),
\end{equation}

\noindent where $\mathcal{Q}$ denotes the operator's perceived control quality. This lets operators compare each gain configuration at its best achievable experience. 

We asked 12 users to perform non-prehensile box pushing task over 1-hour sessions, collecting 1,297 total trials. Gain configurations were randomly sampled and blindly presented for each trial to control for learning effects across the session. Trials fail if the robot faults (position, velocity, or torque limit violation) or the operator pushes the box out of the workspace. For each trial, we recorded task success, completion time, and a subjective 1--5 control quality rating. The results are presented in \resultref{res:bc-user-study}.

\myparagraph{End-to-End Evaluation.} The above experiments isolate the learning and data collection effects independently. A natural concern is whether these effects compose favorably: data collected under compliant, overdamped gains may visit a different state distribution than data collected under stiff gains, potentially offsetting the learning advantage. To test this, we conduct an end-to-end experiment on a real Franka Research 3 robot. For each of the four corner gain configurations, we train a BC policy on 100 teleoperated demonstrations, collected per-gain (400 unique demonstrations). The results are presented in \resultref{res:bc-e2e}.

\subsection{Reinforcement Learning}
\label{sec:exp-rl}

In this section, we study how controller gains affect online RL, where gains shape both the transition dynamics and exploration during training. 

\myparagraph{Gain-Dependent Environment Shaping.} \label{sec:shaping} A key challenge in isolating the effect of controller gains on online RL is that performance can be highly sensitive to environment design and algorithm hyperparameters. These choices collectively determine the learning regime, a dependence we refer to as \textit{environment shaping}~\cite{park2024automatic}. To avoid conflating gain effects with suboptimal training configurations, we re-tune key hyperparameters for each gain setting using computational hyperparameter optimization~\cite{optuna_2019}\footnote{Specifically, we optimize the action space design parameters $h:=(\alpha, \beta, \gamma)$ in $\mathbf{x}_{\text{des}}(t) = \alpha \mathbf{u}(t) + \gamma \beta \mathbf{x}(t) + \gamma (1-\beta) \mathbf{x}_{\text{des}}(t-1)$, where $\mathbf{u}(t)$ is the policy output, $\alpha \in \mathbb{R}_0^+$ scales the policy output, $\gamma$ selects between absolute ($\gamma{=}0$) and relative ($\gamma{=}1$) actions, and $\beta$ determines whether relative actions are integrated on current state ($\beta{=}1$) or the previous target ($\beta{=}0$).}\textsuperscript{,}\footnote{Policies are trained using the SKRL implementation~\cite{serrano2023skrl} of PPO~\cite{schulman2017proximalpolicyoptimizationalgorithms} on tasks modified from IsaacLab~\cite{nvidia2025isaaclabgpuacceleratedsimulation}.}:
\begin{equation} \label{eq:shaping}
h^\star(\mathbf{K}) = \arg\max_h \; J\bigl(\pi^\star(h; \mathbf{K})\bigr),
\end{equation}
where $\pi^\star(h; \mathbf{K})$ denotes the converged policy under gains $\mathbf{K}$ and hyperparameters $h$. This ensures each gain configuration is evaluated at its best achievable performance, allowing us to determine whether RL can discover successful behaviors regardless of gain settings. Findings on solution existence are presented in \resultref{res:rl-existence}.

\myparagraph{Hyperparameter Optimization Landscape.} Beyond solution existence, we also investigate whether certain gain regimes make hyperparameter optimization easier. We consider gain regimes advantageous if they yield large, continuous regions of successful hyperparameters that are easily discoverable via optimization. To investigate how gain settings modulate the shape of this successful region, we record success rates across the hyperparameter landscape during 50 trials per gain setting, continuing all trials to completion even after finding a working configuration. Analysis of the optimization landscape is presented in \resultref{res:rl-shapeability}.

\myparagraph{Sample Efficiency.} In additon to the hyperparameter landscape, we also investigate whether certain gain regimes yield more efficient or stable learning once a successful hyperparameter configuration has been identified. We consider a gain regime \textit{favorable} if it enables policies to achieve high reward quickly and with low variance across random seeds. To isolate this effect, we run 5 random seeds for each hyperparameter combination that yielded ${>}95\%$ success during hyperparameter optimization, and compare the mean and standard deviation of training reward over the course of training, aggregated across all successful configurations. Analysis of sample efficiency and training stability under different gain regimes is presented in \resultref{res:rl-efficiency}.

\subsection{Sim-to-Real}
\label{sec:exp-sim2real}
Finally, we examine whether certain gain settings transfer more reliably from simulation to real hardware. We study reaching tasks with a Franka Research 3 robot to directly isolate the motor-level sim-to-real gap.

\myparagraph{System Identification.} To ensure a fair comparison across gain settings, we first perform gain-specific system identification. For each gain configuration $\mathbf{K} \in \{\mathbf{K}_1, \cdots, \mathbf{K}_n\}$, we excite the real-world robot with sinusoidal position targets $\mathbf{q}^{\text{des}}(t) = A\sin (\omega t)$ and optimize simulation parameters $\psi$ to match the resulting state trajectories, i.e., 
\begin{equation}
\psi^\star (\mathbf{K}) = \arg\min _\psi \sum_{t=0}^T \|\mathbf{x}(t; \mathbf{K}) - \bar{\mathbf{x}}(t;\psi) \|^2
\label{eq:sysid-error-definition}
\end{equation}
where $\mathbf{x}= (\mathbf{q}, \dot{\mathbf{q}})$ denotes the real robot state and $\bar{\mathbf{x}}(\cdot;\psi)$ its simulated counterpart with simulation parameters $\psi$.

This yields gain-dependent simulation environments that faithfully reproduce the closed-loop dynamics of each controller configuration. Analysis of how gain settings affect system identification quality is presented in \resultref{res:sim2real-sysid}.

\myparagraph{Gain-Dependent Sim-to-Real Transfer.} For each gain setting, we train RL policies in the corresponding calibrated simulation environment. We discover succcessful and transferable solutions by adapting Eq.~\ref{eq:shaping} as: 
\begin{equation} \label{eq:shaping-s2r}
h^\star(\mathbf{K}) = \arg\max_h \; \tilde{J}\bigl(\pi^\star(h; \mathbf{K})\bigr),
\end{equation}
where $\tilde{J}$ augments the original objective with a penalty for violating real-world robot limits. Additional details are available in Appendix~\ref{app:sim2real}.
We deploy the policies directly on the real robot without further fine-tuning; this zero-shot transfer protocol isolates the effect of gains on transferability. We also evaluate an ablation with domain randomization, where simulation parameters are perturbed within 10\% of their system-identified values during training. We additionally evaluate an ablation over control frequency, training new policies across all gain configurations at $f \in \{10, 20, 50, 100\}$\,Hz (nominal: 50\,Hz), adjusting only the zero-order hold duration $\Delta t = 1/f$.
We evaluate sim-to-real performance via \textit{trajectory error}, i.e. the mean squared error (MSE) between real and simulated state trajectories when initialized from matched configurations:
\begin{equation} \label{eq:traj_fidelity}
\mathcal{E}
=
\underbrace{\|\mathbf{q}_{\text{sim}} - \mathbf{q}_{\text{real}}\|^2}_{\text{position error}}
+
\underbrace{\|\dot{\mathbf{q}}_{\text{sim}} - \dot{\mathbf{q}}_{\text{real}}\|^2}_{\text{velocity error}}
\end{equation}
For each gain setting, we report the average trajectory error across 30 real-world rollouts. Results on how gain settings affect zero-shot transfer performance are presented in \resultref{res:sim2real-transfer}.

\section{Results}\label{sec:results}

\begin{figure}[t]
  \begin{minipage}[t]{0.24\textwidth}
    \vspace{0pt}
    \centering
    \subcaptionbox{Validation Loss\label{fig:bc-loss}}{%
      \includegraphics[width=\linewidth]{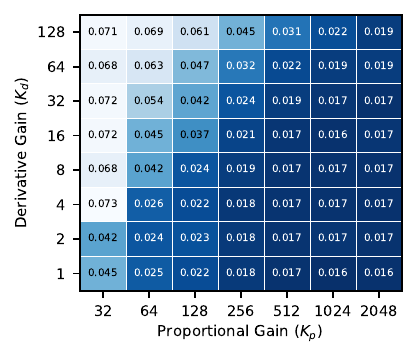}
    }
  \end{minipage}
  \hfill
  \begin{minipage}[t]{0.24\textwidth}
    \vspace{0pt}
    \centering
    \subcaptionbox{\centering Open-loop Success Rate \\ with Action Noise\label{fig:bc-noise-dampening}}{%
      \includegraphics[width=\linewidth]{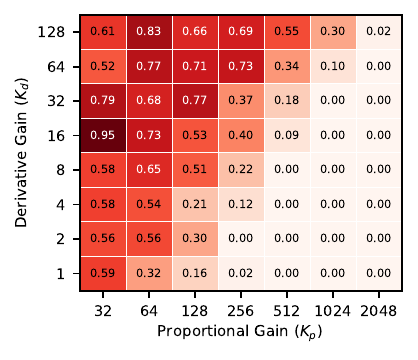}
    }
  \end{minipage}
  \hfill
  \begin{minipage}[t]{0.48\textwidth}
    \vspace{5pt}
    \centering
    \subcaptionbox{\centering Noised Open-loop Rollout \\
    for Compliant and Overdamped Gains\label{fig:bc-rollout-compliant}}{%
      \includegraphics[width=\linewidth]{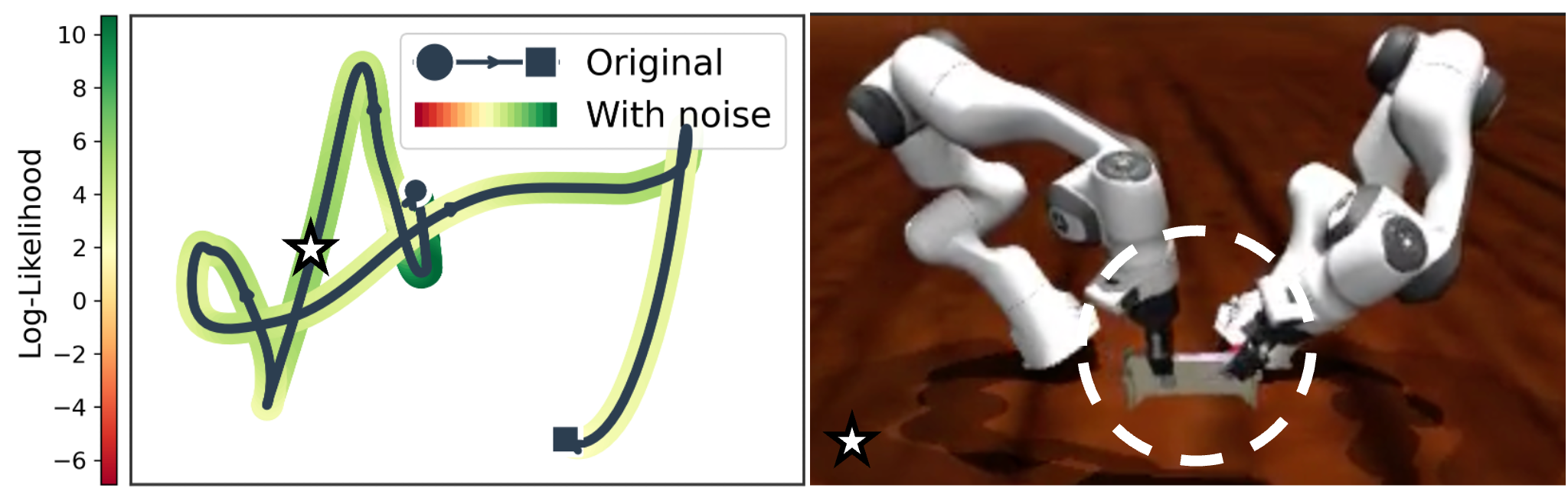}
    }
  \end{minipage}
  \hfill
  \begin{minipage}[t]{0.48\textwidth}
    \vspace{5pt}
    \centering
    \subcaptionbox{\centering Noised Open-loop Rollout \\ for Stiff and Underdamped Gains\label{fig:bc-rollout-stiff}}{%
      \includegraphics[width=\linewidth]{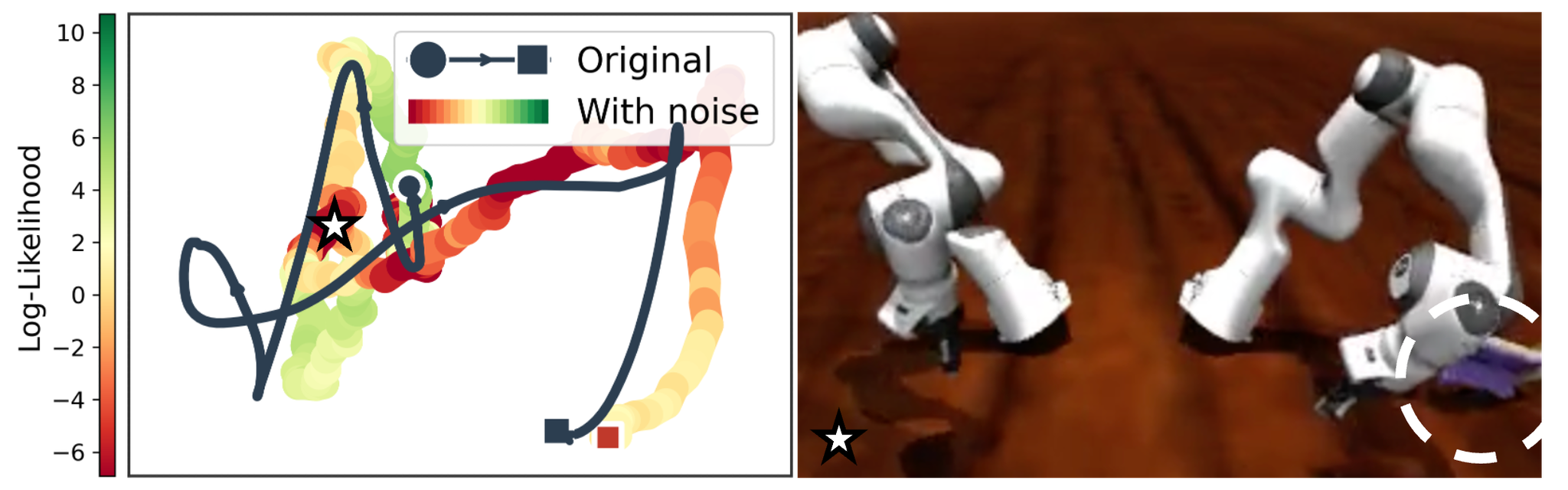}
    }
  \end{minipage}
\caption{\textbf{Compliant controllers attenuate action errors.} (a) Validation MSE loss during training: compliant gains yield higher loss, while stiff gains achieve lower loss. (b) Open-loop success rate under action noise: compliant gains maintain high success while stiff gains completely fail. (c) Compliant gains keep the perturbed trajectory close to the original, while (d) stiff gains cause large deviations that lead to task failure.}
\vspace{-10pt}
  \label{fig:bc-noise}
\end{figure}

\subsection{Behavior Cloning} \label{sec:result-bc}

Gain settings influence behavior cloning performance in two ways: (1) through the controller's response to action prediction errors during closed-loop execution, and (2) through the controller's effect on the human demonstrator during teleoperated data collection. We first study each factor in isolation, then verify their combined effect in an end-to-end real-world experiment.

\begin{myresult}[res:bc-learnability]{Effect on Learning}
Under state-matched demonstrations (via TPR), behavior cloning performs best with \textit{compliant} and \textit{overdamped} gains (i.e., top left region of Fig. \ref{fig:gain-configs}).
\end{myresult}

Using TPR (Eq. \ref{eq:tpr}) to hold the state distribution constant across gain settings and vary only the action distribution, we consistently observe that \textit{compliant} and \textit{overdamped} gain setpoints yield significantly better closed-loop policy performance. Figure \ref{fig:bc-results} illustrates this trend over a broad grid of controller gains and manipulation tasks. 
% The same trend persists across diverse training configurations, including (a) state-based versus image-based policies, (b) action-chunked versus non–action-chunked policies, and (c) policies trained with or without state histories (Appendix~\ref{sec:appendix-bc}), as well as (d) task-space versus joint-space control (Fig.~\ref{fig:bc-blockstack-osc}), and (e) with or without gravity compensation (Fig.~\ref{fig:bc-blockstack-no-gravcomp}). 
Additional results and visualizations are available on our \href{https://younghyopark.me/tune-to-learn}{project website}. 

To verify that the observed advantage of compliant, overdamped gains is statistically significant, we conduct formal hypothesis testing across all tasks. First, we fit a binomial logistic regression on $\log_2 \mathbf{K}_\text{p}$ and $\log_2 \mathbf{K}_\text{d}$ using $N{=}100$ rollouts per gain cell, confirming that lower $\mathbf{K}_\text{p}$ and higher $\mathbf{K}_\text{d}$ are significant predictors of success. We then apply Bonferroni-corrected one-sided Barnard's exact tests ($\alpha \approx 0.0083$, correcting for 6 tasks) under the null hypothesis
\begin{equation}
\mathcal{H}_0 \colon P(\text{success} \mid \mathcal{G}^{\text{CO}}) \leq P(\text{success} \mid \mathcal{G} \setminus \mathcal{G}^{\text{CO}})
\end{equation}
where $\mathcal{G}^{\text{CO}}$ denotes the compliant-overdamped gain region. $\mathcal{H}_0$ is rejected in all six tasks with $p \ll \alpha$ (Table~\ref{tab:statistical_analysis} in Appendix~\ref{sec:appendix-bc-stats}).

\myparagraph{Higher MSE Loss, Better Performance.} Policies trained under compliant and overdamped gains exhibit higher training and validation MSE loss than those trained under stiff gains (Fig.~\ref{fig:bc-loss}). Yet these same policies achieve higher closed-loop success rates. Lower imitation loss does not translate to better policy performance in our experiments.

\myparagraph{Robustness to Action Noise.}
To further characterize this effect, we execute identical open-loop action sequences across all gain configurations while injecting random action noise at each timestep, with maximum noise magnitudes matched to the average validation loss observed during training (Fig.~\ref{fig:bc-noise-dampening}). Under identical perturbations, compliant and overdamped gains maintain higher success rates than stiff gains.

\myparagraph{Discussion.} Together, these observations are consistent with the error-attenuation properties of compliant and overdamped controllers: low stiffness reduces the force produced by a given action error, while high damping dissipates perturbations faster, jointly limiting the resulting state deviation (formalized in Appendix~\ref{app:lipschitz}). This explains the apparent paradox --- compliant gains produce action targets that are harder to fit (higher MSE), but the controller attenuates the resulting errors during execution, yielding better closed-loop performance.

\begin{myresult}[res:bc-user-study]{Effect on Teleoperation}
\textit{Compliant} and \textit{overdamped} gain settings yield comparable teleoperated data collection efficiency compared to other gain regimes, achieving similar yield and operator preference.
\end{myresult}

\begin{figure}[t]
    \centering

    \begin{subfigure}[t]{0.32\columnwidth}
        \centering
        \includegraphics[width=\linewidth]{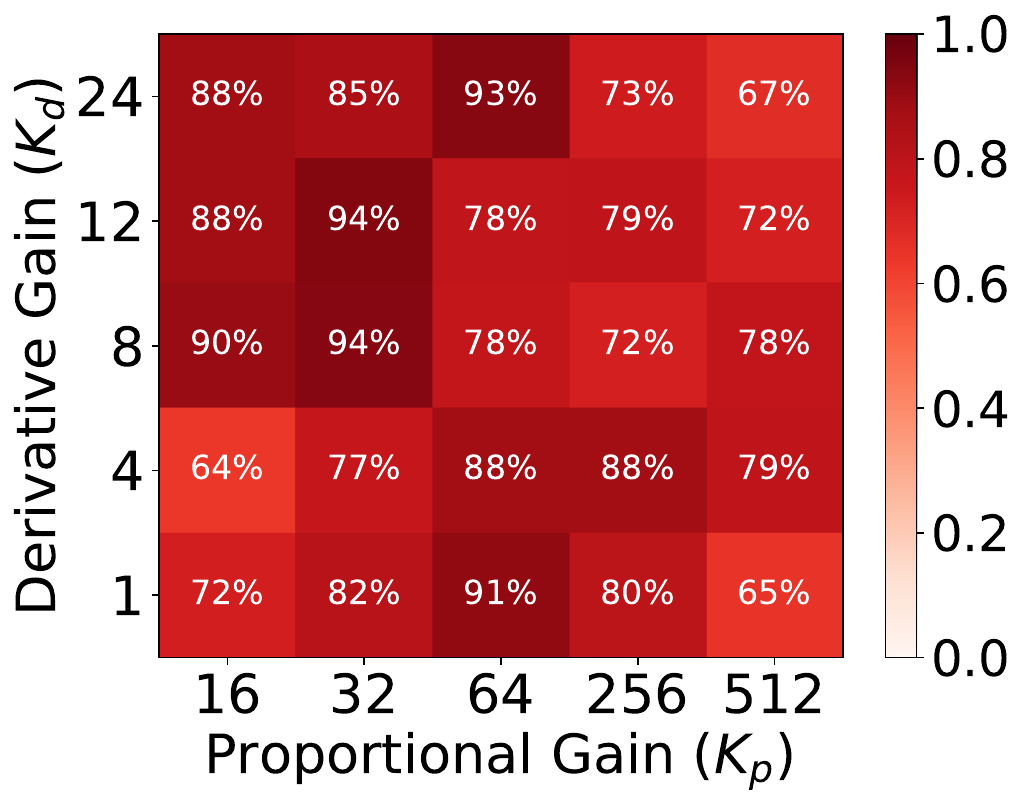}
        \caption{Success Rate}
        \label{fig:user-study-success-rate}
    \end{subfigure}\hfill
    \begin{subfigure}[t]{0.32\columnwidth}
        \centering
        \includegraphics[width=\linewidth]{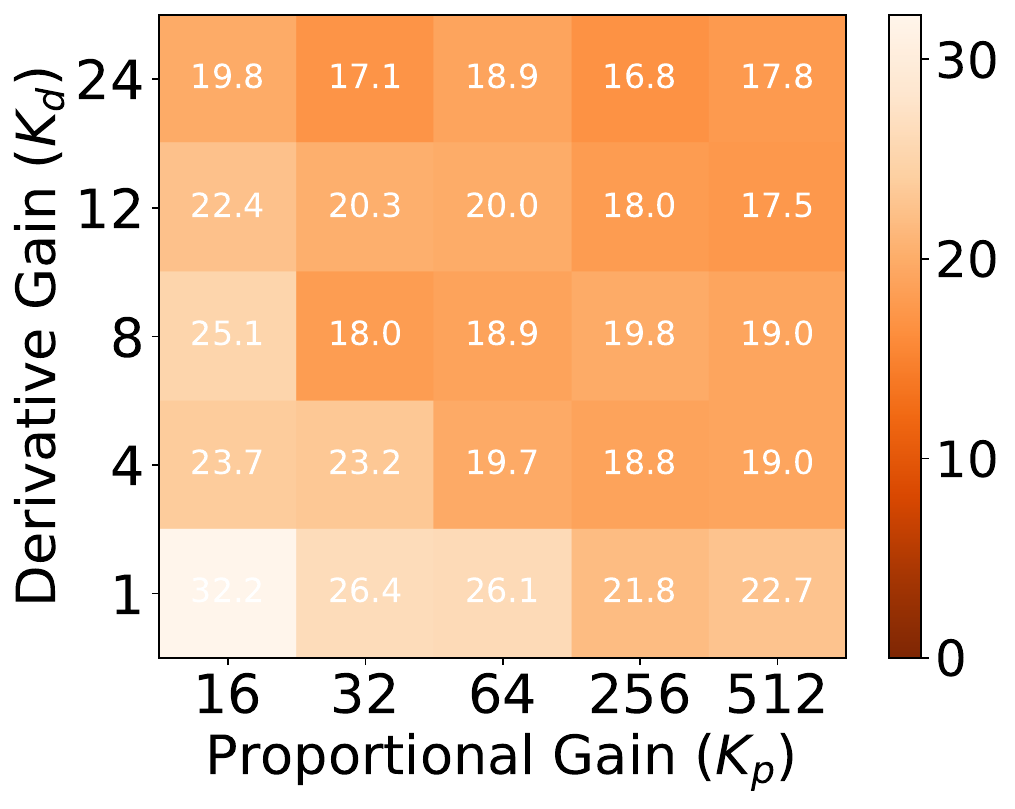}
        \caption{Completion Time}
        \label{fig:user-study-completion-time}
    \end{subfigure}\hfill
    \begin{subfigure}[t]{0.32\columnwidth}
        \centering
        \includegraphics[width=\linewidth]{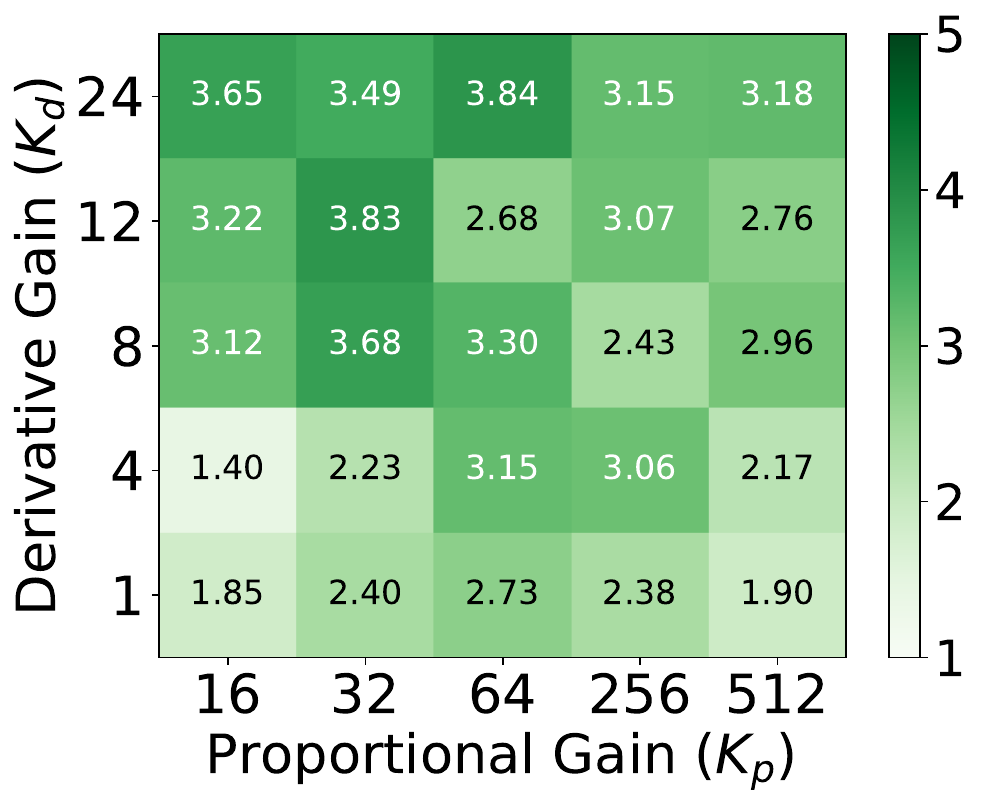}
        \caption{User Rating}
        \label{fig:user-study-rating}
    \end{subfigure}

\caption{\textbf{Teleoperation performance under different gain regimes.}
With optimized input mapping $\phi^\star(\mathbf{K})$ (Eq. \ref{eq:input-mapping-optimization}), compliant and overdamped controllers (grid top-left) achieve similar or better success rates, user ratings, and shorter completion time to stiffer settings.}
\vspace{-10pt}
\label{fig:user-study}

\end{figure}

One might expect that the \textit{compliant} and \textit{overdamped} gains (favorable for policy learning) would hinder teleoperation, as sluggish robot response could frustrate operators and reduce collection efficiency. 
Indeed, as we show in Sec.~\ref{sec:gain-settings-datasets}, stiff controllers appear to be the implicit default across major robot learning datasets. 
Our user study reveals that this tradeoff is not as stark as it appears (Fig.~\ref{fig:user-study}). 
When each gain configuration is evaluated with its own optimized input mapping $\phi^\star(\mathbf{K})$ (Eq. \ref{eq:input-mapping-optimization}), compliant and overdamped settings achieve comparable or better success rates and receive high subjective ratings; appropriate tuning of the mapping function, particularly the scaling factor $\alpha$, compensates for reduced responsiveness, allowing operators to command sufficiently responsive movements when needed. These results indicate that adopting compliant, overdamped gains for imitation learning does not impose a penalty on data collection. Full experimental details are in Appendix~\ref{sec:appendix-user-study}.

\begin{figure*}[t]
  \centering

  \setlength{\tabcolsep}{2pt} %

  \begin{tabular}{c|c}

    \subcaptionbox{\textit{FR3 Lift-Cube} Hyperparameter Sensitivity\label{fig:three-b}}{%
      \includegraphics[width=0.47\linewidth]{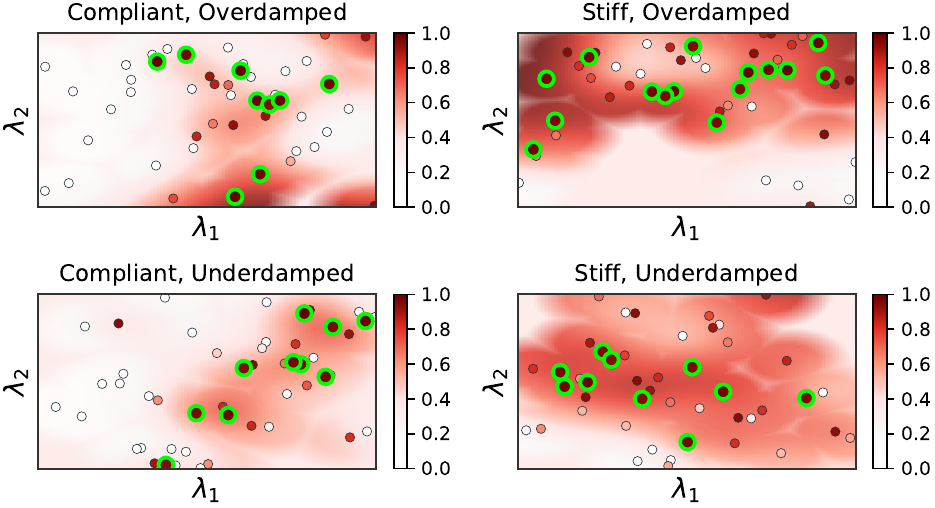}
    }
    &
    \subcaptionbox{\textit{G1 Track-Velocity} Hyperparameter Sensitivity\label{fig:three-c}}{%
      \includegraphics[width=0.47\linewidth]{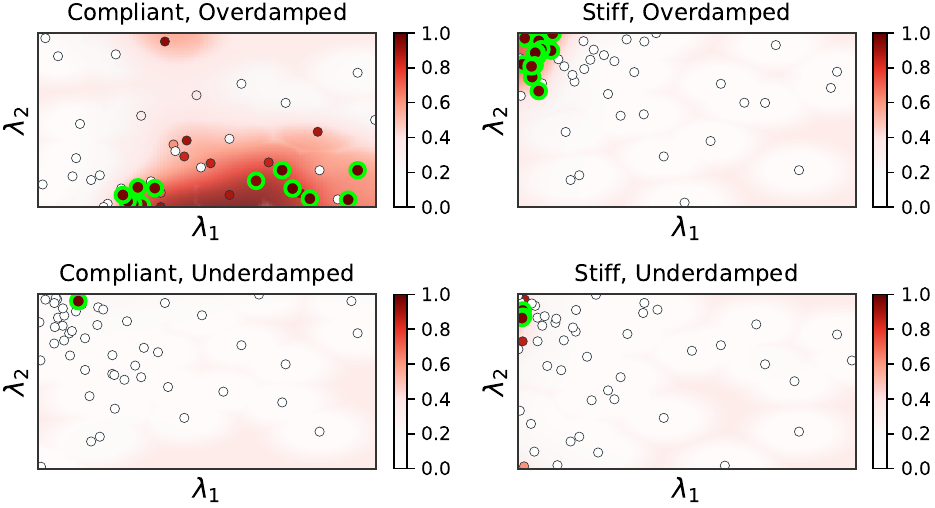}
    }
    \\[4pt]

    \subcaptionbox{\textit{FR3 Lift-Cube} Training Curves\label{fig:convergence-lift}}{%
      \includegraphics[width=0.4\linewidth]{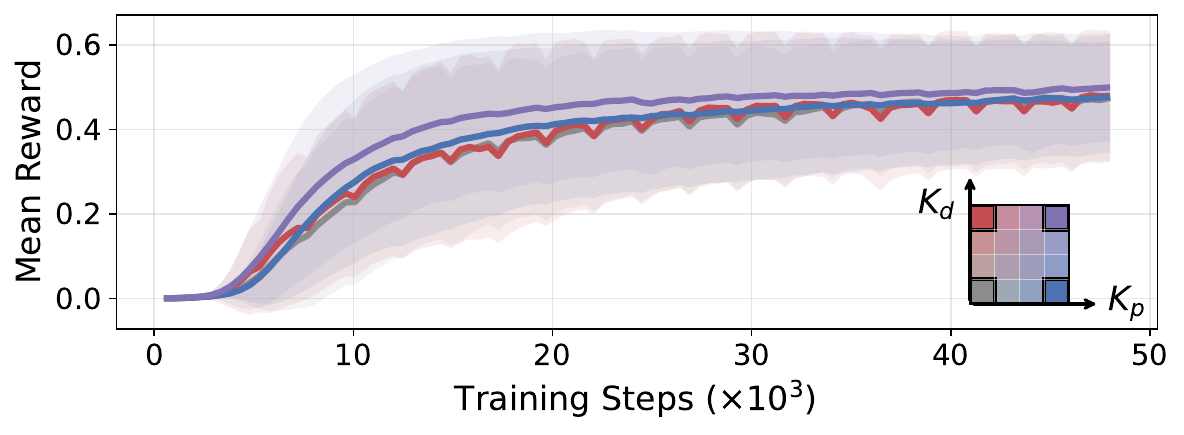}
    }
    &
    \subcaptionbox{\textit{G1 Track-Velocity} Training Curves\label{fig:convergence-g1}}{%
      \includegraphics[width=0.4\linewidth]{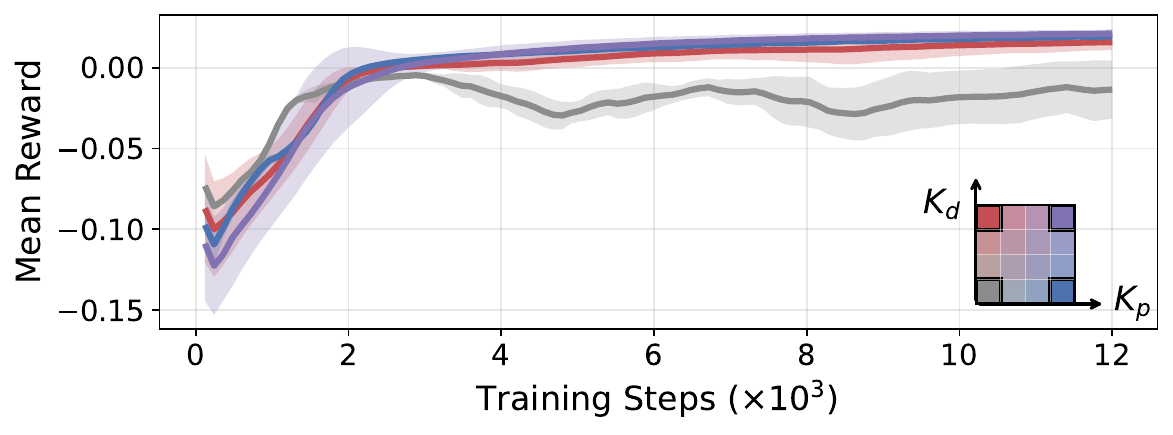}
    }

  \end{tabular}

  \caption{\textbf{RL training across gain regimes.}
  (a--b) Success rate across the hyperparameter landscape varies among gain settings and tasks; policies with 95\%+ success rate (green circles) are found across all conditions.
  (c--d) Sample efficiency and training stability of PPO is comparable across gain regimes for both tasks.
  }
  \label{fig:hyperparam_landscape}

\end{figure*}

\begin{myresult}[res:bc-e2e]{Composed Effect}
When data collection and policy learning are performed end-to-end under each gain setting, the \textit{compliant} and \textit{overdamped} regime still yields the best policy performance.
\end{myresult}

\begin{wrapfigure}{r}{0.48\columnwidth}
  \vspace{-12pt}
  \centering
  \includegraphics[width=0.46\columnwidth]{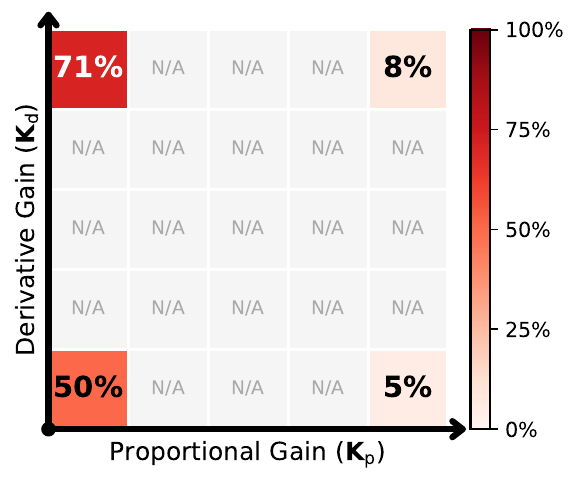}
  \setlength{\abovecaptionskip}{2pt}
  \caption{End-to-end BC pipeline still favors compliant and overdamped gain regime.}
  \label{fig:bc-e2e-heatmap}
  \vspace{-5pt}
\end{wrapfigure}
When data collection and policy training are performed end-to-end under each gain setting (see Section~\ref{sec:exp-bc}), the compliant, overdamped regime achieves the highest success rate (Fig.~\ref{fig:bc-e2e-heatmap}), despite collecting data under its own---potentially different---state distribution. This confirms that the learning advantage of compliant gains is not offset by distributional differences in teleoperated data, and that the two effects reinforce rather than cancel each other.

\subsection{Reinforcement Learning}\label{sec:results-rl}

\begin{myresult}[res:rl-existence]{RL Solution Existence}
Reinforcement learning \textit{can} discover behaviors regardless of gain setpoints. 
\end{myresult}

We find that all gain regimes spanning over two orders of magnitude in both $\mathbf{K}_p$ and $\mathbf{K}_d$ \textit{can} yield working controllers given appropriate environment shaping (Table~\ref{tab:rl_existence}). Unlike behavior cloning, on-policy RL trains on data generated by its own exploration, which may allow it to learn compensatory behaviors rather than relying on the controller's error attenuation properties.

\begin{table}[b]
\centering
\caption{\textbf{RL solution existence across gain regimes.} For each task, we verify that at least one successful policy can be discovered in every gain configuration  given appropriate environment shaping. A checkmark indicates that gain regimes corresponding to four corner extremes in Fig. \ref{fig:gain-configs} yield working controllers (99\%+ success rate). Videos or live policy rollouts of discovered behaviors for each gain settings are available on our \href{https://younghyopark.me/tune-to-learn}{project website}.}
\label{tab:rl_existence}
\begin{tabular}{l c}
\toprule
\textbf{Task / Platform} & \textbf{Existence Proof} \\
\midrule
FR3 Joint-Reach            & \checkmark \\
FR3 EE-Reach               & \checkmark \\
FR3 Lift-Cube              & \checkmark \\
FR3 Open-Drawer            & \checkmark \\
G1 Track-Velocity  & \checkmark \\
Allegro In-Hand Cube Manipulation  & \checkmark \\
FR3 Nonprehensile Cube Reorientation & \checkmark \\
\bottomrule
\end{tabular}
\end{table}

\begin{myresult}[res:rl-shapeability]{RL Hyperparameter Sensitivity}
The gain setting modulates the hyperparameter optimization landscape, but no regime is consistently easier to optimize.
\end{myresult}

Given that successful policies exist across gain regimes, we ask whether any regime is easier to discover a working policy via hyperparameter optimization. Fig.~\ref{fig:hyperparam_landscape}(a--b) visualizes the optimization landscape across environment parameters sampled during hyperparameter optimization for two tasks. While the gain setting clearly modulates the optimization landscape, we do not observe a consistent trend across tasks. This could reflect genuine task-dependence, or it could be an artifact of optimizing only a subset of environment parameters, which represents a low-dimensional slice of a larger optimization space. Resolving this would require larger experiments such as opening up more hyperparameters or leveraging automated reward shaping, which we leave to future work.

\begin{myresult}[res:rl-efficiency]{RL Sample Efficiency}
Sample efficiency and training stability are comparable across gain regimes.
\end{myresult}

We next ask whether any gain regime yields more efficient or stable learning once 
a working configuration has been identified. We find that training dynamics are comparable across gain regimes for both tasks (Fig.~\ref{fig:hyperparam_landscape}(c--d)). The exception is the compliant, underdamped regime on \textit{G1 Track-Velocity}, where only one 
viable configuration was found; the resulting curve is marginally successful and 
less smooth, though low seed variance indicates consistent rather than unstable learning. 
Together, these results suggest that the choice of gain regime does not meaningfully compromise the efficiency or stability of RL training.
\begin{figure}[t]
  \begin{minipage}[t]{0.24\textwidth}
    \vspace{0pt}
    \centering
    \subcaptionbox{SysID Modeling Error\label{fig:sysid_fit}}{%
      \includegraphics[width=\linewidth]{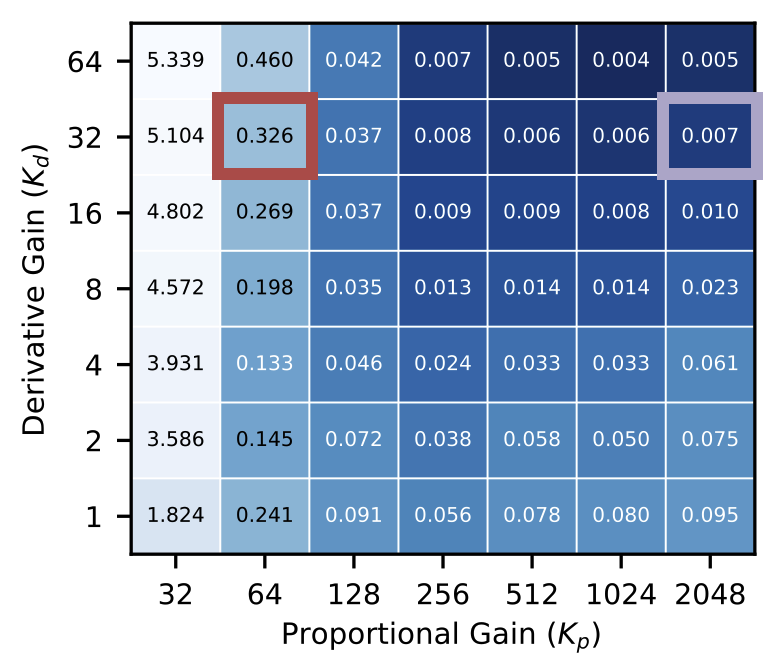}
    }
  \end{minipage}
  \hfill
  \begin{minipage}[t]{0.235\textwidth}
    \vspace{0pt}
    \centering
    \subcaptionbox{\centering Real vs. Sim Rollouts \label{fig:sim2real-ood}}{%
      \includegraphics[width=\linewidth]{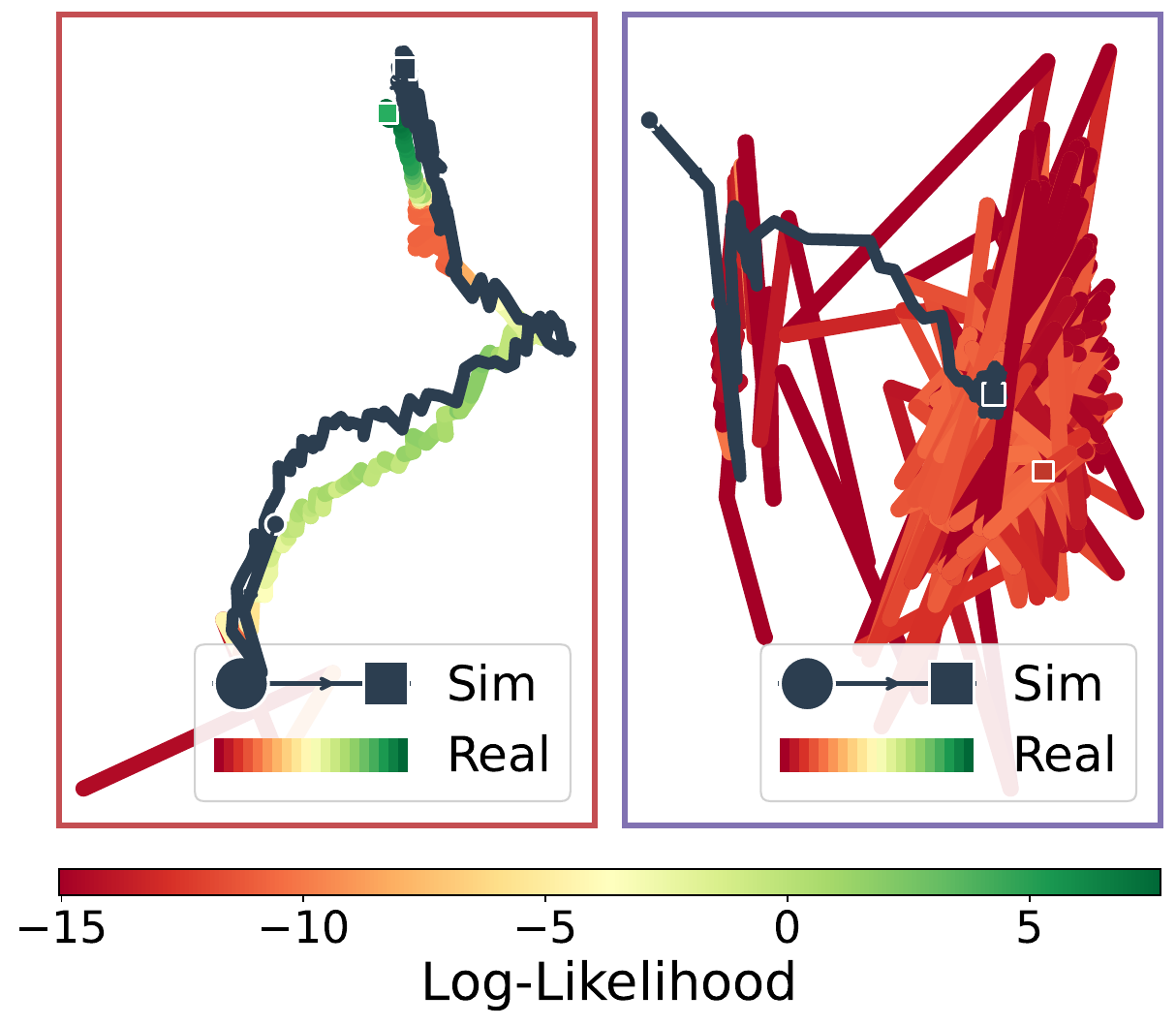}
    }
  \end{minipage}
\caption{\textbf{Stiff and overdamped gain settings yield lower SysID modeling errors, but exhibit larger closed-loop Sim2Real errors}. Policy observations during closed-loop rollout evolve similarly between sim and real (b-left) for compliant, overdamped gains, but very dissimilarly (b-right) for stiff, overdamped gains.
}
% \vspace{-5pt}
  \label{fig:sysid_ood}
\end{figure}

% \vspace{5pt}

\subsection{Sim-to-Real}\label{sec:results-sim2real}

\begin{myresult}[res:sim2real-sysid]{System Identification}
System identification achieves the lowest modeling error under \textit{stiff} and \textit{overdamped} gains (i.e., upper right region of Fig. \ref{fig:gain-configs}). 
\end{myresult}

The MSE between simulated and real response curves after system identification, i.e., $\mathcal{S}^\star(\mathbf{K}) = \min_\psi \mathcal{S}(\mathbf{K}, \psi)$ (Eq. \ref{eq:sysid-error-definition}), is over an order of magnitude lower for the stiff, overdamped regime compared to other gain settings (Fig.~\ref{fig:sysid_fit}). The highest system identification errors appear in the compliant, and particularly compliant and overdamped, gain regime.

\begin{myresult}[res:sim2real-transfer]{Sim2Real Transferability}
Sim2Real transferability, however, is lower with \textit{stiff} and \textit{overdamped} gain setpoints, with its main failure mode being high frequency oscillation. 
\end{myresult}

\begin{figure}[t]
  \centering
  
  \begin{minipage}[t]{0.33\textwidth}
    \vspace{0pt}
    \centering
    \subcaptionbox{\centering Joint-Reach Sim2Real trajectory error\\without Domain Randomization\label{fig:s2r-joint}}{%
      \includegraphics[width=\linewidth]{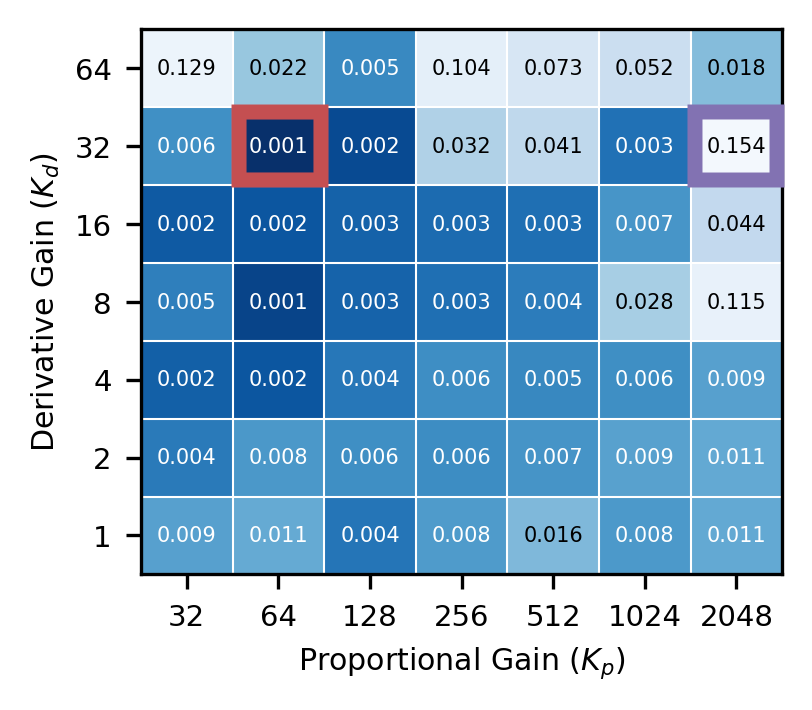}
    }
  \end{minipage}%
  \hfill%
  \begin{minipage}[t]{0.15\textwidth}
    \vspace{2pt}
    \centering
    \subcaptionbox{\centering Joint-reach \\with DR\label{fig:s2r-joint-nodr}}{%
      \includegraphics[width=\linewidth]{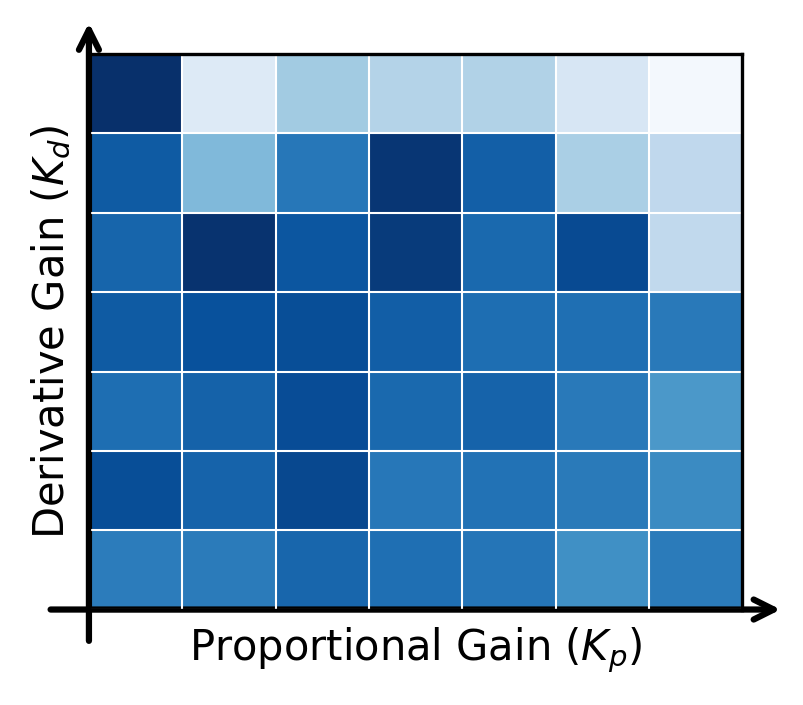}
    }
    \vspace{1mm}
    \subcaptionbox{EE no DR\label{fig:s2r-ee}}{%
      \includegraphics[width=\linewidth]{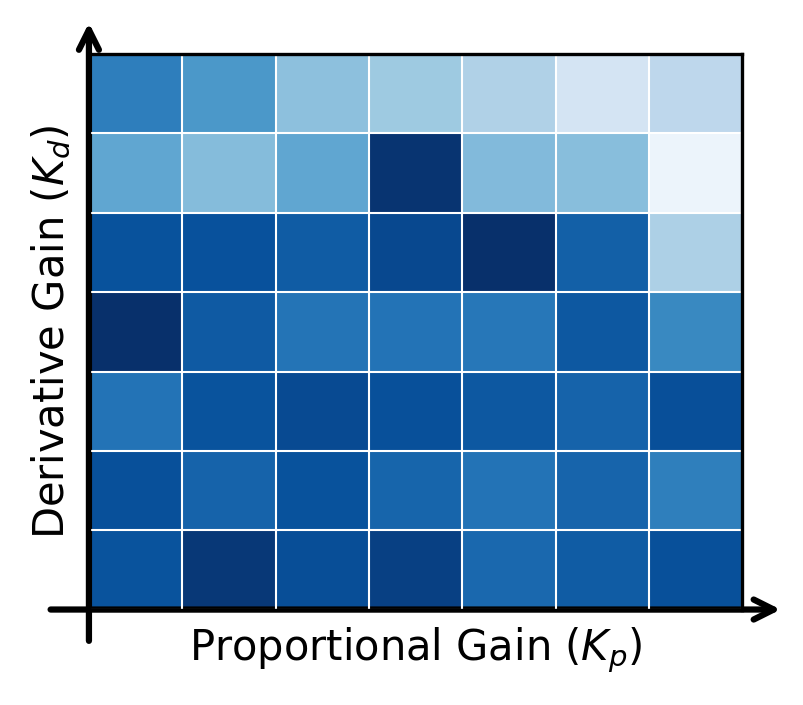}
    }
  \end{minipage}
  
  \vspace{3mm}
  
  \begin{minipage}[t]{0.48\textwidth}
    \centering
    \subcaptionbox{Real vs. sim wrist joint trajectories. \label{fig:s2r-success}}{%
      \includegraphics[width=\linewidth]{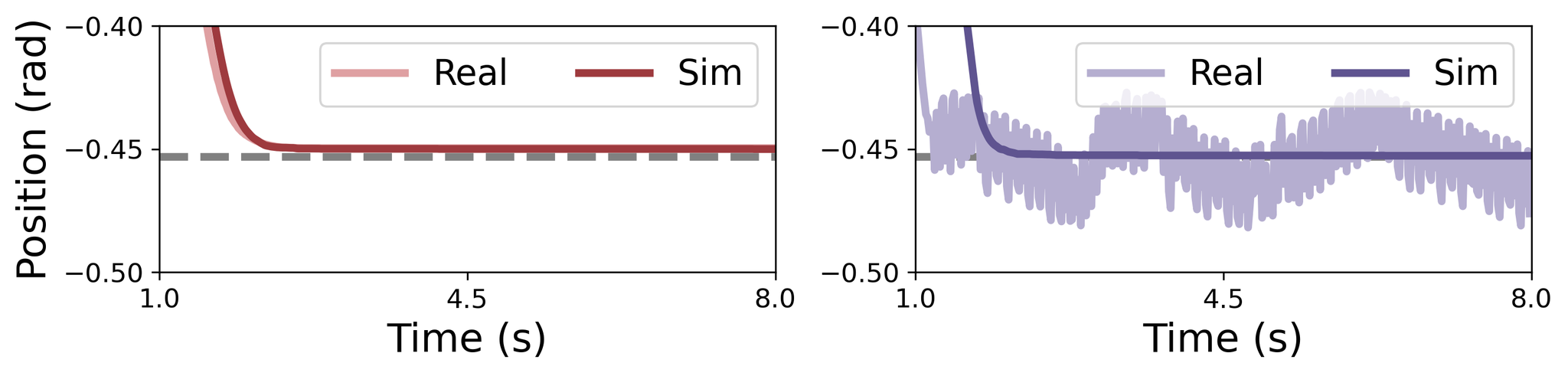}
    }
  \end{minipage}
  
  \caption{\textbf{Stiff and overdamped gain settings reduce sim2real transferability.} The Sim2Real trajectory error (Eq.\ref{eq:traj_fidelity}) is consistently larger (light blue) in the stiff and overdamped regime (a-c). The primary Sim2Real failure mode is high-frequency oscillation (d). }
  \label{fig:sim2real_trajectory}
  \vspace{-5pt}
\end{figure}

\myparagraph{Trajectory Error.}
Stiff and overdamped gain settings exhibit the largest sim-to-real trajectory error (Fig.~\ref{fig:sim2real_trajectory}). The dominant failure mode is high-frequency oscillation, which persists even with domain randomization (Fig.~\ref{fig:s2r-joint-nodr}). Notably, the low-level controller itself is stable under smooth commands; the oscillation only appears during closed-loop policy execution. When we compare the distribution of policy observations between sim and real (Fig.~\ref{fig:sim2real-ood}), stiff, overdamped gains produce real-world observations that are highly unlikely under the simulation distribution, whereas compliant, overdamped gains yield closely overlapping distributions.

\myparagraph{Statistical Significance.}
To verify that the stiff-overdamped gain region $\mathcal{G}^{\text{SO}}$ produces significantly larger sim-to-real error, we fit OLS regression on log-transformed trajectory error with $\log_2 \mathbf{K}_\text{p}$ and $\log_2 \mathbf{K}_\text{d}$ as predictors, confirming that both higher $\mathbf{K}_\text{p}$ and higher $\mathbf{K}_\text{d}$ are significant predictors of increased error. We then apply Bonferroni-corrected one-sided Mann-Whitney U tests ($\alpha \approx 0.017$, correcting for 3 conditions) under the null hypothesis
\begin{equation}
\mathcal{H}_0 \colon \varepsilon(\mathcal{G}^{\text{SO}}) \leq \varepsilon(\mathcal{G} \setminus \mathcal{G}^{\text{SO}})
\end{equation}
where $\varepsilon$ denotes the sim-to-real trajectory error. $\mathcal{H}_0$ is rejected in all three conditions with $p \ll \alpha$ (Table~\ref{tab:sim2real_statistical_analysis} in Appendix~\ref{sec:appendix-sim2real-stats}).

% \vspace{10}

\begin{myresult}[res:sim2real-policy-freq]{Effect of Policy Frequency}
Lowering the policy frequency (increasing the zero-order-hold duration $\Delta t$ per policy action) reduces the prevalence of high-frequency oscillation during sim-to-real transfer.
\end{myresult}

\begin{wrapfigure}{r}{0.48\columnwidth}
  \vspace{-10pt}
  \centering
  \includegraphics[width=0.48\columnwidth]{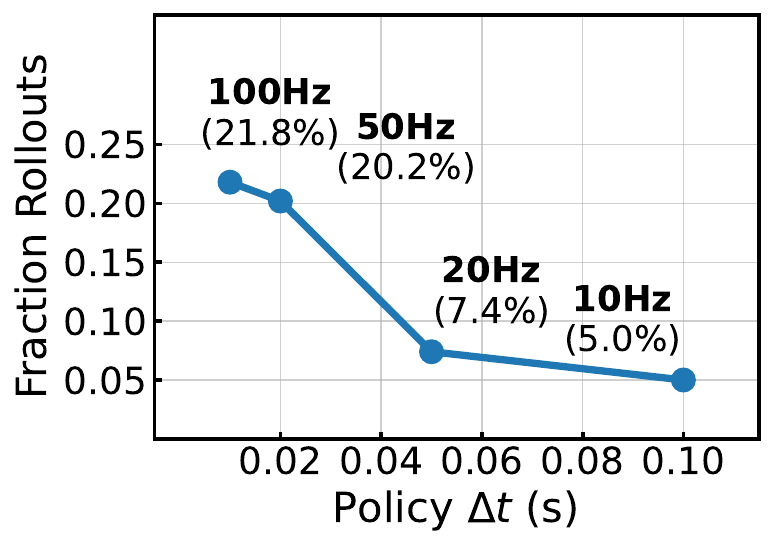}
    \caption{Jitter Failures vs.\ $\Delta t$.}
    \label{fig:policy-freq-jitter}
      \vspace{-10pt}
\end{wrapfigure}

We detect jitter failures by computing the maximum per-joint standard deviation of joint velocity during the final 2 seconds of each rollout, flagging trajectories exceeding a threshold of 0.04 rad/s; this metric reliably separates the two modes, as settled rollouts have a median velocity standard deviation of 0.001 rad/s while jittering rollouts have a median of 0.675 rad/s. As shown in Fig.~\ref{fig:policy-freq-jitter}, the fraction of jitter failures across the full gain grid drops from 21.8\% at 100\,Hz to 5.0\% at 10\,Hz, with the sharpest reduction occurring between 50\,Hz and 20\,Hz. 

\myparagraph{Discussion.}
The lower SysID modeling error under stiff, overdamped gains is consistent with these dynamics filtering out nonlinearities that are difficult for idealized simulated actuators to capture: high damping suppresses high-frequency effects such as joint flexibility and transmission dynamics, while high stiffness reduces sensitivity to steady-state errors from imperfect gravity compensation or stiction.

However, the inverse relationship between SysID accuracy and closed-loop transfer quality suggests that gain settings should be evaluated in the context of the full policy loop, not in isolation. We hypothesize that stiff, overdamped controllers amplify small modeling errors because they respond to position and velocity deviations with high torques. When the policy reacts to noise or unmodeled dynamics, the controller aggressively tracks these commands, pushing the system further from states encountered in simulation. Naively choosing the gains that minimize modeling error can therefore paradoxically increase the closed-loop sim-to-real gap.

Lower policy frequency reduces oscillation in a manner consistent with 
\cite{gangapurwala2023learning}: with more time between commands, joints settle 
before the next action is issued, reducing the opportunity for the policy to 
react to transient out-of-distribution observations and amplify them into 
oscillation. This provides a simple mitigation strategy for the failure mode 
identified in Result~\ref{res:sim2real-transfer}, at the cost of reduced 
temporal resolution.

\section{Conclusion and Remarks}

We have presented a systematic study of how position controller gains shape learning dynamics across three paradigms of modern robot learning. Our findings reveal that gains function not as behavioral parameters, but as an inductive bias that modulates the learning interface between policy and environment. Behavior cloning favors compliant, overdamped regimes; reinforcement learning adapts to any gain setting given compatible hyperparameters; and sim-to-real transfer suffers with stiff, overdamped configurations. These results provide both conceptual clarity and practical guidance for a widely used yet underexplored design decision.

Our framework also raises questions for adjacent areas. Modern humanoid robots increasingly use RL-trained whole-body tracking policies as low-level controllers, analogous to the PD controllers studied here. Yet how their compliance shapes high-level policy learning remains unexplored.
Similarly, paradigms that learn manipulation skills from human videos \cite{qiu2025humanoid, grauman2022ego4d} or wearable devices~\cite{chi2024universal} typically treat observed next timestep state as the action label, implicitly assuming perfect target tracking, which our results suggest may be suboptimal for imitation learning. Whether these gain-dependent trends generalize to such cross-embodiment or whole-body control settings remains an open question, and we hope our findings offer a useful lens for investigating these directions.

\section*{Acknowledgements}
We thank the members of the Improbable AI lab for the helpful discussions and feedback on the paper. This research was financially partially supported by the Ministry of Trade, Industry, and Energy (MOTIE), Korea, under the ``Global Industrial Technology Cooperation Center program'' supervised by the Korea Institute for Advancement of Technology (KIAT) (Grant No. P0028435).
This work was also partly supported by the Sony Research Award.

\bibliography{references}

\clearpage 
\appendix

\section{Analytical Characterization of Gain-Dependent Error Sensitivity}
\label{app:lipschitz}

\subsection{Analytical Proof of Gain-Dependent Error Sensitivity}
\label{app:lipschitz}

We formalize the empirical observation that compliant and overdamped controller gains attenuate action prediction errors during behavior cloning. We analyze a simplified 1-DOF system and prove that the steady-state position error variance under stochastic action noise is proportional to $\mathbf{K}_p / \mathbf{K}_d$.

\myparagraph{Setup.} Consider a 1-DOF point mass $m$ controlled by a PD controller. The continuous-time dynamics under action (position target) $a(t) = q_{\mathrm{des}}(t)$ are:
\begin{equation}
    m\ddot{q} = \mathbf{K}_p(a - q) - \mathbf{K}_d \dot{q}
    \label{eq:dynamics}
\end{equation}
The controller gains are $\mathbf{K} = (\mathbf{K}_p, \mathbf{K}_d)$ with $\mathbf{K}_p, \mathbf{K}_d > 0$. We define the natural frequency and damping ratio:
\begin{equation}
    \omega_n = \sqrt{\frac{\mathbf{K}_p}{m}}, \qquad
    \zeta = \frac{\mathbf{K}_d}{2\sqrt{m\,\mathbf{K}_p}}
    \label{eq:params}
\end{equation}

\begin{tcolorbox}[colback=gray!5, colframe=gray!50, boxrule=0.4pt, arc=0pt, breakable, left=4pt, right=4pt, top=4pt, bottom=4pt]
\begin{theorem}[Gain-Dependent Error Variance]
\label{thm:variance}
Consider the system~\eqref{eq:dynamics} with $\mathbf{K}_p, \mathbf{K}_d > 0$. Suppose the policy $\pi$ produces i.i.d.\ action errors $\delta a(t)$ with variance $\sigma^2$ around the expert action. Then the steady-state position error variance is:
\begin{equation}
    \mathrm{Var}[\delta q] \;=\; \frac{\sigma^2 \, \mathbf{K}_p}{2\,\mathbf{K}_d}
    \label{eq:variance_result}
\end{equation}
In particular, the variance is independent of the mass $m$, and is minimized when $\mathbf{K}_p / \mathbf{K}_d$ is small.
\end{theorem}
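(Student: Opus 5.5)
The error dynamics are linear, so I would subtract the expert trajectory and work with the deviation directly. Write $a = a^\star + \delta a$ and $q = q^\star + \delta q$, where $(a^\star,q^\star)$ is the expert action and state pair satisfying \eqref{eq:dynamics}. The deviation then obeys
\begin{equation*}
m\,\delta\ddot q + \mathbf{K}_d\,\delta\dot q + \mathbf{K}_p\,\delta q = \mathbf{K}_p\,\delta a(t),
\end{equation*}
a stable second-order linear filter driven by the action noise. Because $\mathbf{K}_p,\mathbf{K}_d,m>0$, both poles lie in the open left half-plane, so a stationary steady state exists. Its variance depends only on the second-order statistics of $\delta a$, not on $a^\star$.

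Continuous-time i.i.d.\ noise has to be made precise. I would read ``i.i.d.\ errors with variance $\sigma^2$'' as white noise $\delta a = \sigma\,\xi(t)$ with unit spectral density (intensity $\sigma^2$). I would then compute the stationary covariance of $x=(\delta q,\delta\dot q)$ from the state-space form
\begin{equation*}
\dot x = A x + B\,\sigma\xi, \qquad
A=\begin{pmatrix}0&1\\ -\mathbf{K}_p/m & -\mathbf{K}_d/m\end{pmatrix},\quad
B=\begin{pmatrix}0\\ \mathbf{K}_p/m\end{pmatrix},
\end{equation*}
by solving the Lyapunov equation $AP+PA^\top+\sigma^2BB^\top=0$. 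Write $P=\begin{pmatrix}p_{11}&p_{12}\\p_{12}&p_{22}\end{pmatrix}$. The $(1,1)$ entry gives $p_{12}=0$. The $(1,2)$ entry gives $p_{22}=(\mathbf{K}_p/m)\,p_{11}$. The $(2,2)$ entry gives
\begin{equation*}
-2(\mathbf{K}_d/m)\,p_{22}+\sigma^2\mathbf{K}_p^2/m^2=0,
\end{equation*}
so $p_{22}=\sigma^2\mathbf{K}_p^2/(2m\mathbf{K}_d)$. Hence $p_{11}=\sigma^2\mathbf{K}_p/(2\mathbf{K}_d)$, which is \eqref{eq:variance_result}, and the mass cancels.

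As a cross-check, I would redo the computation in the frequency domain. With $H(s)=\mathbf{K}_p/(ms^2+\mathbf{K}_d s+\mathbf{K}_p)$, the standard second-order integral gives
\begin{equation*}
\frac{1}{2\pi}\int_{-\infty}^{\infty}|H(j\omega)|^2\,d\omega=\frac{\mathbf{K}_p}{2\mathbf{K}_d},
\end{equation*}
which agrees with the Lyapunov result. Equivalently, in terms of \eqref{eq:params}, the variance is $\sigma^2\omega_n/(4\zeta)$.

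The final claim follows immediately. The variance is monotone increasing in $\mathbf{K}_p/\mathbf{K}_d$, so it is minimized when that ratio is small, i.e.\ in the compliant, overdamped regime.

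The main obstacle is the noise model, not the algebra. Taken literally as a discrete-time i.i.d.\ sequence held constant over each control interval $\Delta t$, the formula is only approximate. In that setting $\sigma^2$ must be reinterpreted as the spectral intensity $\sigma^2\Delta t$, and the approximation is valid when $\Delta t$ is small relative to $1/\omega_n$ and $m/\mathbf{K}_d$. I would state the white-noise idealization explicitly, note this scaling, and observe that the qualitative dependence on $\mathbf{K}_p/\mathbf{K}_d$ is unchanged.
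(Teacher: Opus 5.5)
Your proposal is correct and follows the paper's route. The paper derives the same perturbation ODE, idealizes $\delta a$ as white noise, and gets $\sigma^2\omega_n/(4\zeta)=\sigma^2\mathbf{K}_p/(2\mathbf{K}_d)$ by citing the classical mean-square response of a second-order oscillator; your Lyapunov computation and frequency-domain cross-check derive that cited result directly. Your remark that $\sigma^2$ must be read as a white-noise intensity (e.g.\ $\sigma^2\Delta t$ for zero-order-held i.i.d.\ errors) sharpens the paper's informal ``variance parameter'': otherwise the formula is not dimensionally consistent, since $\mathbf{K}_p/\mathbf{K}_d$ has units of inverse time.
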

\end{tcolorbox}

\begin{proof}
Suppose the expert action is $a^*(t)$ and the learned policy predicts $\hat{a}(t) = a^*(t) + \delta a(t)$. Subtracting the nominal trajectory from the perturbed one, the position error $\delta q(t) = q_{\mathrm{noised}}(t) - q_{\mathrm{clean}}(t)$ satisfies:
\begin{equation}
    m\,\delta\ddot{q} + \mathbf{K}_d\,\delta\dot{q} + \mathbf{K}_p\,\delta q = \mathbf{K}_p\,\delta a(t)
    \label{eq:perturbation_ode}
\end{equation}
This is a damped harmonic oscillator driven by the action error. Dividing by $m$ and substituting~\eqref{eq:params}:
\begin{equation}
    \delta\ddot{q} + 2\zeta\omega_n\,\delta\dot{q} + \omega_n^2\,\delta q = \omega_n^2\,\delta a(t)
    \label{eq:standard_form}
\end{equation}
Since the policy makes independent errors at each timestep, we model $\delta a(t)$ as white noise---the continuous-time analog of uncorrelated random inputs---with variance parameter $\sigma^2$. Eq.~\eqref{eq:standard_form} is a standard second-order oscillator driven by the effective noise input $n(t) = \omega_n^2\,\delta a(t)$, whose variance parameter is $\sigma_n^2 = \omega_n^4\,\sigma^2$. We apply the following classical result:

\begin{tcolorbox}[colback=gray!5, colframe=gray!50, boxrule=0.4pt, arc=0pt, breakable, left=4pt, right=4pt, top=4pt, bottom=4pt]
\begin{theorem}[Mean-Square Response of a Second-Order System~{\cite{crandall2014random}}]
\label{thm:crandall}
Consider the oscillator $\ddot{y} + 2\zeta\omega_n\dot{y} + \omega_n^2 y = n(t)$, where $n(t)$ is white noise with variance parameter $\sigma_n^2$. The steady-state mean-square response is:
\begin{equation}
    E[y^2] = \frac{\sigma_n^2}{4\zeta\omega_n^3}
\end{equation}
\end{theorem}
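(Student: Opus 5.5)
The plan is to recast the oscillator as a linear stochastic differential equation in state-space form and compute its stationary covariance from the algebraic Lyapunov equation. This avoids evaluating the frequency-domain integral $\int |H(i\omega)|^2\,d\omega$ by residues.

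\textbf{Setup.} I read ``white noise with variance parameter $\sigma_n^2$'' as $E[n(t)n(s)] = \sigma_n^2\,\delta(t-s)$, that is, $n\,dt = \sigma_n\,dW$ for a standard Wiener process $W$. Pinning down this normalization is the step I expect to need the most care. Crandall's text states the result in terms of a spectral density $S_0$, and the constant changes by factors of $2\pi$ (or $2$) depending on whether a one- or two-sided density is used. I would state the convention explicitly so that $\sigma_n^2$ is the intensity of the delta-correlated input, and then check it against the impulse-response formula $E[y^2] = \sigma_n^2\int_0^\infty h(t)^2\,dt$.

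\textbf{State-space form and stability.} Set $x = (y,\dot y)^\top$. Then
$$dx = A x\,dt + B\,dW, \qquad A = \begin{pmatrix}0 & 1\\ -\omega_n^2 & -2\zeta\omega_n\end{pmatrix}, \qquad B = \begin{pmatrix}0\\ \sigma_n\end{pmatrix}.$$
Since $\zeta,\omega_n>0$, the characteristic polynomial $s^2+2\zeta\omega_n s+\omega_n^2$ has both roots in the open left half-plane in every damping regime, so $A$ is Hurwitz. Consequently, for any initial condition the covariance $P(t)=E[xx^\top]$ converges as $t\to\infty$. The limit $P$ is the unique solution of
$$AP + PA^\top + BB^\top = 0.$$
This is the steady state referred to in the statement.

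\textbf{Solving the Lyapunov equation.} Write $P = \begin{pmatrix}p_{11}&p_{12}\\ p_{12}&p_{22}\end{pmatrix}$ and expand the three independent entries.
\begin{itemize}
\item The $(1,1)$ entry gives $2p_{12}=0$, so $y$ and $\dot y$ are uncorrelated at steady state.
\item The $(1,2)$ entry gives $p_{22} - \omega_n^2 p_{11} - 2\zeta\omega_n p_{12} = 0$, hence $p_{22}=\omega_n^2 p_{11}$.
\item The $(2,2)$ entry gives $-2\omega_n^2 p_{12} - 4\zeta\omega_n p_{22} + \sigma_n^2 = 0$, hence $p_{22} = \sigma_n^2/(4\zeta\omega_n)$.
\end{itemize}
Combining the last two yields
$$E[y^2] = p_{11} = \frac{\sigma_n^2}{4\zeta\omega_n^3},$$
as claimed.

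\textbf{Cross-check.} As an independent check of the constant, I would also compute $\sigma_n^2\int_0^\infty h(t)^2\,dt$ in the underdamped case, where $h(t) = e^{-\zeta\omega_n t}\sin(\omega_d t)/\omega_d$ and $\omega_d=\omega_n\sqrt{1-\zeta^2}$. The result should be $1/(4\zeta\omega_n^3)$. The over- and critically-damped cases follow by analytic continuation in $\zeta$, or directly from the Lyapunov computation, which never used the damping regime.
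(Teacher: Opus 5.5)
Your proof is correct, and it takes a different route from the paper. The paper gives no derivation of this statement. It cites Crandall, where the result comes from the frequency domain: $E[y^2]=\int_{-\infty}^{\infty}S_0\,|H(i\omega)|^2\,d\omega$ with $H(s)=(s^2+2\zeta\omega_n s+\omega_n^2)^{-1}$. That integral is evaluated by residues (or from a table) and gives $\pi S_0/(2\zeta\omega_n^3)$ for a two-sided spectral density $S_0$.

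Your Lyapunov argument replaces the contour integral with three linear equations, and your $(1,1)$, $(1,2)$, $(2,2)$ entries are right. It needs only that $A$ is Hurwitz, which you verify for all $\zeta,\omega_n>0$. So it covers the under-, critically and over-damped cases at once, with no analytic continuation. It also returns the full stationary covariance, including $E[y\dot y]=0$ and $E[\dot y^2]=\sigma_n^2/(4\zeta\omega_n)$.

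What the spectral route shows directly is how the constant depends on the noise convention. The two answers coincide exactly when $\sigma_n^2=2\pi S_0$, i.e.\ under your reading $E[n(t)n(s)]=\sigma_n^2\,\delta(t-s)$. That is the reading under which the stated constant is correct.

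You were right to single out this normalization step. It is also where the paper is loose when it applies the result in Theorem~\ref{thm:variance}. There the per-step variance $\sigma^2$ of i.i.d.\ zero-order-held action errors is used as if it were this delta-correlated intensity. For a hold time $\Delta t$ short relative to the oscillator's response time, the equivalent intensity is instead approximately $\sigma^2\Delta t$. Consistent with this, $\sigma^2\mathbf{K}_p/(2\mathbf{K}_d)$ has units of position squared per unit time, not position squared.
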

\end{tcolorbox}

\noindent Since $\delta a(t)$ has zero mean, the steady-state mean perturbation is $E[\delta q] = 0$, so $E[\delta q^2] = \mathrm{Var}[\delta q]$. Applying Theorem~\ref{thm:crandall} with $\sigma_n^2 = \omega_n^4\,\sigma^2$:
\begin{equation}
    \mathrm{Var}[\delta q]
    = \frac{\omega_n^4\,\sigma^2}{4\zeta\omega_n^3}
    = \frac{\omega_n\,\sigma^2}{4\zeta}
\end{equation}
Substituting $\omega_n = \sqrt{\mathbf{K}_p/m}$ and $\zeta = \mathbf{K}_d/(2\sqrt{m\mathbf{K}_p})$:
\begin{equation}
    \frac{\omega_n}{4\zeta}
    = \frac{\sqrt{\mathbf{K}_p/m}}{4 \cdot \mathbf{K}_d/(2\sqrt{m\mathbf{K}_p})}
    = \frac{\mathbf{K}_p}{2\,\mathbf{K}_d}
\end{equation}
where $m$ cancels completely. Therefore $\mathrm{Var}[\delta q] = \sigma^2\,\mathbf{K}_p / (2\,\mathbf{K}_d)$.
\end{proof}

\myparagraph{Interpretation. }The result can be understood through two competing physical effects:

\begin{enumerate}
\item \textbf{Error injection amplified by $\mathbf{K}_p$.} From~\eqref{eq:perturbation_ode}, the right-hand side $\mathbf{K}_p\,\delta a(t)$ shows that each action error enters the dynamics as a force proportional to $\mathbf{K}_p$. Higher stiffness means the same prediction mistake produces a larger force, injecting more energy into the error. Damping does not appear on the right-hand side because the damping force $-\mathbf{K}_d\dot{q}$ acts on velocity, not on the action target. 

\item \textbf{Error decays more with $\mathbf{K}_d$}. Between errors, a perturbation evolves under the homogeneous
dynamics $m\,\delta\ddot{q} + \mathbf{K}_d\,\delta\dot{q}
+ \mathbf{K}_p\,\delta q = 0$. The damping force
$-\mathbf{K}_d\,\delta\dot{q}$ opposes velocity, continuously
removing kinetic energy from the perturbation. Higher
$\mathbf{K}_d$ means faster energy dissipation. The
steady-state variance is the equilibrium where the rate of
energy injected by action errors (proportional to
$\mathbf{K}_p$) equals the rate of energy dissipated by
damping (proportional to $\mathbf{K}_d$).

\end{enumerate}

\begin{tcolorbox}[colback=gray!5, colframe=gray!50, boxrule=0.4pt, arc=0pt, breakable, left=4pt, right=4pt, top=4pt, bottom=4pt]
\begin{corollary}[State-Space Impact of Policy Errors]
\label{cor:state_error}
Let $\epsilon_\pi(\mathbf{K})$ denote the RMS action prediction error (square root of validation MSE) of a behavior cloning policy under gain setting $\mathbf{K}$. By Theorem~\ref{thm:variance}, the resulting steady-state RMS position deviation from the expert trajectory is:
\begin{equation}
   \epsilon_{\text{eff}} %
    = \sqrt{\frac{\mathbf{K}_p}{2\,\mathbf{K}_d}}
    \cdot \epsilon_\pi(\mathbf{K})
    \label{eq:effective_error}
\end{equation}
Since closed-loop task success depends on state deviation rather than action prediction accuracy, this quantity---not $\epsilon_\pi$ alone---governs performance.
\end{corollary}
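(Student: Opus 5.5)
The corollary follows directly from Theorem~\ref{thm:variance}, once $\epsilon_\pi(\mathbf{K})$ is identified with the noise level $\sigma$ there. We treat the deployed policy's output as $\hat a(t) = a^*(t) + \delta a(t)$ and model $\delta a$ as zero-mean, temporally uncorrelated error. Its variance is taken to equal the policy's mean squared prediction error on the data distribution, so $\sigma^2 = \epsilon_\pi(\mathbf{K})^2$, that is, the validation MSE. This needs the validation set to be representative of the states visited at execution. Under TPR the state distribution is held fixed across gain settings, so we assume this (and, to first order, that the policy error is unbiased). With this identification, all hypotheses of Theorem~\ref{thm:variance} hold for the same 1-DOF system~\eqref{eq:dynamics}.

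Next we substitute $\sigma^2 = \epsilon_\pi(\mathbf{K})^2$ into~\eqref{eq:variance_result}, which gives $\mathrm{Var}[\delta q] = \epsilon_\pi(\mathbf{K})^2\,\mathbf{K}_p/(2\mathbf{K}_d)$. The proof of the theorem already shows $E[\delta q]=0$, so the RMS deviation from the expert trajectory is $\sqrt{E[\delta q^2]} = \sqrt{\mathrm{Var}[\delta q]}$. Since both factors are nonnegative, taking the square root gives~\eqref{eq:effective_error}: the factor $\sqrt{\mathbf{K}_p/(2\mathbf{K}_d)}$ multiplies $\epsilon_\pi(\mathbf{K})$.

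The last sentence of the corollary is interpretive rather than deductive. We will justify it by noting that task success is a function of the state trajectory, and $\epsilon_{\text{eff}}$ is exactly the deviation of that trajectory. This also explains the result in Fig.~\ref{fig:bc-loss}: $\epsilon_\pi$ can be larger under compliant gains while $\epsilon_{\text{eff}}$ is smaller, because the prefactor $\sqrt{\mathbf{K}_p/\mathbf{K}_d}$ is small in that regime.

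The main obstacle is the modeling step equating discrete, i.i.d.\ per-timestep errors of variance $\epsilon_\pi^2$ with the continuous white-noise variance parameter used in Theorem~\ref{thm:crandall}. With zero-order hold at policy period $\Delta t$, the correct intensity is $\sigma^2\Delta t$. So the constant in~\eqref{eq:effective_error} holds only up to this gain-independent scale. We will state it in that form, noting that the gain dependence $\sqrt{\mathbf{K}_p/\mathbf{K}_d}$, which is what matters for the comparison, is unaffected, and that this approximation is valid when $\Delta t$ is small relative to $1/\omega_n$.
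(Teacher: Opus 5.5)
Your argument is the paper's own: set $\sigma^2=\epsilon_\pi(\mathbf{K})^2$ in Theorem~\ref{thm:variance}, use $E[\delta q]=0$ so that the RMS deviation equals $\sqrt{\mathrm{Var}[\delta q]}$, and take the square root, treating the final sentence of the corollary as interpretation rather than deduction. Your extra caveat is correct and goes beyond the paper, which silently equates a per-step MSE (squared position) with the white-noise intensity of Theorem~\ref{thm:crandall} (squared position times time); for zero-order-held i.i.d.\ errors the equivalent intensity is $\epsilon_\pi^2\,\Delta t$, so the displayed formula holds only up to the gain-independent factor $\sqrt{\Delta t}$ (when $\Delta t$ is small relative to the closed-loop time scales), and the $\sqrt{\mathbf{K}_p/\mathbf{K}_d}$ dependence that drives the conclusion is unaffected.
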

\end{tcolorbox}

\noindent This explains the observation in Fig.~\ref{fig:bc-loss} that policies with \emph{higher} training loss often achieve \emph{better} closed-loop performance. In the compliant regime, action targets are harder to fit (higher $\epsilon_\pi$), but the attenuation factor $\sqrt{\mathbf{K}_p/(2\mathbf{K}_d)}$ is sufficiently small that the resulting state deviation remains low. Conversely, stiff gains yield low training loss but amplify residual errors through the dynamics.

\begin{remark}[The Attenuation--Difficulty Tradeoff]
\label{rem:tradeoff}
Theorem~\ref{thm:variance} holds for \emph{fixed} action noise variance $\sigma^2$. In practice, $\epsilon_\pi(\mathbf{K})$ is itself a function of the gains: compliant gains produce action targets that are larger in magnitude and potentially harder to fit, which may increase $\epsilon_\pi$. The state deviation in Corollary~\ref{cor:state_error} is therefore the product of a \emph{decreasing} attenuation factor and a potentially \emph{increasing} prediction difficulty. Theorem~\ref{thm:variance} establishes the \emph{mechanism} (attenuation), while the experiments in Section~\ref{sec:result-bc} establish empirically that the attenuation effect dominates prediction difficulty across our six tasks.
\end{remark}

\subsection{Quantitative Stiffness Analysis of Decoupling-Gains Experiment} \label{sec:appendix-existence-proof}
\begin{figure}[h]
    \centering
    \includegraphics[width=0.7\linewidth]{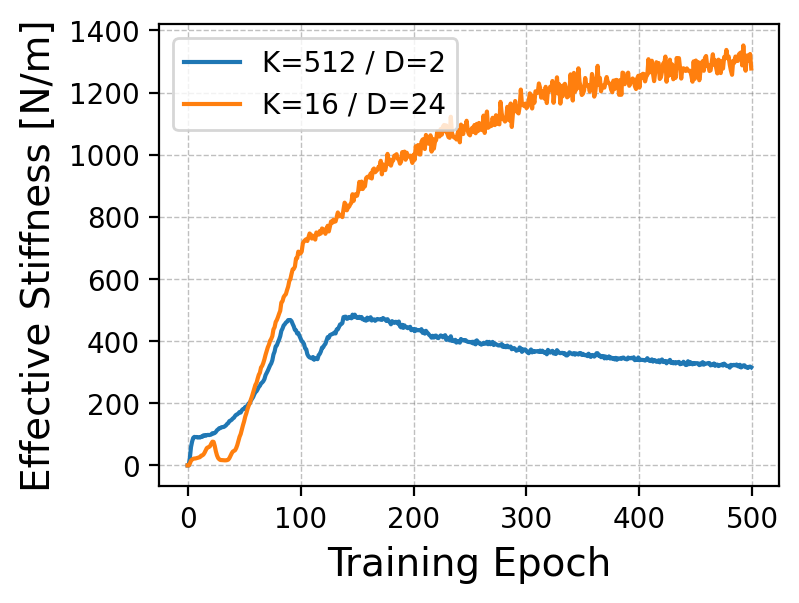}
    \caption{\textbf{Effective Cartesian stiffness throughout training for the two counterintuitive 
    pairings.} Despite $32\times$ lower actuator stiffness, the stiff-behavior policy achieves 
    ${\sim}5\times$ higher effective task-level stiffness than the compliant-behavior policy.}
    \label{fig:stiffness-quantification}
\end{figure}

Fig.~\ref{fig:stiffness-quantification} reports the effective Cartesian stiffness 
$\mathbf{K}_\text{eff} = |\mathbf{F}|/|\Delta \mathbf{x}|$ of each policy throughout training, 
measured via force-displacement system identification: a random translational force is applied 
to the end-effector and the steady-state displacement is recorded. Despite operating with 
$32\times$ lower actuator stiffness, the stiff-behavior policy converges to ${\sim}5\times$ 
higher effective task-level stiffness than the compliant-behavior policy.

\subsection{Behavior Cloning} \label{sec:appendix-bc}
\subsubsection{Task Descriptions} 

The six tasks we study are: Bimanual Handover, Dishrack Unload, Dishrack Load, Dishwasher Open, Mug Hang, and Block Stack (Figure \ref{fig:bc-tasks}). For all tasks besides Block Stack, we collect 100 teleoperated demonstrations with the Apple Vision Pro \cite{park2024avp, park2024dexhub} for each task. For Block Stack, we use motion-planned trajectories. These demonstrations are collected at 500Hz recording raw torques generated from operational space controller, and are retargeted to joint-level position targets for each gain setting at 50Hz for gain-dependent policy learning.

\begin{figure}[h]
    \centering
  \begin{minipage}{0.24\textwidth}
    \vspace{5pt}
    \centering
    \subcaptionbox{\centering Bimanual Handover\label{fig:bimanual-handover}}{%
      \includegraphics[width=\linewidth]{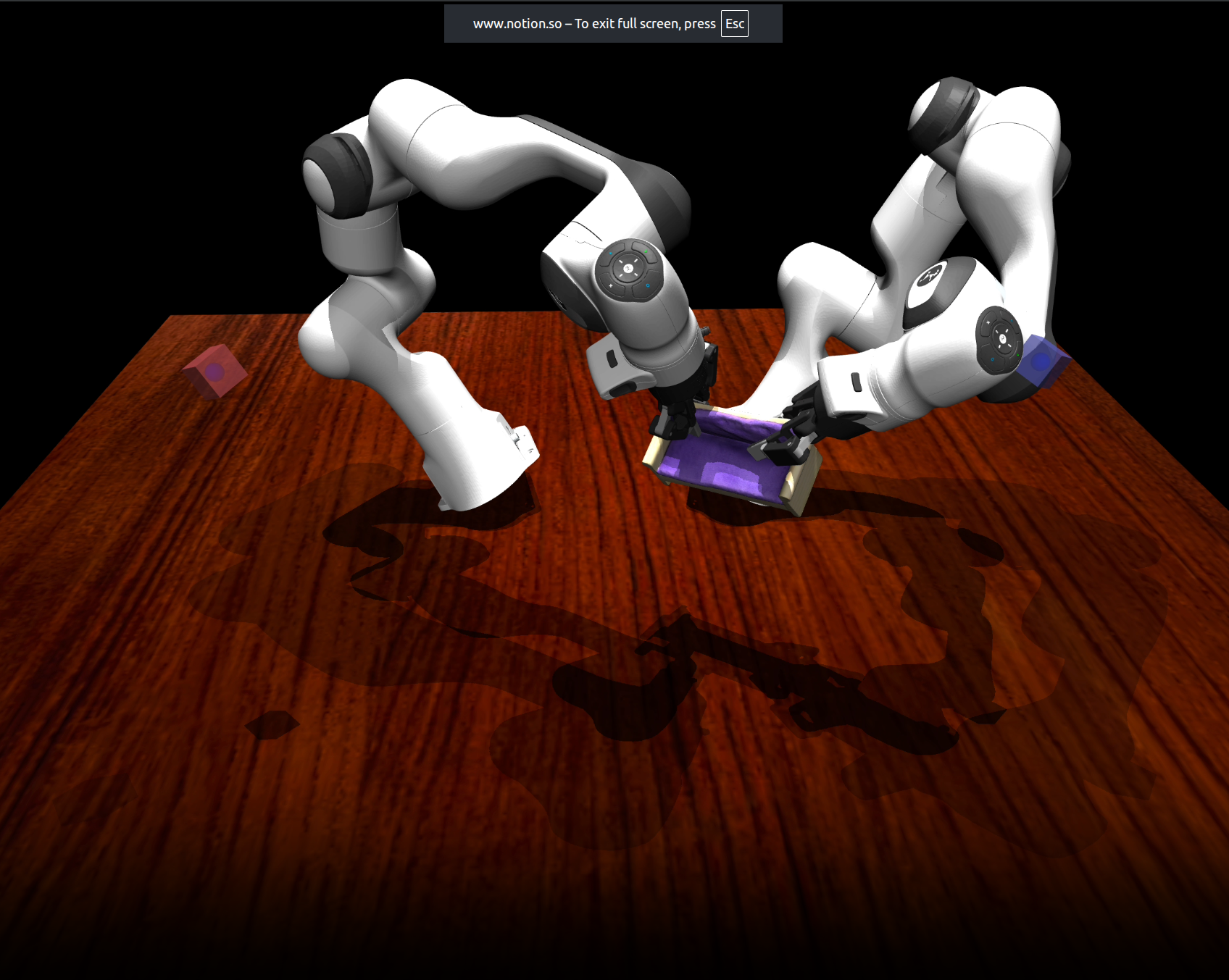}
    }
  \end{minipage}
  \hfill
  \begin{minipage}{0.24\textwidth}
    \vspace{5pt}
    \centering
    \subcaptionbox{\centering Dishrack Unload\label{fig:dishrack-unload}}{%
      \includegraphics[width=\linewidth]{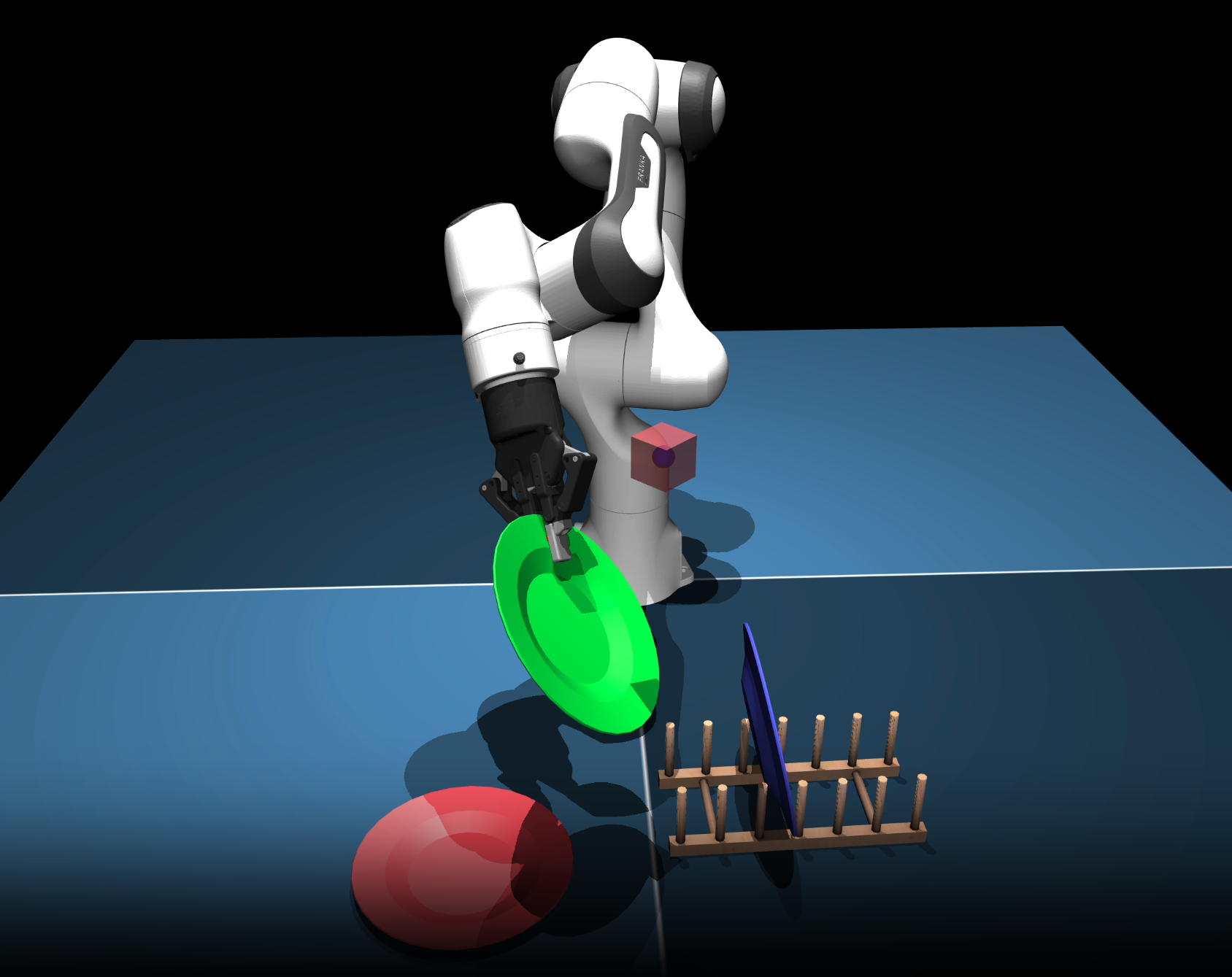}
    }
  \end{minipage}
  \begin{minipage}{0.24\textwidth}
    \vspace{5pt}
    \centering
    \subcaptionbox{\centering Dishrack Load\label{fig:disrack-load}}{%
      \includegraphics[width=\linewidth]{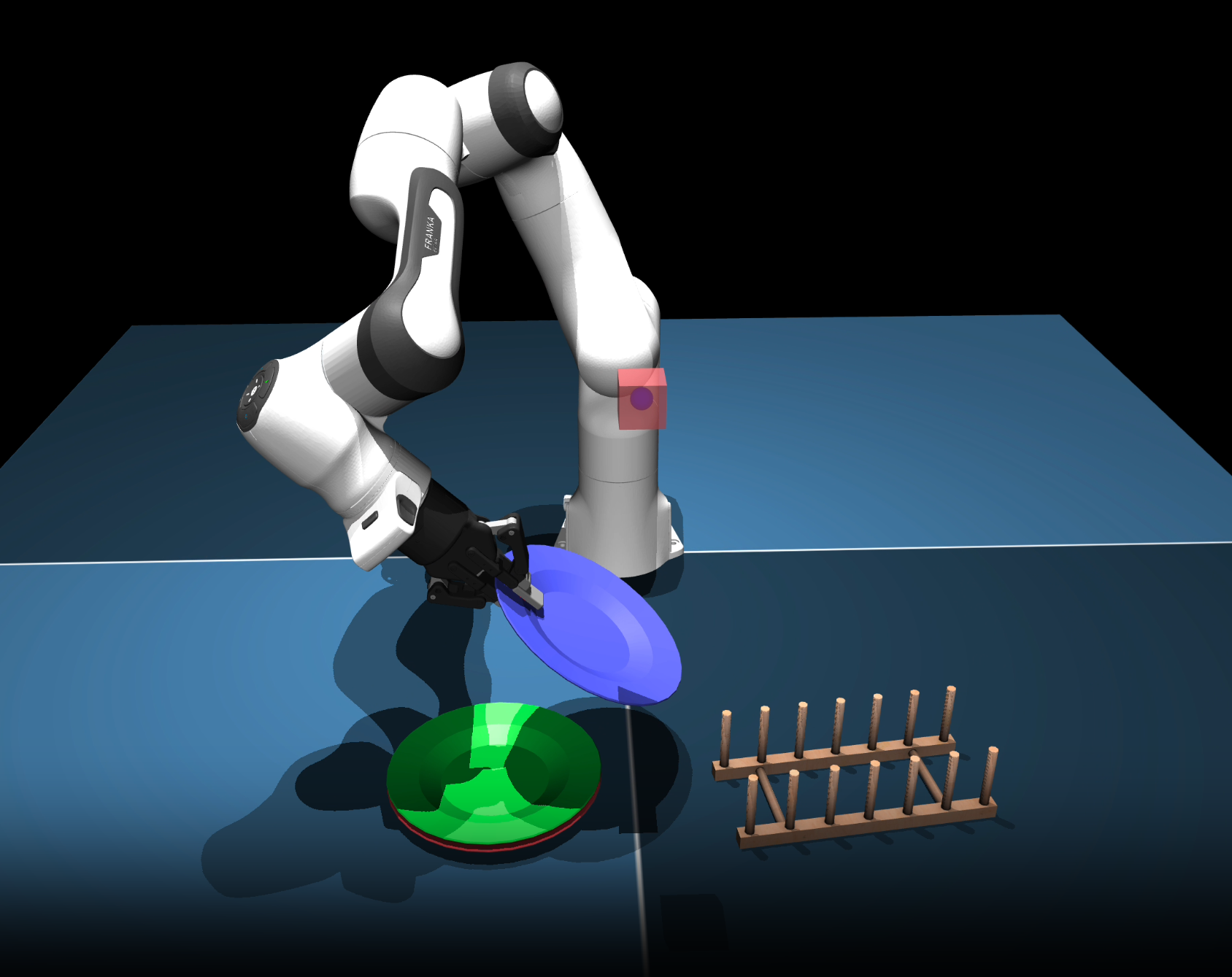}
    }
  \end{minipage}
  \hfill
  \begin{minipage}{0.24\textwidth}
    \vspace{5pt}
    \centering
    \subcaptionbox{\centering Dishwasher Open \label{fig:dishwasher-open}}{%
      \includegraphics[width=\linewidth]{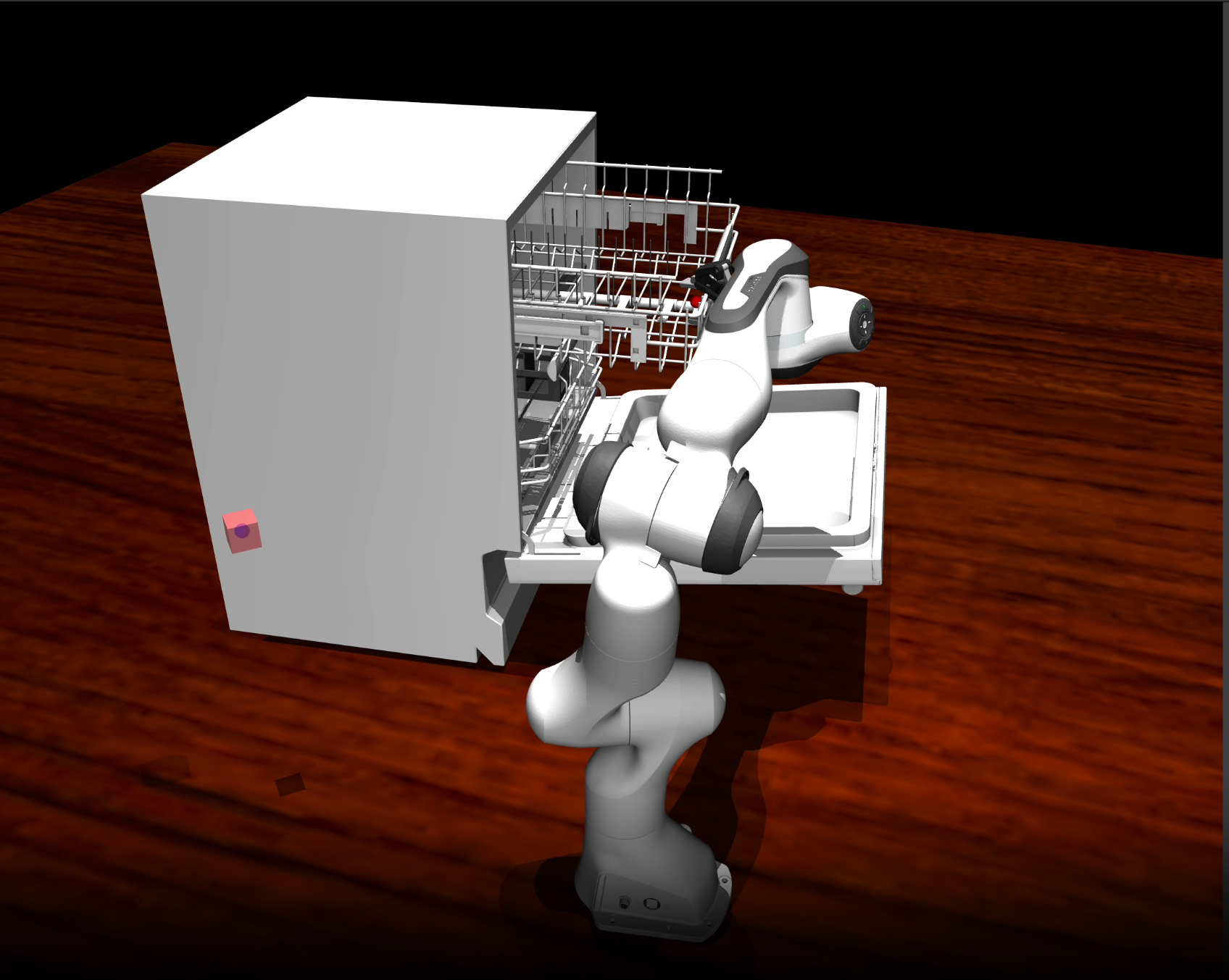}
    }
  \end{minipage}
    \begin{minipage}{0.24\textwidth}
    \vspace{5pt}
    \centering
    \subcaptionbox{\centering Mug Hang \label{fig:mug-hang}}{%
      \includegraphics[width=\linewidth]{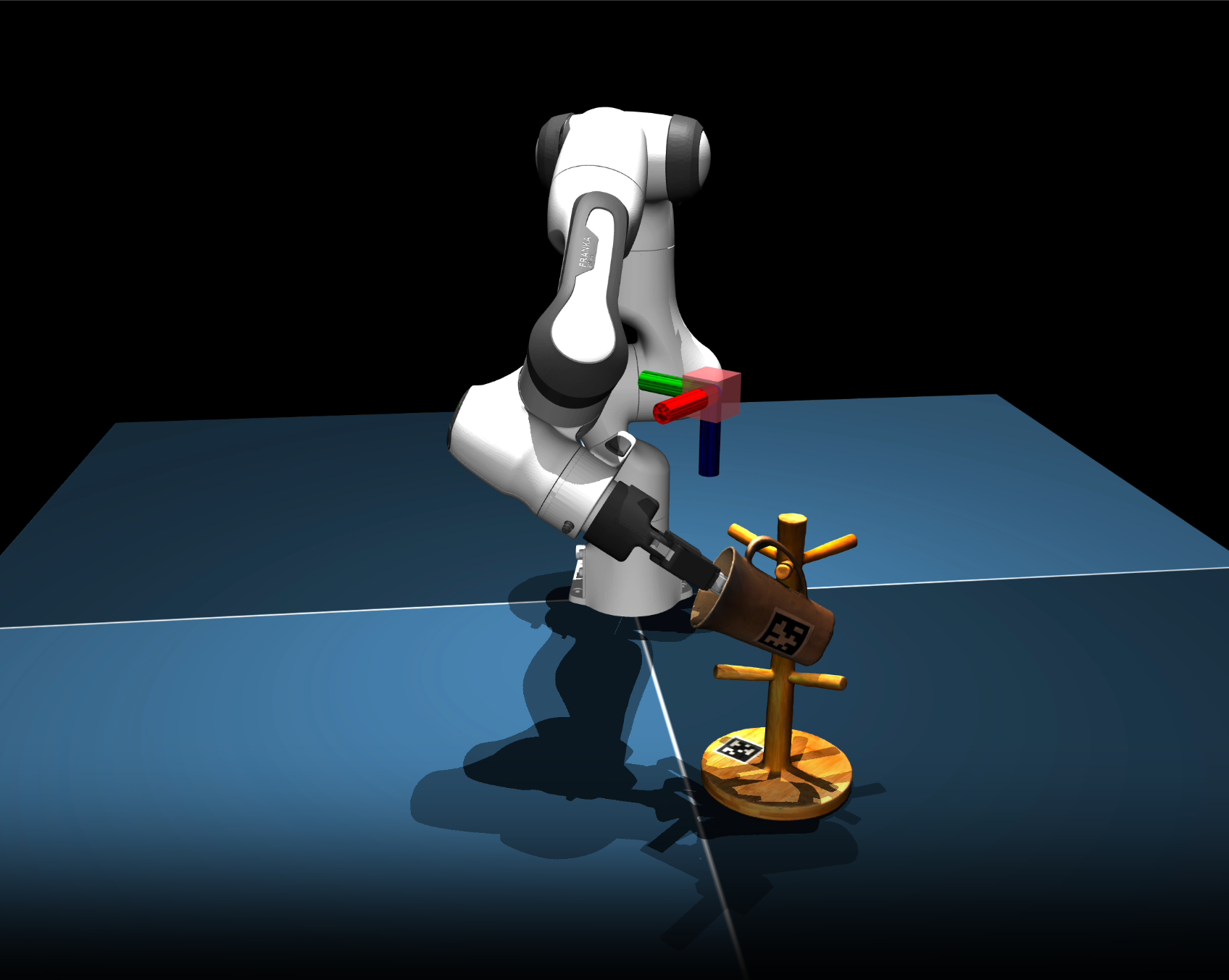}
    }
  \end{minipage}
  \hfill
    \begin{minipage}{0.24\textwidth}
    \vspace{5pt}
    \centering
    \subcaptionbox{\centering Block Stack \label{fig:block-stack}}{%
      \includegraphics[width=\linewidth]{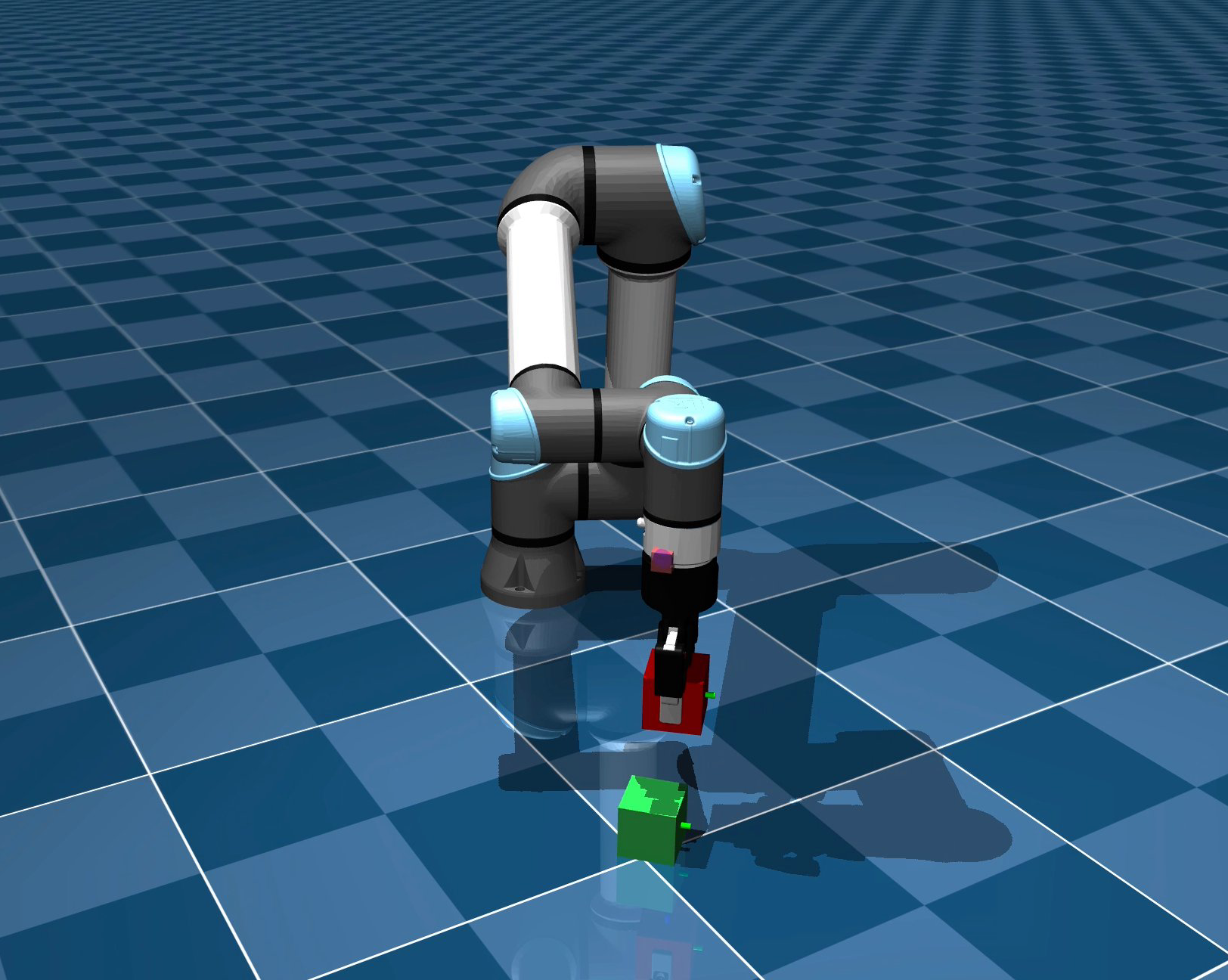}
    }
  \end{minipage}
\caption{Six tasks used for behavior cloning.}
\label{fig:bc-tasks}
\end{figure}

\subsubsection{Nominal Training Configuration} As a nominal configuration, we use VAE as a generative model with MLP network with observation size 10 and action chunk size 10, with privileged simulation states as inputs, using absolute joint as action space.

\subsubsection{Ablation Training Configurations} We present ablation results across dataset size (Figure \ref{fig:appendix-bc-dataset-size}), policy architectures (Figure \ref{fig:appendix-bc-architecture}), action chunk size (Figure \ref{fig:appendix-chunking}), action representation (Figure \ref{fig:appendix-action}), and control frequency (Figure \ref{fig:appendix-frequency}). Across all ablations, we observe a similar preference for compliant and overdamped gain regimes.

\subsubsection{Scaling Law}
Beyond absolute performance, the choice of controller gains also affects how efficiently policies improve with additional data. As shown in Fig.~\ref{fig:bc-scaling}, compliant and overdamped gains exhibit steeper scaling with dataset size, implying that data collection efforts yield greater returns in this regime. For practitioners with limited demonstration budgets, this makes gain selection a critical lever for maximizing policy performance.

\subsubsection{TPR Fidelity Validation} \label{sec:appendix-tpr-validation}

To quantify how faithfully Torque-to-Position Retargeting (TPR) preserves the original demonstration trajectories, we retarget a motion-planned Block Stacking trajectory to four representative gain configurations spanning the gain grid corners and evaluate at varying decimation rates (from $1\times$ at 500\,Hz down to $50\times$ at 10\,Hz). For each setting, we measure: (1) task success rate across 100 rollouts, and (2) joint-position MSE between the retargeted and original state trajectories.

As shown in Fig.~\ref{fig:tpr-validation}, both metrics remain robust up to $25\times$ decimation (20\,Hz): success rates stay ${\geq}90\%$ and joint-position MSE remains below $10^{-3}$ across all four gain configurations. Beyond $25\times$ decimation, success degrades for contact-rich phases of the task, as the zeroth-order hold assumption becomes less accurate when contact dynamics dominate between update steps. These results confirm that TPR produces near-identical state trajectories across gain settings at the policy frequencies used in our experiments (50\,Hz), validating the controlled comparison in Section~\ref{sec:exp-bc}.

\begin{figure}[h]
    \centering
    \includegraphics[width=1.0\columnwidth]{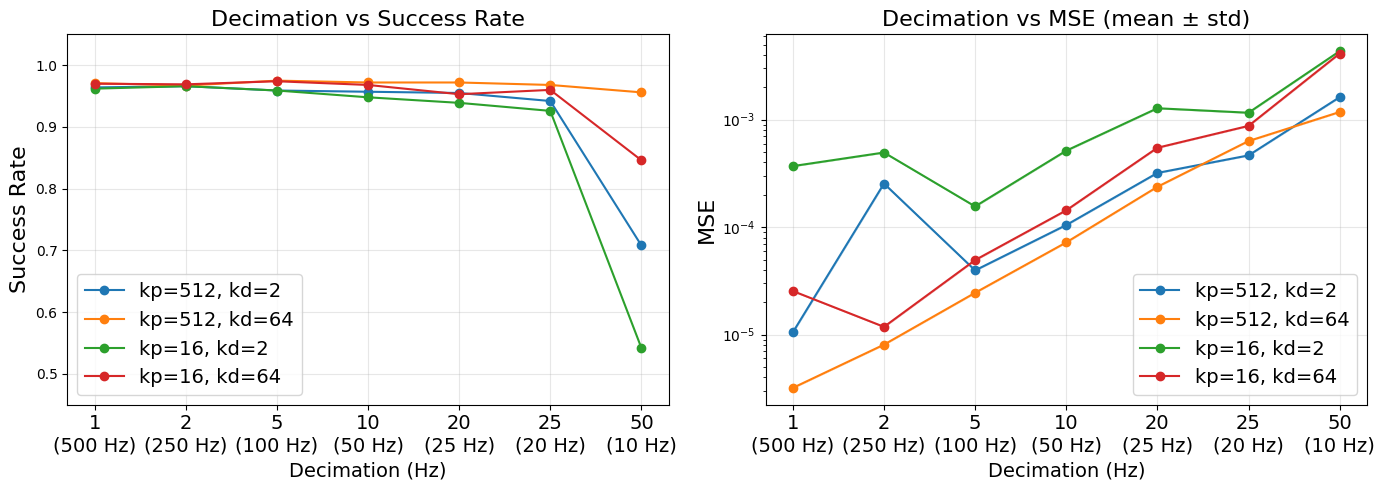}
    \caption{\textbf{TPR fidelity across gain configurations and decimation rates.} Success rate and joint-position MSE remain robust (${\geq}90\%$ success, MSE ${<}10^{-3}$) up to $25\times$ decimation (20\,Hz) for all gain settings.}
    \label{fig:tpr-validation}
\end{figure}

\subsubsection{Extension to Task-Space Position Control} \label{sec:appendix-tpr-taskspace}

While the TPR formulation in Section~\ref{sec:exp-bc} addresses joint-space position control, many manipulation systems instead use operational space control (OSC)~\cite{khatib2003unified} with SE(3) end-effector pose targets. OSC computes joint torques through a task-space impedance law:
\begin{equation}
    \boldsymbol{\tau} = \mathbf{J}^\top \mathbf{M}_x \left( \mathbf{K}_p \tilde{\mathbf{x}} - \mathbf{K}_d \dot{\mathbf{x}} \right) + \boldsymbol{\tau}_{\text{null}},
\end{equation}
where $\tilde{\mathbf{x}}$ is the pose error (position and orientation), $\dot{\mathbf{x}}$ is the task-space velocity, and $\mathbf{M}_x$ is the task-space inertia matrix. The gains $\mathbf{K}_p, \mathbf{K}_d \in \mathbb{R}^{6 \times 6}$ now operate in Cartesian space, separately controlling translational and rotational compliance.

TPR extends naturally to this setting. We collect demonstrations using torque control and record the task-space wrench $\mathbf{F}(t) = \mathbf{M}_x(\mathbf{K}_p \tilde{\mathbf{x}} - \mathbf{K}_d \dot{\mathbf{x}})$ along with the end-effector pose $\mathbf{x}(t)$ and velocity $\dot{\mathbf{x}}(t)$. Retargeting to a new gain configuration $(\mathbf{K}_p', \mathbf{K}_d')$ then yields:
\begin{equation}
    \mathbf{x}_{\text{des}}(t) = \mathbf{x}(t) + \mathbf{K}_p'^{-1}\left(\mathbf{F}(t) + \mathbf{K}_d' \dot{\mathbf{x}}(t)\right),
\end{equation}
where the orientation component is handled by converting the resulting axis-angle error to a quaternion displacement.

\subsubsection{Statistical Significance Analysis} \label{sec:appendix-bc-stats}

We provide a formal statistical analysis to verify that the compliant-overdamped gain region $\mathcal{G}^{\text{CO}}$ significantly outperforms its complement $\mathcal{G} \setminus \mathcal{G}^{\text{CO}}$ across all six BC tasks. For each task and gain cell, we evaluate $N{=}100$ closed-loop rollouts and record the binary success outcome.

\myparagraph{Logistic Regression.} We fit a binomial logistic regression with $\log_2 \mathbf{K}_\text{p}$ and $\log_2 \mathbf{K}_\text{d}$ as predictors. Across all tasks, the coefficient $\beta_{\mathbf{K}_\text{p}}$ is consistently negative and $\beta_{\mathbf{K}_\text{d}}$ is consistently positive (Table~\ref{tab:statistical_analysis}), confirming that lower stiffness and higher damping are significant predictors of success.

\myparagraph{Barnard's Exact Test.} We apply one-sided Barnard's exact tests with Bonferroni correction ($\alpha_{\text{adj}} \approx 0.0083$, correcting for 6 tasks) under the null hypothesis:
\begin{equation}
\mathcal{H}_0 \colon P(\text{success} \mid \mathcal{G}^{\text{CO}}) \leq P(\text{success} \mid \mathcal{G} \setminus \mathcal{G}^{\text{CO}})
\end{equation}
As shown in Table~\ref{tab:statistical_analysis}, $\mathcal{H}_0$ is rejected for every task with $p \ll \alpha_{\text{adj}}$, providing strong evidence that the compliant-overdamped regime yields significantly higher BC performance.

\begin{table}[h]
\centering
\caption{
Statistical analysis of BC results. Success rates for compliant-overdamped ($\mathcal{G}^{\text{CO}}$) vs.\ other gain regions, logistic regression coefficients on $\log_2 \mathbf{K}_\text{p}$ and $\log_2 \mathbf{K}_\text{d}$, and Bonferroni-corrected one-sided Barnard's exact test $p$-values. $\mathcal{H}_0$ is rejected in all cases.
}
\vspace{-5pt}
\label{tab:statistical_analysis}
\footnotesize
\setlength{\tabcolsep}{2pt}
\begin{tabular}{@{}l cc cc c@{}}
\toprule
& \multicolumn{2}{c}{\textbf{Success Rate}} & \multicolumn{2}{c}{\textbf{Logistic Reg.}} & \textbf{Barnard's} \\
\cmidrule(lr){2-3} \cmidrule(lr){4-5} \cmidrule(lr){6-6}
\textbf{Task} & $\mathcal{G}^{\text{CO}}$ & $\mathcal{G} \setminus \mathcal{G}^{\text{CO}}$ & $\beta_{\mathbf{K}_{\text{p}}}$ & $\beta_{\mathbf{K}_{\text{d}}}$ & $p$-value \\
\midrule
Dishwasher Opening & \textbf{70.1\%} & 45.6\% & $-0.074$ & $+0.277$ & $1.6 \times 10^{-51}$ \\
Bimanual Handover & \textbf{35.1\%} & 16.5\% & $-0.173$ & $+0.225$ & $6.2 \times 10^{-35}$ \\
Mug Hanging & \textbf{62.4\%} & 40.9\% & $-0.146$ & $+0.155$ & $1.4 \times 10^{-40}$ \\
Dishrack Unloading & \textbf{12.7\%} & 6.9\% & $-0.144$ & $+0.159$ & $1.2 \times 10^{-5}$ \\
Dishrack Loading & \textbf{28.2\%} & 14.8\% & $-0.169$ & $+0.217$ & $2.3 \times 10^{-19}$ \\
Block Stacking & \textbf{85.1\%} & 39.0\% & $-0.265$ & $+0.429$ & $6.9 \times 10^{-231}$ \\
\midrule
\textit{Pooled} & \multicolumn{2}{c}{---} & $-0.160$ & $+0.220$ & --- \\
\bottomrule
\end{tabular}
\end{table}

\subsection{User Study} \label{sec:appendix-user-study}
\subsubsection{Task Description}
The non-prehensile box manipulation task used in the user study is shown in Figure~\ref{fig:user-study-task}. For each trial, users teleoperate a Franka Research 3 Robot with a SpaceMouse in order to push the box from an initial pose to the goal (Figure \ref{fig:user-study-goal}). The box is always initialized to the left and off-axis relative to the goal (Figure \ref{fig:user-study-during}), but the precise pose is random. The goal pose is fixed in every trial.
\begin{figure}[t]
    \centering
  \begin{minipage}{0.24\textwidth}
    \vspace{5pt}
    \centering
    \subcaptionbox{\centering Task In-Progress\label{fig:user-study-during}}{%
      \includegraphics[width=\linewidth]{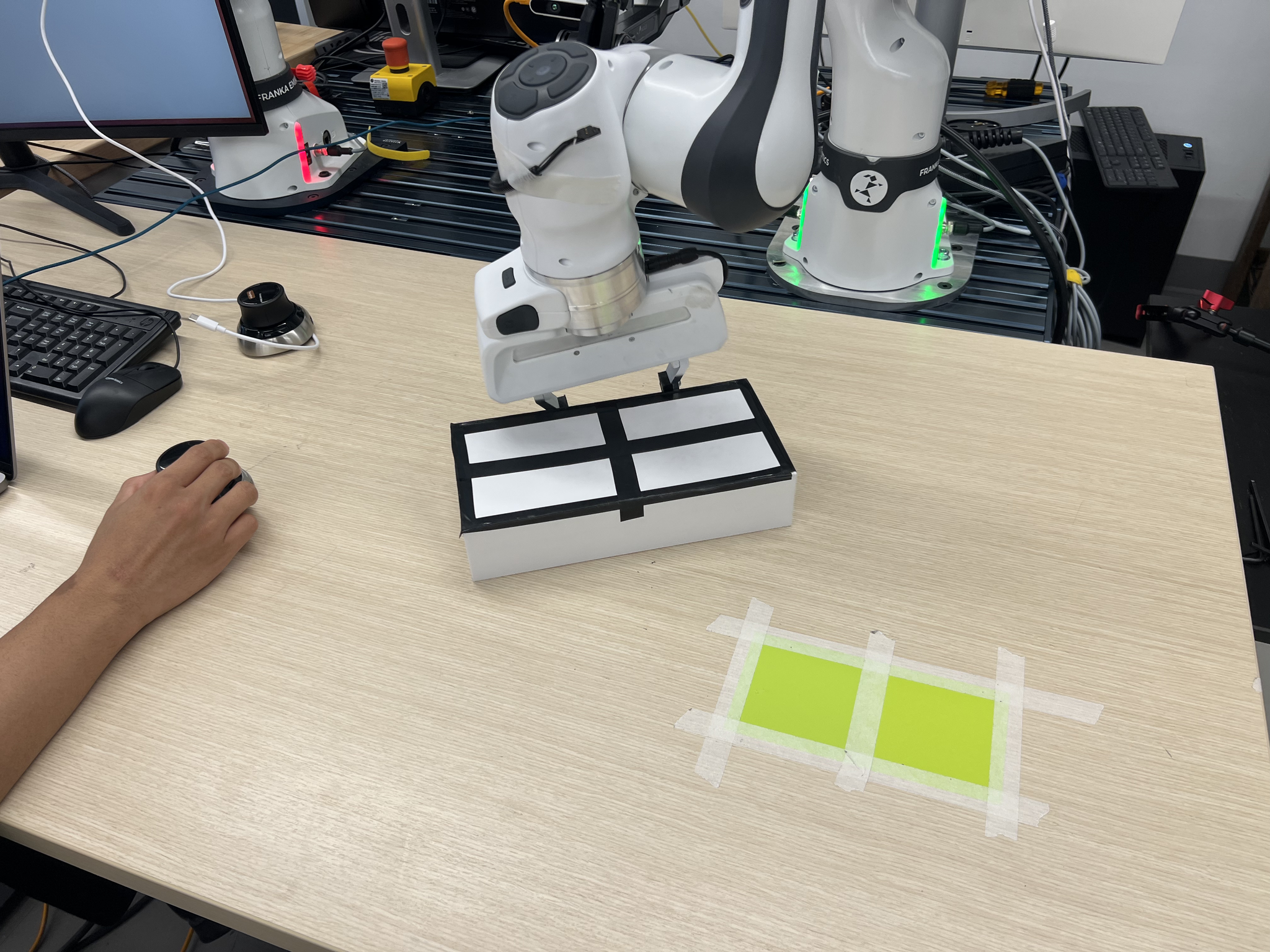}
    }
  \end{minipage}
  \hfill
  \begin{minipage}{0.24\textwidth}
    \vspace{5pt}
    \centering
    \subcaptionbox{\centering Task Complete\label{fig:user-study-goal}}{%
      \includegraphics[width=\linewidth]{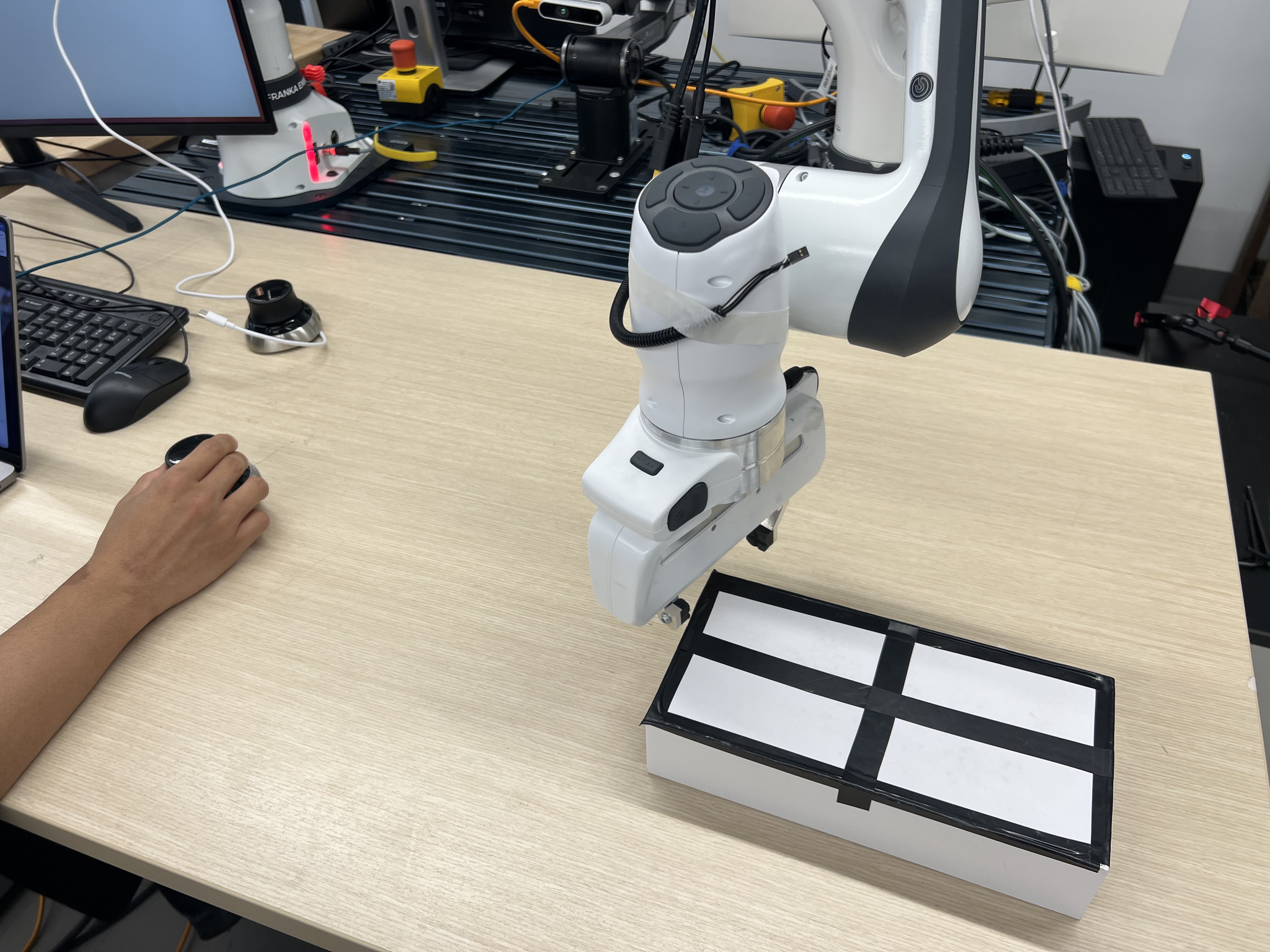}
    }
  \end{minipage}
\caption{\textbf{Non-prehensile box manipulation task for the user study.} A single trial of the task involves teleoperating the robot from a reset pose to make contact with the box, then pushing the box towards the goal. The task is complete when the green square is completely occluded by the box (b).}
\label{fig:user-study-task}
\end{figure}

\subsubsection{Experimental Design and Results}
As described in Section~\ref{sec:exp-bc}, the study collected 1,297 trials from 12 users over 1-hour sessions with randomized, blind gain presentation. The subjective rating is on a scale from 1--5, where 1 means the gain setting provides a completely unintuitive interface and 5 means a completely intuitive interface. Users complete the survey in Figure~\ref{fig:user_study_survey} after each trial. 

\begin{figure}[t]
    \centering
    \includegraphics[width=0.9\linewidth]{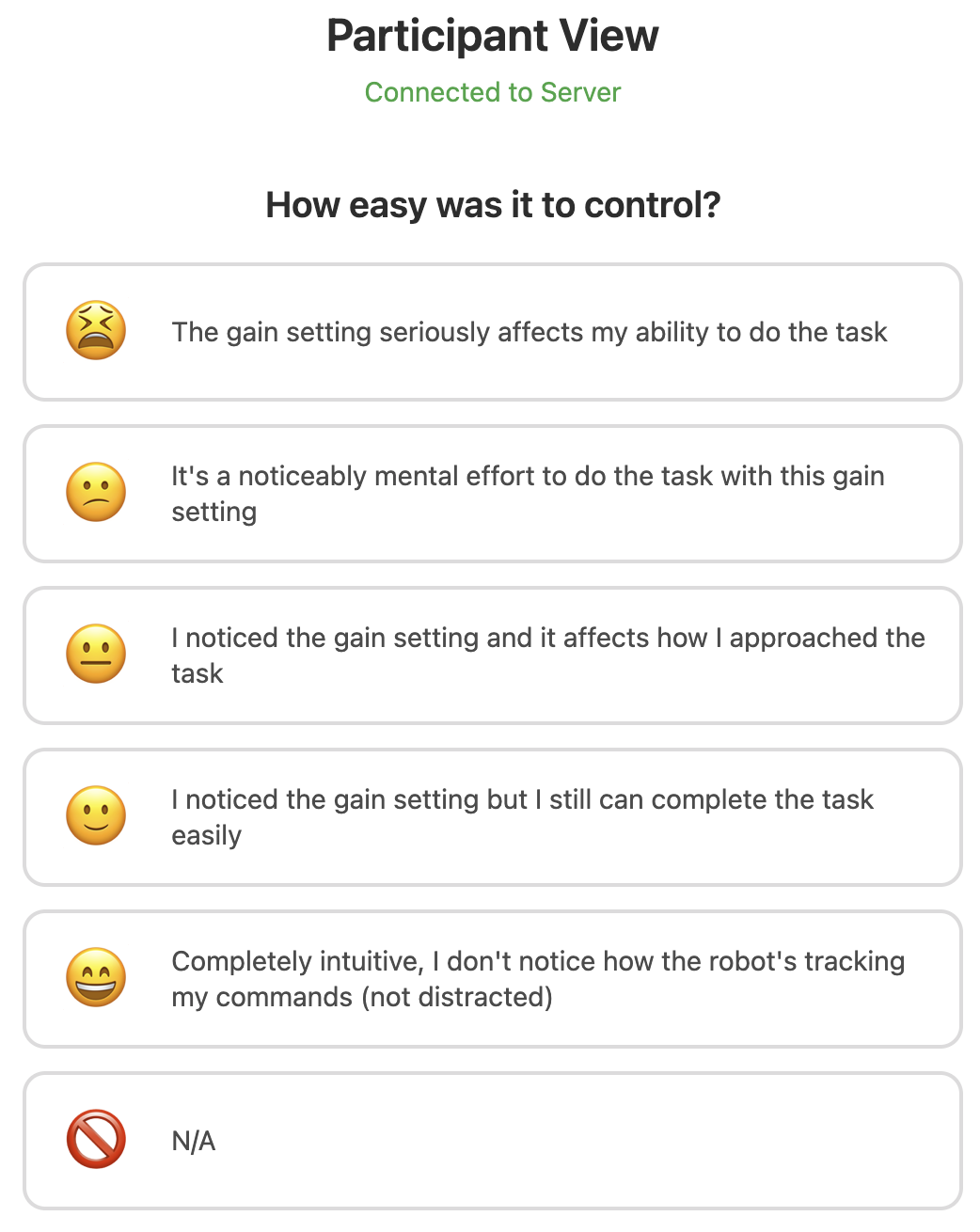}
    \caption{\textbf{User study survey.} After each trial, users complete the survey to rate their subjective experience teleoperating with a given gain setting.}
    \label{fig:user_study_survey}
\end{figure}

\subsection{Reinforcement Learning} \label{sec:rl-appendix}
\subsubsection{Task Descriptions}
The five tasks we study are: FR3 Joint-Reach, FR3 EE-Reach, FR3 Lift Cube, FR3 Open Drawer, and Unitree G1 Track Velocity (Figure \ref{fig:rl-tasks}). Each task is derived from the IsaacLab \cite{nvidia2025isaaclabgpuacceleratedsimulation} template environments. 

\begin{figure}[h]
    \centering
  \begin{minipage}{0.24\textwidth}
    \vspace{5pt}
    \centering
    \subcaptionbox{\centering FR3 Joint-Reach\label{fig:joint-reach}}{%
      \includegraphics[width=\linewidth]{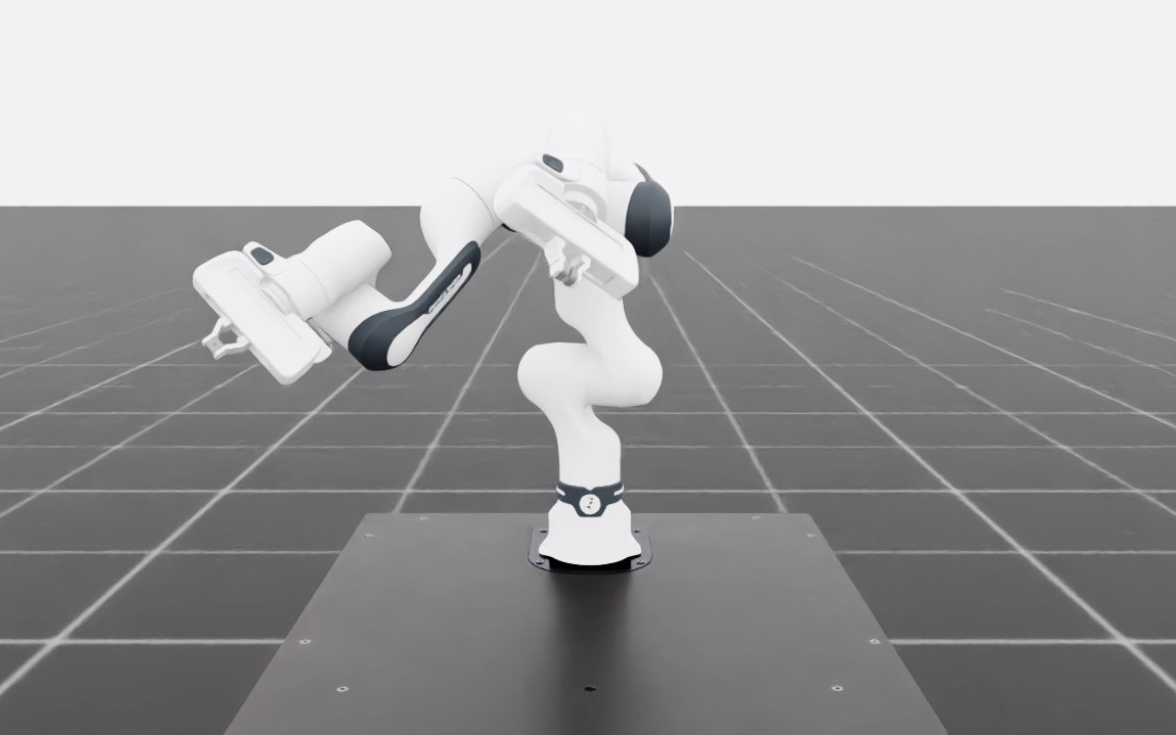}
    }
  \end{minipage}
  \hfill
  \begin{minipage}{0.24\textwidth}
    \vspace{5pt}
    \centering
    \subcaptionbox{\centering FR3 EE-Reach\label{fig:ee-reach}}{%
      \includegraphics[width=\linewidth]{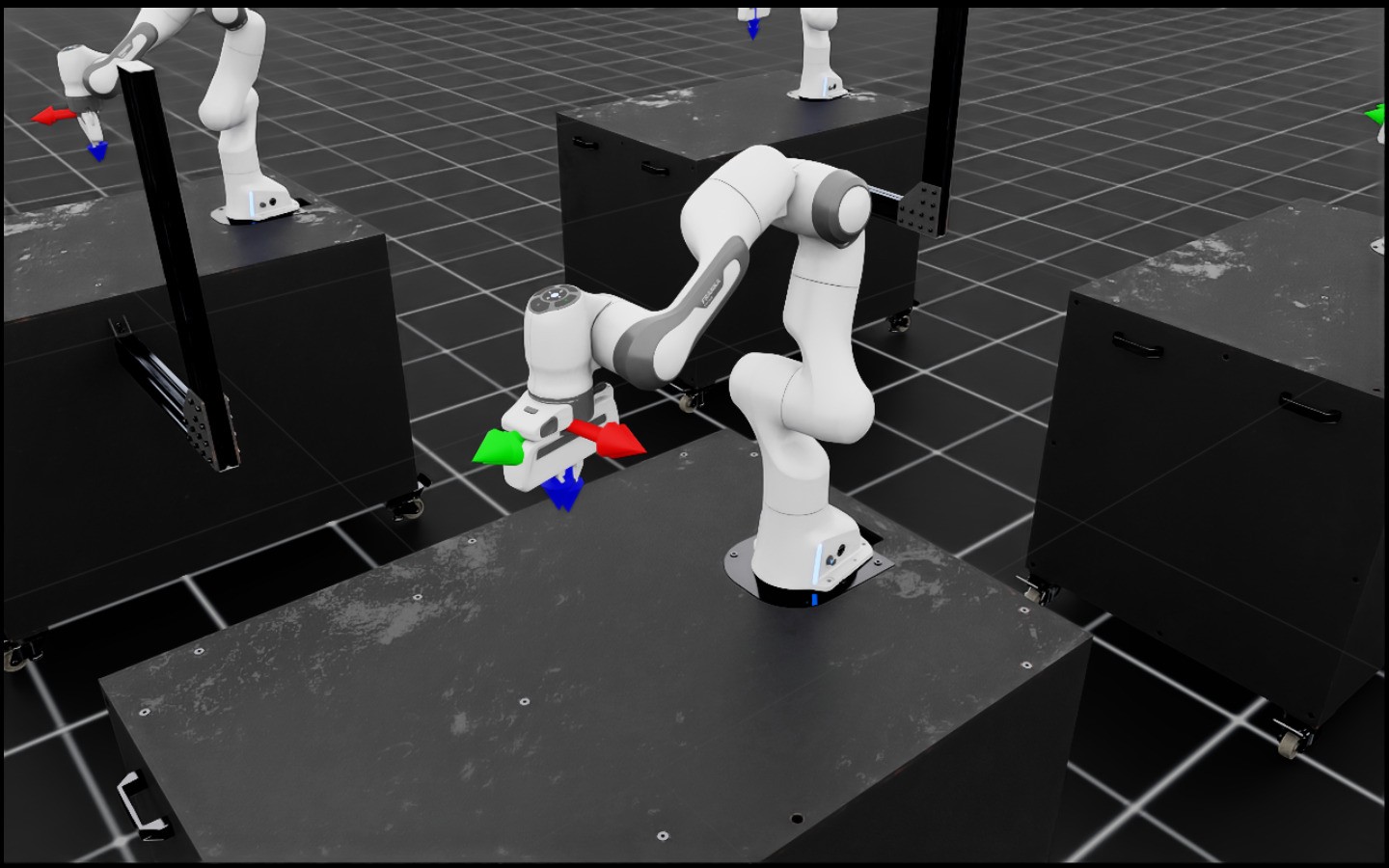}
    }
  \end{minipage}
  \begin{minipage}{0.24\textwidth}
    \vspace{5pt}
    \centering
    \subcaptionbox{\centering FR3 Lift Cube\label{fig:lift-cube}}{%
      \includegraphics[width=\linewidth]{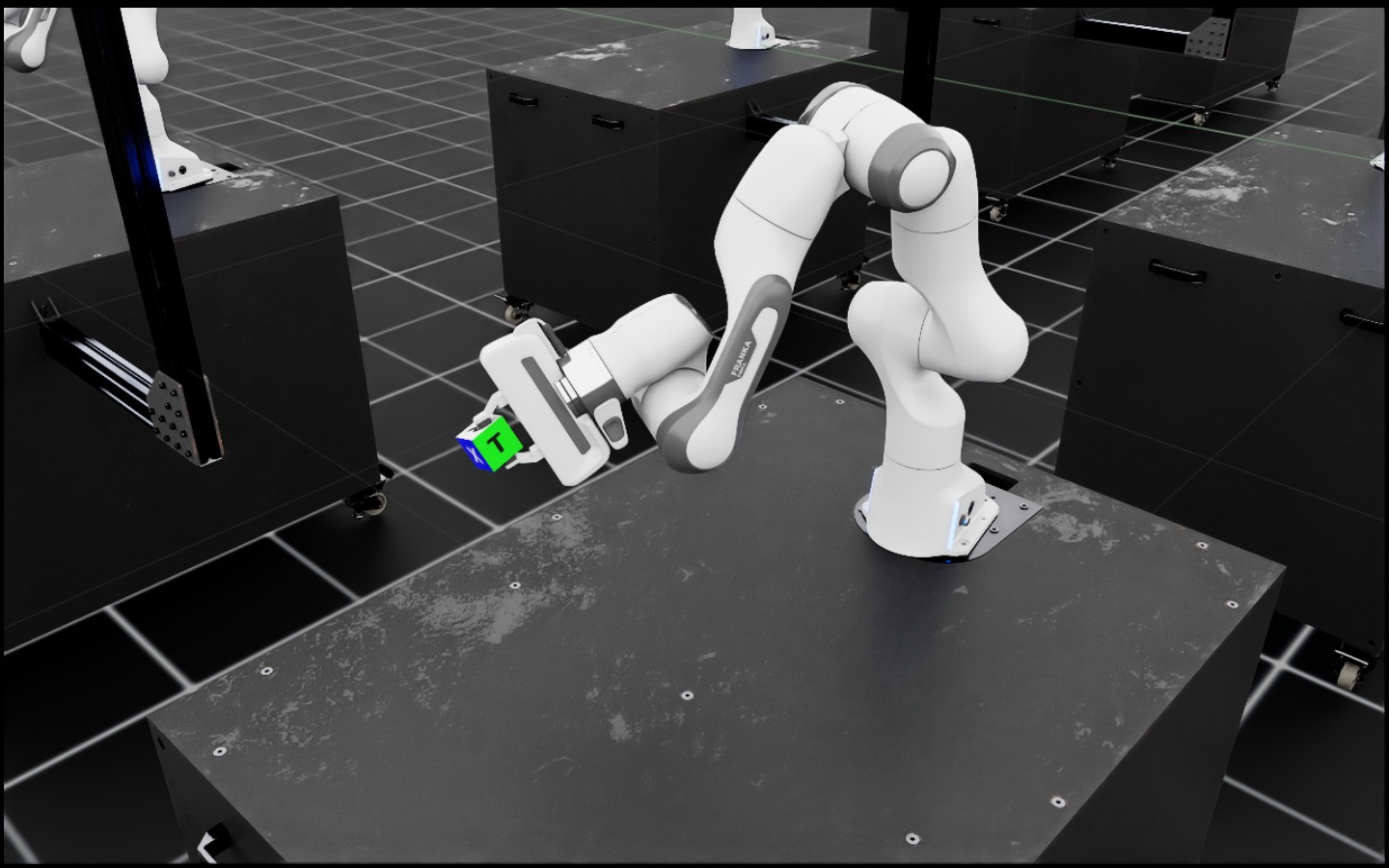}
    }
  \end{minipage}
  \hfill
  \begin{minipage}{0.24\textwidth}
    \vspace{5pt}
    \centering
    \subcaptionbox{\centering FR3 Open Drawer \label{fig:open-Drawer}}{%
      \includegraphics[width=\linewidth]{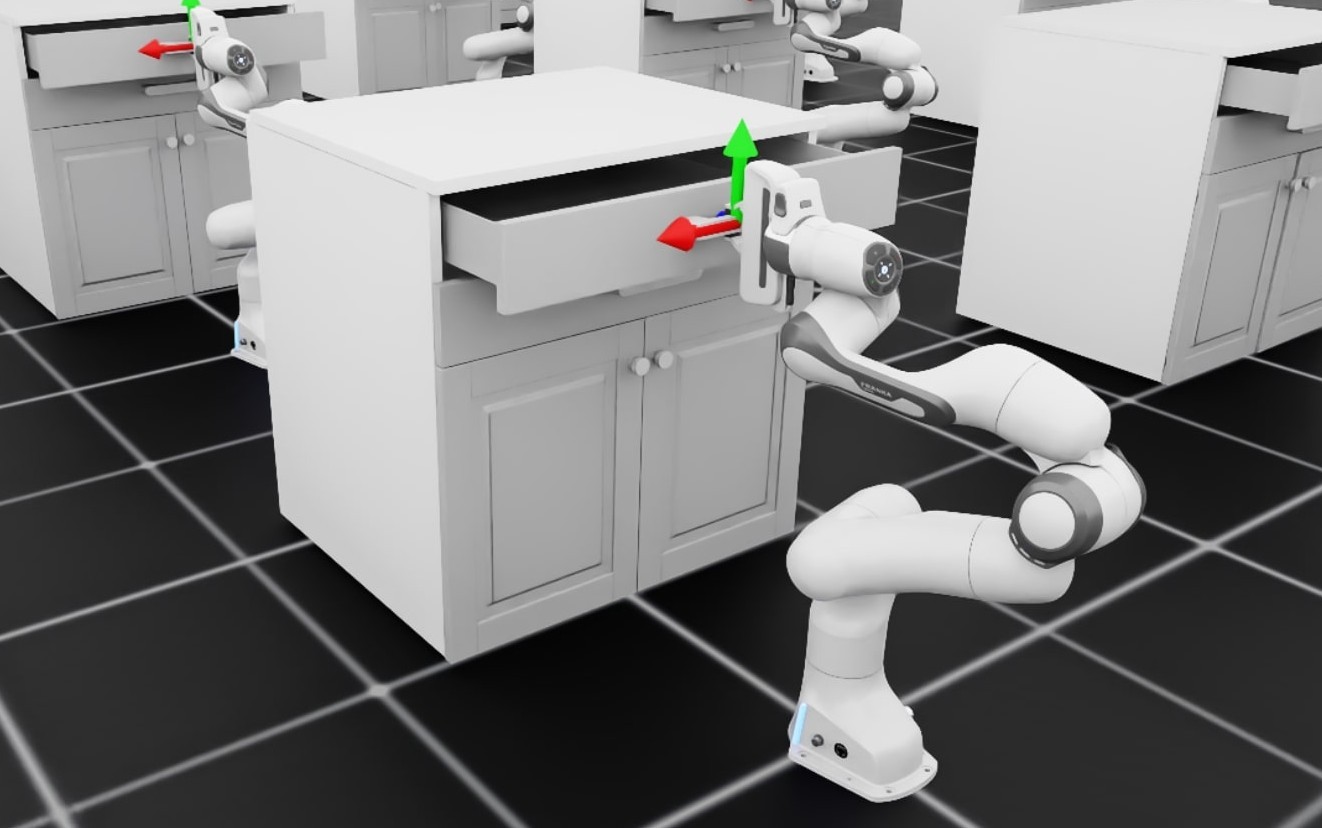}
    }
  \end{minipage}
    \begin{minipage}{0.24\textwidth}
    \vspace{5pt}
    \centering
    \subcaptionbox{\centering Unitree G1 Track Velocity \label{fig:g1}}{%
      \includegraphics[width=\linewidth]{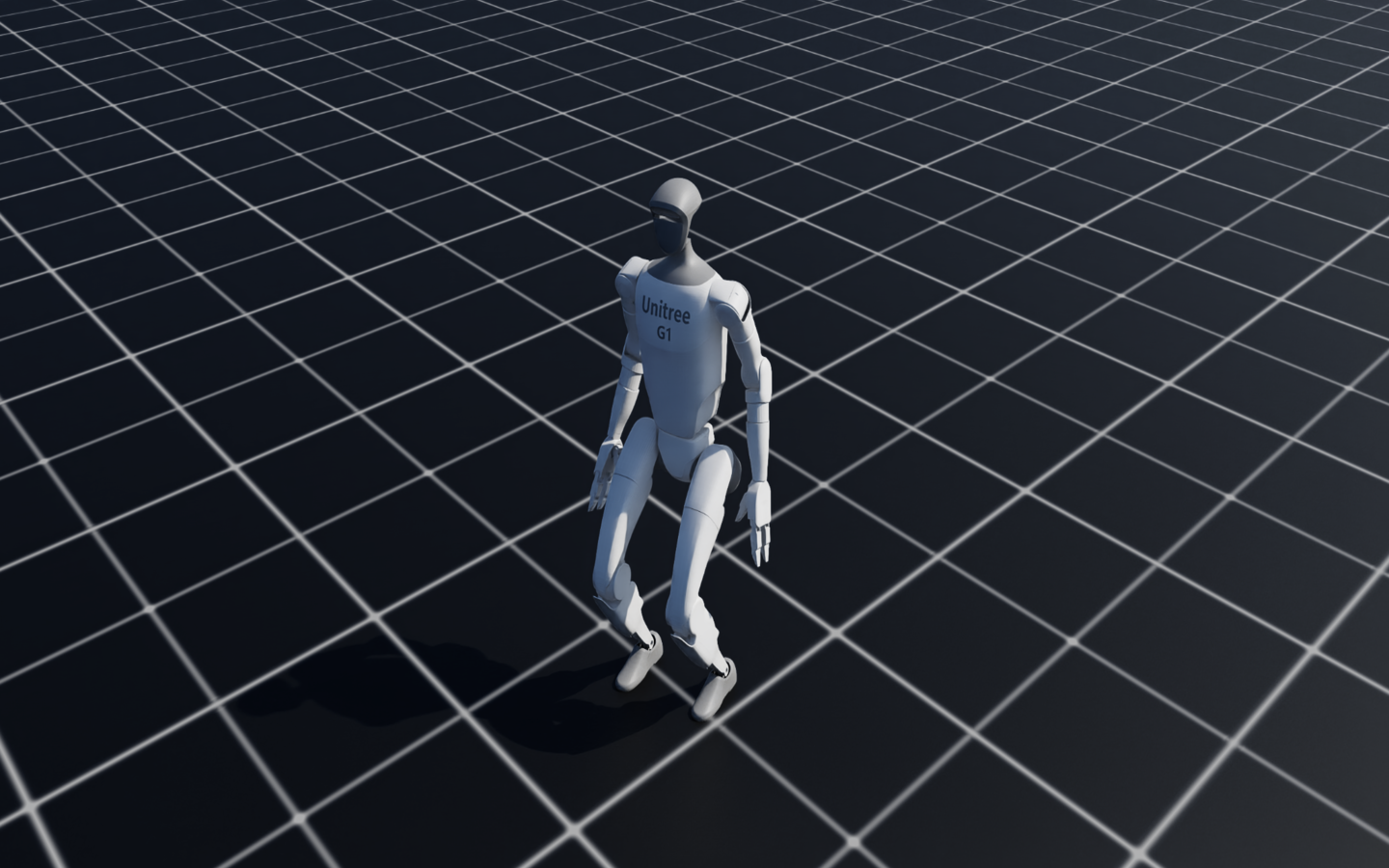}
    }
  \end{minipage}
  \hfill
\caption{\textbf{Five tasks for online RL solution existence proof.} For each task, we trained a successful policy for 8+ gain configurations spanning the range of stiff / compliant, overdamped / underdamped.}
\label{fig:rl-tasks}
\end{figure}

\subsubsection{Action Representations} 

\begin{figure}[H]
\centering
\resizebox{\columnwidth}{!}{%
\begin{tikzpicture}[
    node distance=0.4cm and 0.5cm,
    block/.style={draw, rounded corners, minimum height=0.7cm, minimum width=1.0cm, align=center, font=\small},
    bigblock/.style={draw, rounded corners, minimum height=0.9cm, minimum width=2.2cm, align=center, font=\small},
    op/.style={draw, circle, inner sep=1pt, font=\small},
    arr/.style={-{Stealth[length=2mm]}, thick},
    lbl/.style={font=\footnotesize, text=black!60},
]

\node[block] (pi) {$\pi_\theta(\mathbf{s}_t)$};
\node[op, right=0.5cm of pi] (mul) {$\times$};
\node[op, right=0.4cm of mul] (sum) {$+$};
\node[bigblock, right=0.5cm of sum] (pd) {$\mathbf{K}_p(\mathbf{q}_{\text{des}} {-} \mathbf{q}) + \mathbf{K}_d(\dot{\mathbf{q}}_{\text{des}} {-} \dot{\mathbf{q}})$};
\node[block, right=0.5cm of pd] (robot) {Robot};

\node[above=0.3cm of mul, font=\small] (alpha) {$\boldsymbol{\alpha}$};
\node[below=0.45cm of sum, font=\small] (ref) {$\mathbf{q}_{\text{ref}}$};

\draw[arr] (pi) -- (mul);
\draw[arr] (alpha) -- (mul);
\draw[arr] (mul) -- (sum);
\draw[arr] (ref) -- (sum);
\draw[arr] (sum) -- node[above, lbl] {$\mathbf{q}_{\text{des}}$} (pd);
\draw[arr] (pd) -- node[above, lbl] {$\boldsymbol{\tau}$} (robot);

\draw[arr] (robot.south) -- ++(0,-1.0) -| node[below, near start, lbl] {$\mathbf{s}_t = (\mathbf{q}, \dot{\mathbf{q}}, \ldots)$} (pi.south);

\end{tikzpicture}%
}
\caption{\textbf{Action representation.} The policy output is scaled by a per-joint-group vector $\boldsymbol{\alpha}$ and added to a reference position $\mathbf{q}_{\text{ref}}$ to produce the position target $\mathbf{q}_{\text{des}}$ sent to the PD controller.}
\label{fig:action-rep}
\end{figure}
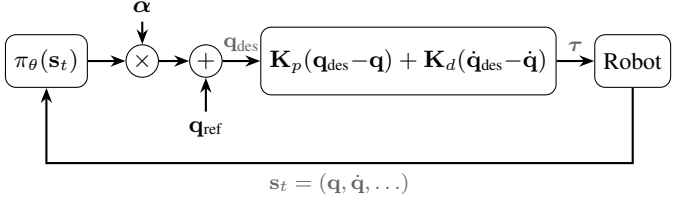

For all tasks, the position target sent to the low-level PD controller at each timestep is:
\begin{align}
\mathbf{q}_{\text{des}}(t) &= \boldsymbol{\alpha} \odot \pi_\theta(\mathbf{s}_t) + \mathbf{q}_{\text{ref}}(t) \\
\boldsymbol{\alpha} &= [\underbrace{\alpha_1, \ldots, \alpha_1}_{\mathcal{G}_1},\; \underbrace{\alpha_2, \ldots, \alpha_2}_{\mathcal{G}_2}] \nonumber
\end{align}
where $\mathbf{q}_{\text{ref}}(t)$ is an offset equal to either the current joint position $\mathbf{q}(t)$ or the default joint position $\mathbf{q}_0$, depending on the task. Joints are partitioned into two groups, $\mathcal{G}_1$ and $\mathcal{G}_2$, with shared scale factors $\alpha_1$ and $\alpha_2$ respectively (Table~\ref{tab:rl-action}). Both scale factors are tuned via computational hyperparameter optimization~\cite{optuna_2019} to adapt the action space to each gain setting.

\begin{table}[H]
\centering
\caption{Action representation across RL tasks.}
\label{tab:rl-action}
\footnotesize
\setlength{\tabcolsep}{3pt}
\begin{tabular}{l c l l c}
\toprule
\textbf{Task} & $\mathbf{q}_{\text{ref}}(t)$ & $\mathcal{G}_1$ & $\mathcal{G}_2$ & Gripper \\
\midrule
FR3 Joint-Reach  & $\mathbf{q}(t)$  & $q_{0\text{--}3}$ (elbow) & $q_{4\text{--}6}$ (wrist) & -- \\
FR3 EE-Reach     & $\mathbf{q}(t)$  & $q_{0\text{--}3}$ (elbow) & $q_{4\text{--}6}$ (wrist) & -- \\
FR3 Lift Cube    & $\mathbf{q}(t)$  & $q_{0\text{--}3}$ (elbow) & $q_{4\text{--}6}$ (wrist) & binary \\
FR3 Open Drawer  & $\mathbf{q}(t)$  & $q_{0\text{--}3}$ (elbow) & $q_{4\text{--}6}$ (wrist) & binary \\
G1 Locomotion    & $\mathbf{q}_0$  & $q_{0\text{--}13} $ (lower) &  $q_{13\text{--}36}$ (upper)          & -- \\
\bottomrule
\end{tabular}
\end{table}

For tasks with a gripper (Table~\ref{tab:rl-action}), the policy outputs an
additional continuous value that is thresholded at zero, commanding the fingers
to either fully opened (0.04\,m) or fully closed (0.0\,m). The gripper joint
gains are held fixed across all experiments.

\subsubsection{Success Criteria}
For each policy trained during hyperparameter optimization, we record the success rate across 100 simulated trials according to the success metrics in Table \ref{tab:rl-success}. We evaluate the best (highest reward) checkpoint for each policy.
\begin{table}[H]
\centering
\caption{Success criteria for each RL task.}
\label{tab:rl-success}
\footnotesize
\setlength{\tabcolsep}{4pt}
\begin{tabular}{l l l}
\toprule
\textbf{Task} & \textbf{Criterion} & \textbf{Threshold} \\
\midrule
FR3 Joint-Reach & $\|\mathbf{q} - \mathbf{q}_{\text{goal}}\| < \epsilon$ & $\epsilon = 0.1$ rad \\[4pt]
FR3 EE-Reach    & $\|\mathbf{p} - \mathbf{p}_{\text{goal}}\| < \epsilon_p,$ & $\epsilon_p = 0.02$ m \\
                & $\|\Delta\theta\| < \epsilon_r$ & $\epsilon_r = 0.1$ rad \\[4pt]
FR3 Lift Cube   & $\|\mathbf{p}_{\text{obj}} - \mathbf{p}_{\text{goal}}\| < \epsilon$ & $\epsilon = $ 0.05m \\[4pt]
FR3 Open Drawer & $d_{\text{drawer}} > d_{\text{min}}$ & $d_{\text{min}} = 0.2$ \\[4pt]
G1 Locomotion   & $\|\dot{\mathbf{q}}\| / \|\dot{\mathbf{q}}_{\text{goal}}\| > \rho$ & $\rho = 0.4$ \\
\bottomrule
\end{tabular}
\end{table}
In order to adapt the environment to new gain settings, we leverage computational hyperparameter optimization with Optuna to tune the action scales and, for the FR3 EE-Reach task, the reward term weights. We use the TPE optimizer with default hyperparameters, where the objective function is the task success rate.
\subsubsection{PPO Hyperparameters}
We use largely the same PPO hyperparameters as the IsaacLab \cite{nvidia2025isaaclabgpuacceleratedsimulation} template environments. Hyperparameters, including any changes we made, are reproduced here (Table \ref{tab:PPO-shared} and Table \ref{tab:PPO-differ}).
\begin{table}[H]
\centering
\caption{PPO hyperparameters shared across all tasks.}
\label{tab:PPO-shared}
\footnotesize
\begin{tabular}{l c}
\toprule
\textbf{Hyperparameter} & \textbf{Value} \\
\midrule
Algorithm & PPO (SKRL) \\
Discount factor $\gamma$ & $0.99$ \\
GAE $\lambda$ & $0.95$ \\
Learning epochs & $5$ \\
Clip range (ratio) & $0.2$ \\
Clip range (value) & $0.2$ \\
Grad norm clip & $1.0$ \\
LR scheduler & KL-Adaptive \\
Activation & ELU \\
Seed & $42$ \\
\bottomrule
\end{tabular}
\end{table}

\begin{table}[H]
\centering
\caption{PPO hyperparameters that vary across tasks.}
\label{tab:PPO-differ}
\footnotesize
\setlength{\tabcolsep}{3pt}
\begin{tabular}{l c c c c}
\toprule
\textbf{Hyperparameter} & \textbf{Reach} & \textbf{Lift} & \textbf{Drawer} & \textbf{G1} \\
\midrule
Network layers          & [64,64]       & [256,128,64] & [256,128,64] & [256,128,128] \\
Learning rate           & $1\text{e-}3$  & $1\text{e-}3$  & $5\text{e-}4$  & $1\text{e-}3$ \\
KL threshold            & $0.01$         & $0.01$         & $0.008$        & $0.01$ \\
Rollouts                & $24$           & $24$           & $96$           & $24$ \\
Mini-batches            & $4$            & $4$            & $96$           & $4$ \\
Entropy coeff.          & $0.01$         & $0.01$         & $0.001$        & $0.008$ \\
Value loss coeff.       & $1.0$          & $1.0$          & $2.0$          & $1.0$ \\
Min log std             & $-3.0$         & $-3.0$         & $-20.0$        & $-20.0$ \\
State preprocess      & RSS            & RSS            & --             & -- \\
Value preprocess      & RSS            & RSS            & --             & -- \\
Timesteps               & $24$k          & $48$k          & $38.4$k        & $12$k \\
\bottomrule
\\[-6pt]
\multicolumn{5}{l}{\scriptsize RSS = RunningStandardScaler.}
\end{tabular}
\end{table}

\subsection{Sim2Real}
\label{app:sim2real}

\subsubsection{System Identification Data Collection}

For each gain configuration $(\mathbf{K}_p, \mathbf{K}_d)$, the real robot
executes a sinusoidal reference trajectory
$\mathbf{q}_{\text{des}}(t) = \mathbf{q}_0 + 0.1 \sin(\pi t / 50)$
applied uniformly across all joints for 4 seconds.
During execution, we log joint positions $\mathbf{q}$, joint velocities
$\dot{\mathbf{q}}$, and desired positions $\mathbf{q}_{\text{des}}$ at 50\,Hz. The low-level torque controller on the real robot runs at 1\,kHz.

To match the real-world setup, the IsaacLab simulation environment updates
position commands at 50\,Hz with a physics simulation rate of 100\,Hz. We use 100\,Hz rather than 1\,kHz
physics to keep RL training times tractable. We note that this fidelity gap
between the real robot's 1\,kHz control loop and the simulator's 100\,Hz
physics rate contributes to the sim-to-real discrepancy that system
identification aims to minimize.

\subsubsection{System Identification Procedure}

For each gain configuration, we use CMA-ES~\cite{nomura2024cmaessimplepractical} to
optimize simulation parameters per-actuator $\psi$ (Table \ref{tab:sysid-sim-params}) to minimize the discrepancy
between real and simulated response trajectories.

\begin{table}[H]
\centering
\caption{System identification parameter bounds. Parameters are optimized per-actuator.}
\label{tab:sysid-sim-params}
\footnotesize
\setlength{\tabcolsep}{4pt}
\begin{tabular}{l c c}
\toprule
\textbf{Parameter} & \textbf{Lower} & \textbf{Upper} \\
\midrule
Stiffness $K_p$       & $1$ & $1024$ \\
Damping $K_d$         & $1$ & $1024$ \\
Armature              & $0$ & $0.5$ \\
Static friction       & $0.01$ & $1.0$ \\
Dynamic friction ratio & $0$ & $1.0$ \\
Viscous friction      & $0$ & $1.0$ \\
\bottomrule
\end{tabular}
\end{table}

The objective function is the sum of spectral MSE losses for
joint positions and velocities:
\begin{equation}
\mathcal{L}(\psi) =
\mathcal{L}_{\text{spec}}(\mathbf{q}^{\text{real}},\, \mathbf{q}^{\text{sim}}(\psi))
+ \mathcal{L}_{\text{spec}}(\dot{\mathbf{q}}^{\text{real}},\, \dot{\mathbf{q}}^{\text{sim}}(\psi))
\end{equation}
where $\mathcal{L}_{\text{spec}}$ computes the mean squared error
between the discrete Fourier transforms of the
simulated and real trajectories. Matching in the frequency domain
encourages the optimizer to capture oscillatory behavior and
damping characteristics.

CMA-ES runs for 200 iterations with an initial step size of
$\sigma = 3.0$, independently for all 49 gain configurations. We visualize the system identification result against the real-world trajectory for four gain settings in Figure \ref{fig:sysid-traj-result}.

\begin{figure*}[h]
    \centering
  \begin{minipage}{0.48\textwidth}
    \vspace{5pt}
    \centering
    \subcaptionbox{\centering Compliant, Overdamped ($K_p=16$, $K_d=24$)\label{fig:sysid-CO}}{%
      \includegraphics[width=\linewidth]{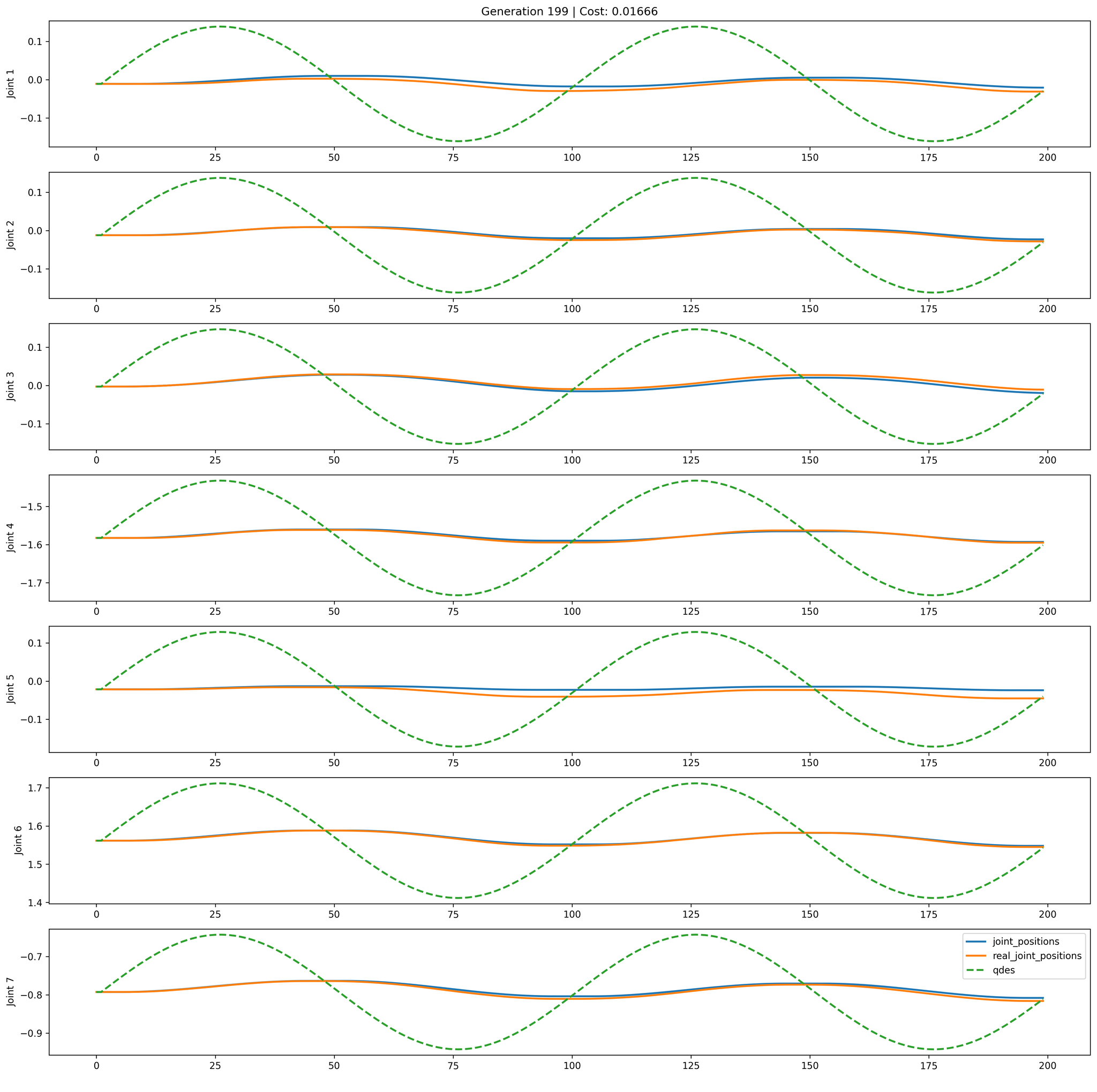}
    }
  \end{minipage}
  \hfill
  \begin{minipage}{0.48\textwidth}
    \vspace{5pt}
    \centering
    \subcaptionbox{\centering Stiff, Overdamped ($K_p=512$, $K_d=24$)\label{fig:sysid-SO}}{%
      \includegraphics[width=\linewidth]{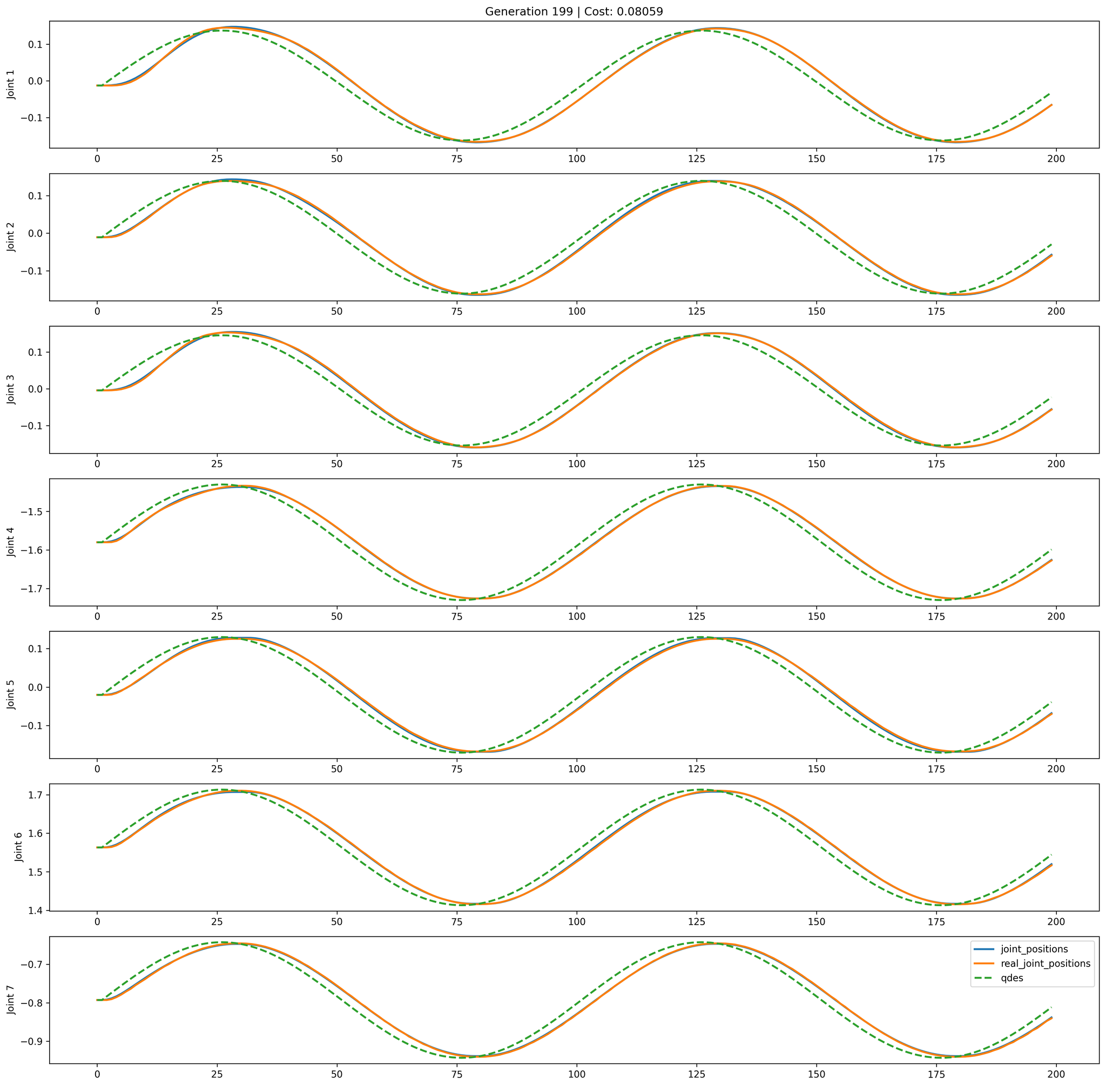}
    }
  \end{minipage}
  \begin{minipage}{0.48\textwidth}
    \vspace{5pt}
    \centering
    \subcaptionbox{\centering Compliant, Underdamped ($K_p=16$, $K_d=2$)\label{fig:sysid-CU}}{%
      \includegraphics[width=\linewidth]{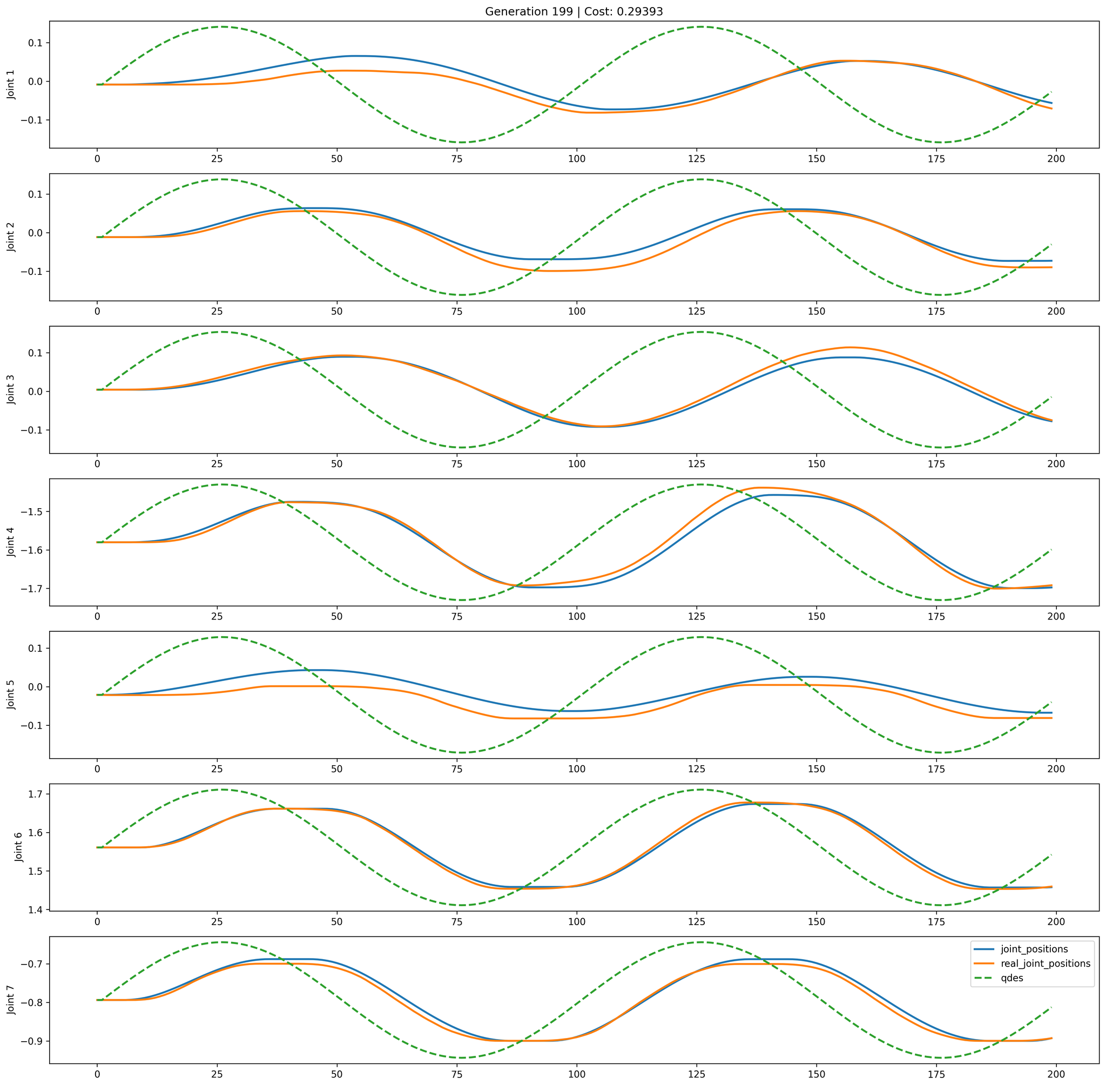}
    }
  \end{minipage}
  \hfill
  \begin{minipage}{0.48\textwidth}
    \vspace{5pt}
    \centering
    \subcaptionbox{\centering Stiff, Underdamped ($K_p=512$, $K_d=2$) \label{fig:sysid-SU}}{%
      \includegraphics[width=\linewidth]{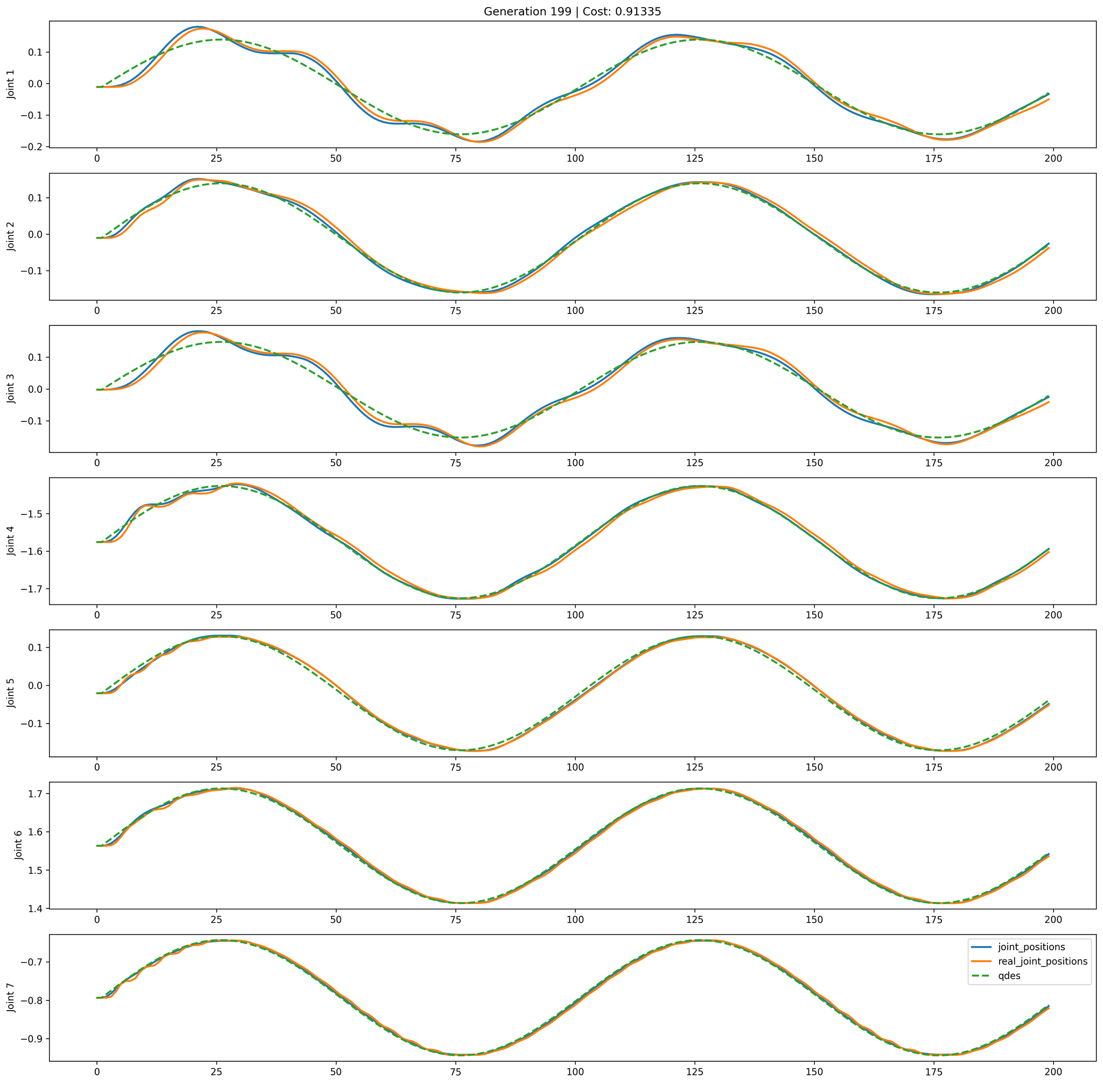}
    }
  \end{minipage}
  \hfill
\caption{\textbf{System identification result for sample gain settings in each gain regime.} We show commanded positions (green), real-world achieved positions (orange), and simulation positions (blue) achieved with the optimal actuator parameters.}
\label{fig:sysid-traj-result}
\end{figure*}

\subsubsection{Training Deployable Policies}

We train deployable FR3 Joint-Reach and FR3 EE-Reach policies. To discover policies that respect the
real robot's limits, we modify the outer-loop Optuna objective to a
two-stage formulation that always prefers constraint-satisfying
configurations over violating ones:
\begin{equation}
\mathcal{J} =
\begin{cases}
1 + r_{\text{success}} & \text{if all } v_c \leq \bar{v}_c \\[4pt]
r_{\text{success}} \displaystyle\prod_{c \in \mathcal{C}} \phi_c
& \text{otherwise}
\end{cases}
\end{equation}
where $v_c$ and $\bar{v}_c$ are the violation rate and allowed
threshold for each constraint
$c \in \mathcal{C} = \{$position, velocity, torque, torque rate$\}$,
and the penalty terms are:
\begin{equation}
\phi_c =
\begin{cases}
1 & \text{if } v_c \leq \bar{v}_c \\
\max\bigl(0,\; 1 - (v_c - \bar{v}_c)\bigr) & \text{otherwise}
\end{cases}
\end{equation}
Since $\mathcal{J} \in [1, 2]$ when all constraints are met and
$\mathcal{J} \in [0, 1)$ otherwise, feasible configurations are
always ranked above infeasible ones. Within each regime, higher
success rate is favored.

We set $\bar{v}_c = 0$ for position, velocity, and torque
constraints, requiring zero violations. For torque rate, we allow
$\bar{v}_c = 0.2$, since the real robot enforces torque rate
limiting at 1\,kHz as a hardware safety layer; as long as the
learned policy does not rely on frequent high-rate torque switching,
occasional violations in simulation are acceptable.

  \subsection{Real-World Deployment}

We deploy learned policies on a Franka FR3 robot using the
\texttt{aiofranka}\cite{aiofranka} library, which provides an asynchronous interface
for real-time torque control. The deployment system consists of two nested control loops:

\noindent \textbf{Inner loop (1\,kHz).} A joint-space impedance
    controller computes torques as:
    \begin{equation}
        \boldsymbol{\tau} = \mathbf{K}_p (\mathbf{q}_{\text{des}} - \mathbf{q})
        - \mathbf{K}_d \dot{\mathbf{q}} + \tau_{\text{ff}}
    \end{equation}
    where $\mathbf{q}_{des} \in \mathbb{R}^7$ is the commanded joint
    position, $\mathbf{q}, \dot{\mathbf{q}} \in \mathbb{R}^7$ are
    the current joint positions and velocities, and
    $\mathbf{K}_p, \mathbf{K}_d \in \mathbb{R}^{7 \times 7}$ are
    diagonal stiffness and damping gain matrices. Before commanding
    the robot, torques are clamped to the torque limits, and
    torque rates are limited to
    $|\dot{\tau}_i| \leq 990$\,Nm/s.

\noindent \textbf{Outer loop (50\,Hz).} The learned policy outputs
    actions at 50\,Hz, which are converted to position setpoints
    $\mathbf{q}_{des}$ for the inner impedance controller.  

\subsection{Policy Frequency Ablation}
We perform an ablation across policy frequency (10Hz, 20Hz, and 100Hz, in addition to the nominal policy frequency of 50Hz). These experiments vary the amount of time that position commands are zero-order-held before a new position command is issued. We re-use the 50Hz system identification parameters across all policy frequency ablations (the physics simulation rate and real-world controller rate remain the same regardless of the policy frequency). We re-train policies for the new policy frequencies and roll them out on the real robot in the same way. The only change is that the outer loop outputs actions at the required frequency of the new policy.

\begin{figure*}[h]
  \centering
  
  \begin{minipage}[t]{0.33\textwidth}
    \vspace{0pt}
    \centering
    \subcaptionbox{\centering 50 Trajectories\label{fig:dataset50-appendix}}{%
      \includegraphics[width=\linewidth]{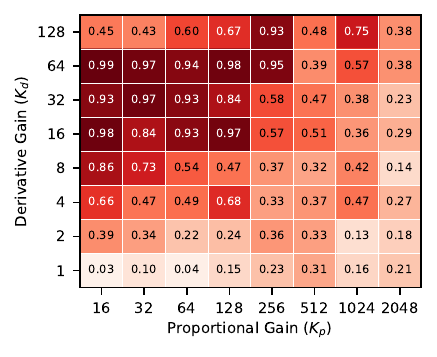}
    }
  \end{minipage}%
  \hfill%
  \begin{minipage}[t]{0.33\textwidth}
    \vspace{0pt}
    \centering
    \subcaptionbox{\centering 100 Trajectories\label{fig:dataset100-appendix}}{%
      \includegraphics[width=\linewidth]{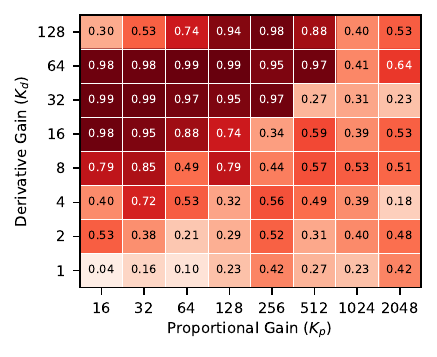}
    }
  \end{minipage}%
  \hfill%
  \begin{minipage}[t]{0.33\textwidth}
    \vspace{0pt}
    \centering
    \subcaptionbox{\centering 900 Trajectories \label{fig:dataset900-appendix}}{%
      \includegraphics[width=\linewidth]{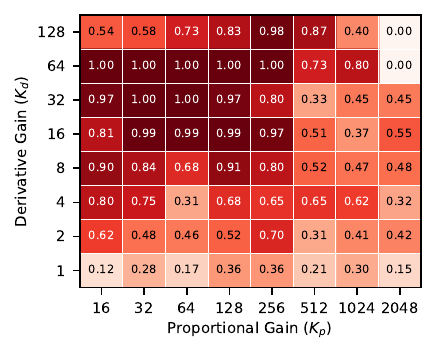}
    }
  \end{minipage}%
  \caption{\textbf{Behavior cloning performance across dataset size.} Success rate per gain setting for the Block Stack task. The preference for compliant and overdamped gain settings is maintained across dataset sizes (a-c).}
  \label{fig:appendix-bc-dataset-size}
  \vspace{-5pt}
\end{figure*}

\begin{figure*}[h]
  \centering
  
  \begin{minipage}[t]{0.33\textwidth}
    \vspace{0pt}
    \centering
    \subcaptionbox{\centering Regression\label{fig:regression-appendix}}{%
      \includegraphics[width=\linewidth]{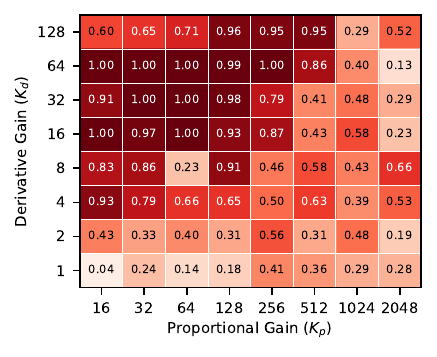}
    }
  \end{minipage}%
  \hfill%
  \begin{minipage}[t]{0.33\textwidth}
    \vspace{0pt}
    \centering
    \subcaptionbox{\centering VAE\label{fig:vae-appendix}}{%
      \includegraphics[width=\linewidth]{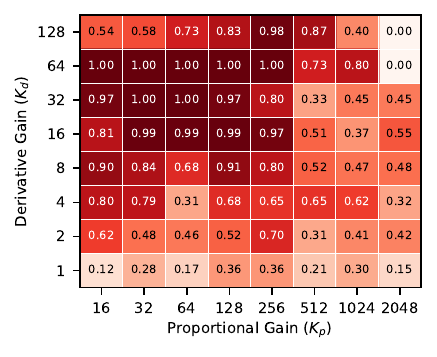}
    }
  \end{minipage}%
  \hfill%
  \begin{minipage}[t]{0.33\textwidth}
    \vspace{0pt}
    \centering
    \subcaptionbox{\centering Diffusion Policy \label{fig:diffusion-appendix}}{%
      \includegraphics[width=\linewidth]{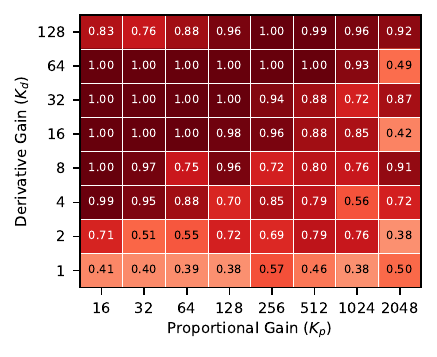}
    }
  \end{minipage}%
  \caption{\textbf{Behavior cloning performance across policy architectures.} Success rate per gain setting for the Block Stack task. The preference for compliant and overdamped gain settings is maintained across policy architectures (a-c).}
  \label{fig:appendix-bc-architecture}
  \vspace{-5pt}
\end{figure*}

\begin{figure*}[h]
  \raggedright
  
  \begin{minipage}[t]{0.33\textwidth}
    \vspace{0pt}
    \centering
    \subcaptionbox{\centering No Action Chunking\label{fig:nochunk-appendix}}{%
      \includegraphics[width=\linewidth]{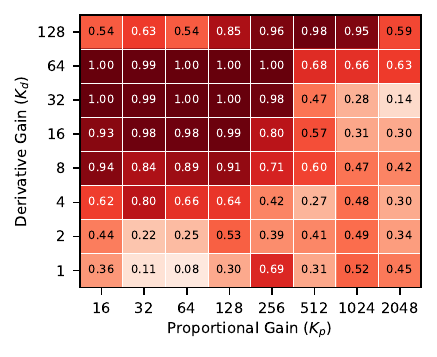}
    }
  \end{minipage}%
  \hspace{2em}%
  \begin{minipage}[t]{0.33\textwidth}
    \vspace{0pt}
    \centering
    \subcaptionbox{\centering Action Chunk Size 10\label{fig:chunk-appendix}}{%
      \includegraphics[width=\linewidth]{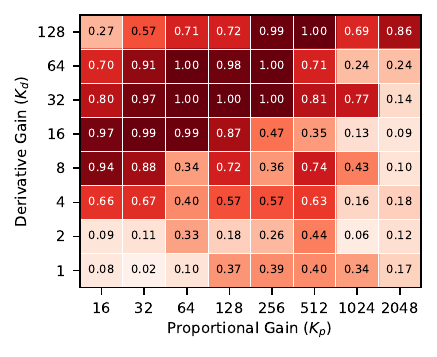}
    }
  \end{minipage}%
 
  \caption{\textbf{Behavior cloning performance across action chunk size.} Success rate per gain setting for the Block Stack task. The preference for compliant and overdamped gain settings is observed when predicting both single actions (a) and action chunks (b).}
  \label{fig:appendix-chunking}
  \vspace{-5pt}
\end{figure*}

\begin{figure*}[h]
  \raggedright
  
  \begin{minipage}[t]{0.33\textwidth}
    \vspace{0pt}
    \centering
    \subcaptionbox{\centering Absolute Joint Action\label{fig:abs-appendix}}{%
      \includegraphics[width=\linewidth]{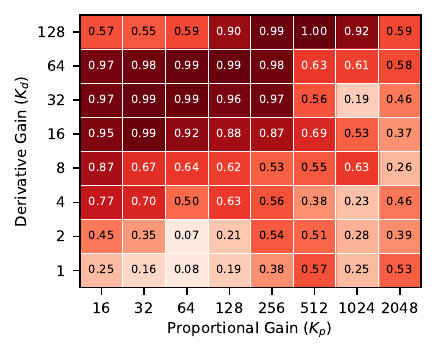}
    }
  \end{minipage}%
  \hspace{2em}%
  \begin{minipage}[t]{0.33\textwidth}
    \vspace{0pt}
    \centering
    \subcaptionbox{\centering Delta Joint Action\label{fig:delta-appendix}}{%
      \includegraphics[width=\linewidth]{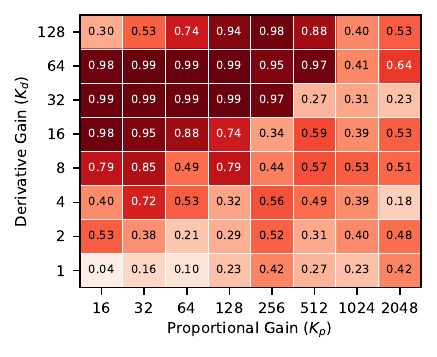}
    }
  \end{minipage}%
 
  \caption{\textbf{Behavior cloning performance across action representations.} Success rate per gain setting for the Block Stack task. The preference for compliant and overdamped gain settings is observed when predicting both absolute (a) and relative (b) joint position actions.}
  \label{fig:appendix-action}
  \vspace{-5pt}
\end{figure*}

\begin{figure*}[h]
  \raggedright
  
  \begin{minipage}[t]{0.33\textwidth}
    \vspace{0pt}
    \centering
    \subcaptionbox{\centering 10Hz Control Frequency\label{fig:10hz-appendix}}{%
      \includegraphics[width=\linewidth]{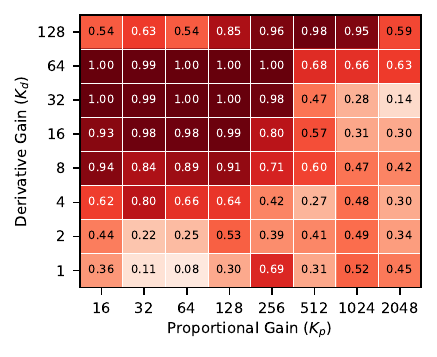}
    }
  \end{minipage}%
  \hspace{2em}%
  \begin{minipage}[t]{0.33\textwidth}
    \vspace{0pt}
    \centering
    \subcaptionbox{\centering 50Hz Control Frequency\label{fig:50hz-appendix}}{%
      \includegraphics[width=\linewidth]{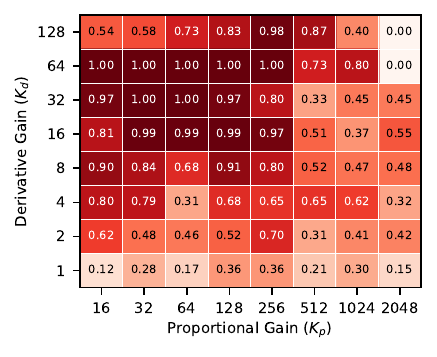}
    }
  \end{minipage}%
 
  \caption{\textbf{Behavior cloning performance across control frequencies.} Success rate per gain setting for the Block Stack task. The preference for compliant and overdamped gain settings is observed when predicting actions at 10Hz (a) and 50Hz (b).}
  \label{fig:appendix-frequency}
  \vspace{-5pt}
\end{figure*}

\captionsetup{font=small}
\captionsetup[subfigure]{font=small, skip=2pt}
\captionsetup[subfigure]{labelformat=parens}

\begin{figure*}[t]
\centering

\begin{minipage}[t]{0.50\textwidth}
  \vspace{0pt}
  \centering

  \subcaptionbox{Block Stacking with UR\label{fig:scaling-handover}}{%
    \includegraphics[width=\linewidth]{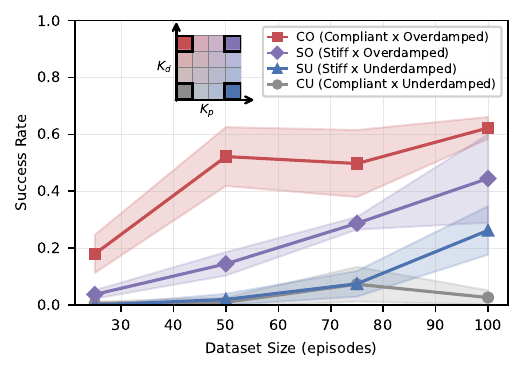}

  }
\end{minipage}
\hfill
\begin{minipage}[t]{0.48\textwidth}
  \vspace{0pt}
  \centering

  \subcaptionbox{Block Stacking with OSC\label{fig:scaling-dishrack-unload}}{%
    \includegraphics[width=0.48\linewidth]{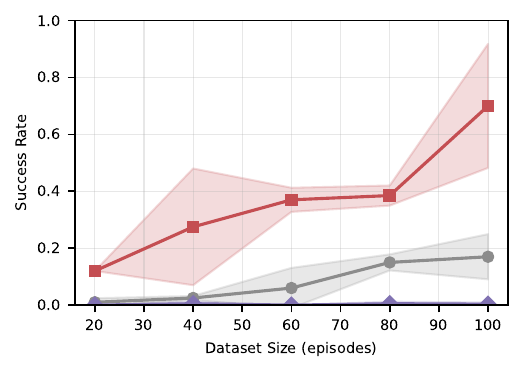}
  }\hfill
  \subcaptionbox{Dishrack Loading\label{fig:scaling-dishwasher}}{%
    \includegraphics[width=0.48\linewidth]{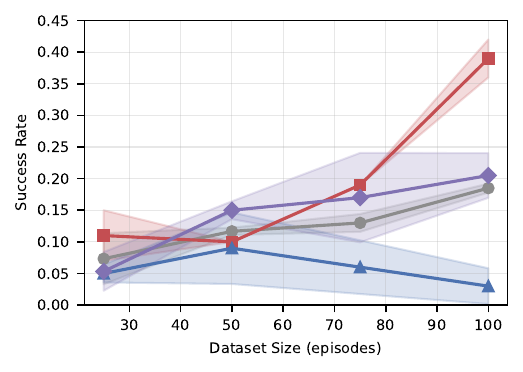}
  }

  \vspace{2mm}

  \subcaptionbox{Dishwasher Opening\label{fig:scaling-mug-hang}}{%
    \includegraphics[width=0.48\linewidth]{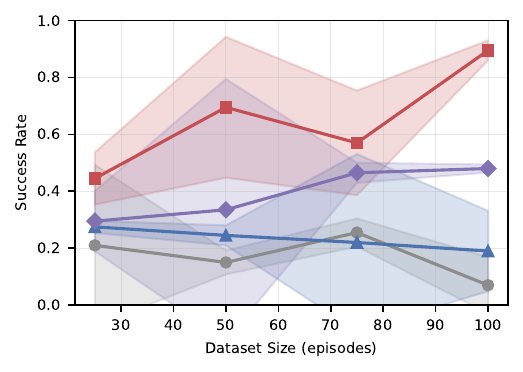}
  }\hfill
  \subcaptionbox{Mug Hanging\label{fig:scaling-ur-stack}}{%
    \includegraphics[width=0.48\linewidth]{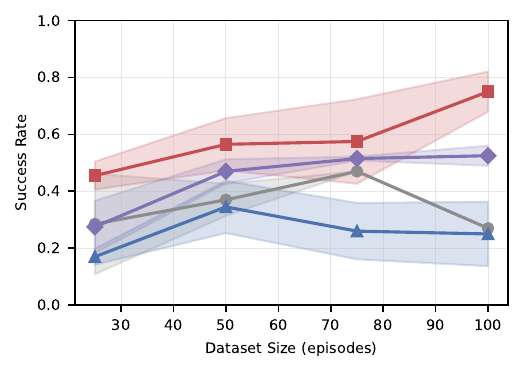}
  }
\end{minipage}

\caption{\textbf{Offline imitation learning scales more favorably under compliant and overdamped gains.}
Success rate as a function of dataset size across tasks and robot embodiments.
Policies trained with low stiffness and high damping achieve higher success with fewer demonstrations,
while stiff or weakly damped controllers exhibit poorer data scaling.}
\label{fig:bc-scaling}
\end{figure*}

\subsection{Sim-to-Real Analysis}

We compute the NN error as:
\begin{equation}
\label{eq:nn-error}
\mathcal{E}_{\text{NN}} = \text{RMS}\left( \pi_\theta(\mathbf{s}_t^{\text{real}}) - \pi_\theta(\mathbf{s}_t^{\text{sim}}) \right)
\end{equation}
This measures the trajectory-wise difference between the policy's outputs
in simulation and on the real robot, when the initial and goal
configurations are matched. The NN error for each of our reaching tasks
(Figure~\ref{fig:appendix-sim2real_nn}) is well correlated with the
trajectory error (Figure~\ref{fig:appendix-sim2real_trajectory}),
suggesting that the sim-to-real gap is primarily caused by the policy
receiving out-of-distribution states on the real robot, rather than
instability in the low-level controller.

\begin{figure*}[h]
  \centering
  
  \begin{minipage}[t]{0.33\textwidth}
    \vspace{0pt}
    \centering
    \subcaptionbox{\centering Joint-Reach Sim2Real trajectory error\\without Domain Randomization\label{fig:s2r-joint-appendix}}{%
      \includegraphics[width=\linewidth]{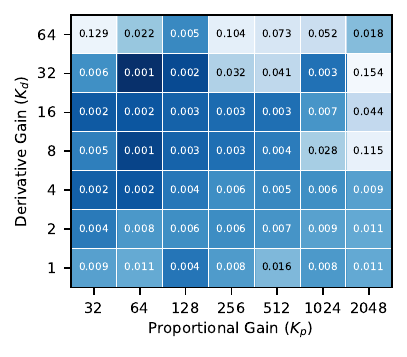}
    }
  \end{minipage}%
  \hfill%
  \begin{minipage}[t]{0.33\textwidth}
    \vspace{0pt}
    \centering
    \subcaptionbox{\centering Joint-Reach Sim2Real trajectory error\\with Domain Randomization\label{fig:s2r-dr-appendix}}{%
      \includegraphics[width=\linewidth]{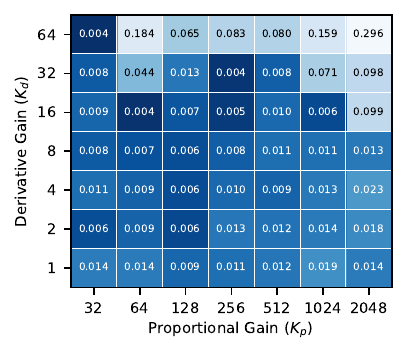}
    }
  \end{minipage}%
  \hfill%
  \begin{minipage}[t]{0.33\textwidth}
    \vspace{0pt}
    \centering
    \subcaptionbox{\centering EE-Reach Sim2Real trajectory error\\without Domain Randomization\label{fig:s2r-ee-appendix}}{%
      \includegraphics[width=\linewidth]{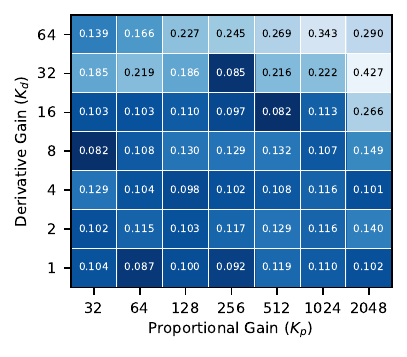}
    }
  \end{minipage}%
  \caption{\textbf{Stiff and overdamped gain settings reduce sim2real transferability.} The Sim2Real trajectory error (Eq.\ref{eq:traj_fidelity}) is consistently larger (light blue) in the stiff and overdamped regime (a-c).}
  \label{fig:appendix-sim2real_trajectory}
  \vspace{-5pt}
\end{figure*}

\begin{figure*}[h]
  \centering
  
  \begin{minipage}[t]{0.33\textwidth}
    \vspace{0pt}
    \centering
    \subcaptionbox{\centering Joint-Reach Sim2Real NN error\\without Domain Randomization\label{fig:s2r-joint-nn}}{%
      \includegraphics[width=\linewidth]{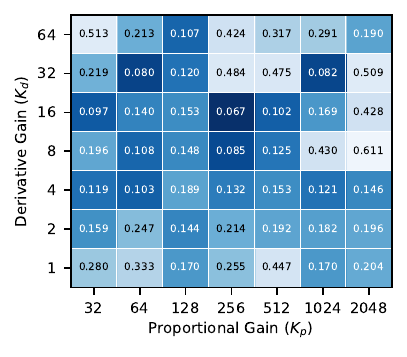}
    }
  \end{minipage}%
  \hfill%
  \begin{minipage}[t]{0.33\textwidth}
    \vspace{0pt}
    \centering
    \subcaptionbox{\centering Joint-Reach Sim2Real NN error\\with Domain Randomization\label{fig:s2r-dr-nn}}{%
      \includegraphics[width=\linewidth]{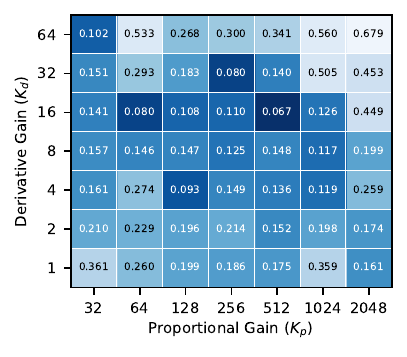}
    }
  \end{minipage}%
  \hfill%
  \begin{minipage}[t]{0.33\textwidth}
    \vspace{0pt}
    \centering
    \subcaptionbox{\centering EE-Reach Sim2Real NN error\\without Domain Randomization\label{fig:s2r-ee-nn}}{%
      \includegraphics[width=\linewidth]{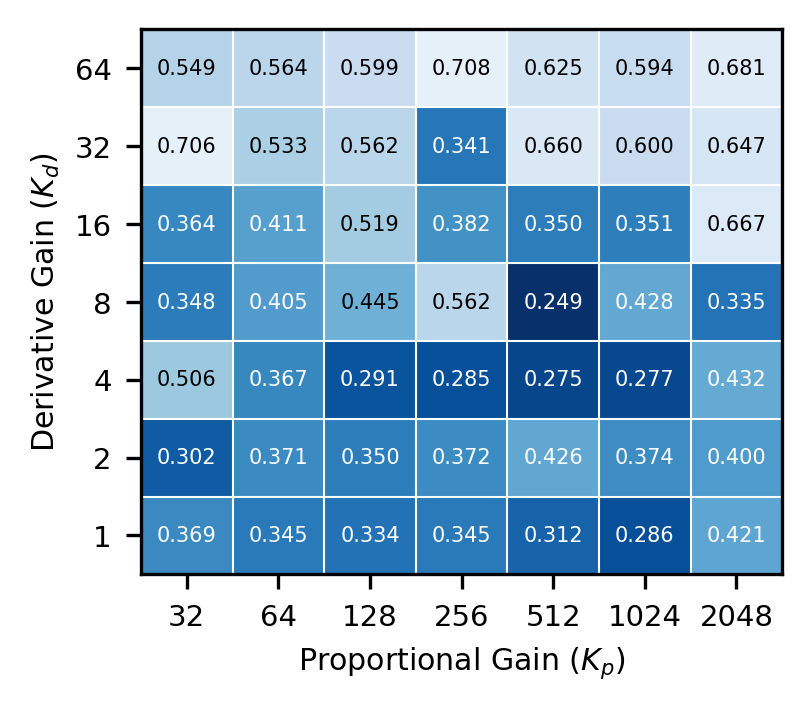}
    }
  \end{minipage}%
  \caption{\textbf{Stiff and overdamped gains increase Sim2Real NN error.} The Sim2Real NN error (Eq.\ref{eq:nn-error}) is consistently larger (light blue) in the stiff and overdamped regime (a-c).}
  \label{fig:appendix-sim2real_nn}
  \vspace{-5pt}
\end{figure*}

\subsubsection{Statistical Significance Analysis} \label{sec:appendix-sim2real-stats}

We provide a formal statistical analysis to verify that the stiff-overdamped gain region $\mathcal{G}^{\text{SO}}$ produces significantly larger sim-to-real trajectory error than its complement $\mathcal{G} \setminus \mathcal{G}^{\text{SO}}$ across all three sim-to-real conditions. For each gain cell, we compute the trajectory error (Eq.~\ref{eq:traj_fidelity}) averaged over 30 real-world rollouts.

\myparagraph{OLS Regression.} We fit ordinary least squares regression on log-transformed trajectory error with $\log_2 \mathbf{K}_\text{p}$ and $\log_2 \mathbf{K}_\text{d}$ as predictors. Across all conditions, both coefficients $\beta_{\mathbf{K}_\text{p}}$ and $\beta_{\mathbf{K}_\text{d}}$ are consistently positive (Table~\ref{tab:sim2real_statistical_analysis}), confirming that higher stiffness and higher damping are significant predictors of increased sim-to-real error.

\myparagraph{Mann-Whitney U Test.} We apply one-sided Mann-Whitney U tests with Bonferroni correction ($\alpha_{\text{adj}} \approx 0.017$, correcting for 3 conditions) under the null hypothesis:
\begin{equation}
\mathcal{H}_0 \colon \varepsilon(\mathcal{G}^{\text{SO}}) \leq \varepsilon(\mathcal{G} \setminus \mathcal{G}^{\text{SO}})
\end{equation}
As shown in Table~\ref{tab:sim2real_statistical_analysis}, $\mathcal{H}_0$ is rejected for every condition with $p \ll \alpha_{\text{adj}}$, providing strong evidence that the stiff-overdamped regime yields significantly larger sim-to-real transfer error.

\begin{table}[h]
\centering
\caption{
Statistical analysis of Sim2Real results. Mean trajectory error for stiff-overdamped ($\mathcal{G}^{\text{SO}}$) vs.\ other gain regions, OLS regression coefficients on $\log_2 \mathbf{K}_\text{p}$ and $\log_2 \mathbf{K}_\text{d}$, and Bonferroni-corrected one-sided Mann-Whitney U test $p$-values. $\mathcal{H}_0$ is rejected in all cases.
}
\vspace{-5pt}
\label{tab:sim2real_statistical_analysis}
\footnotesize
\setlength{\tabcolsep}{2pt}
\begin{tabular}{@{}l cc cc c@{}}
\toprule
& \multicolumn{2}{c}{\textbf{Mean Error}} & \multicolumn{2}{c}{\textbf{OLS Reg.}} & \textbf{Mann-Whitney} \\
\cmidrule(lr){2-3} \cmidrule(lr){4-5} \cmidrule(lr){6-6}
\textbf{Condition} & $\mathcal{G}^{\text{SO}}$ & $\mathcal{G} \setminus \mathcal{G}^{\text{SO}}$ & $\beta_{\mathbf{K}_{\text{p}}}$ & $\beta_{\mathbf{K}_{\text{d}}}$ & $p$-value \\
\midrule
Joint-Reach (no DR) & \textbf{0.043} & 0.010 & $+0.298$ & $+0.087$ & $1.9 \times 10^{-36}$ \\
Joint-Reach (with DR) & \textbf{0.060} & 0.018 & $+0.354$ & $+0.215$ & $1.7 \times 10^{-20}$ \\
EE-Reach (no DR) & \textbf{0.198} & 0.123 & $+0.063$ & $+0.127$ & $3.5 \times 10^{-36}$ \\
\bottomrule
\end{tabular}
\end{table}

\end{document}